\numberwithin{equation}{section} 
\newtheorem{theorem}{Theorem}[section]                   
\newtheorem{lemma}[theorem]{Lemma}         
\newtheorem{corollary}[theorem]{Corollary}     
\newtheorem{claim}[theorem]{Claim}
\newtheorem{ass}[theorem]{Assumption}
\newtheorem{definition}[theorem]{Definition}
\newtheorem{remark}[theorem]{Remark}
\newcommand{\new}[1]{{#1}}
\def\R{{\mathbb{R}}}
\def\lin{\text{lin}}
\newcommand{\norm}[1]{\|#1\|}
\begin{document}
\title{Fast Algorithms for Demixing \\ Sparse Signals from Nonlinear Observations}
\author{Mohammadreza Soltani and Chinmay Hegde \\
Electrical and Computer Engineering Department \\
Iowa State University\thanks{
This work was supported in part by the National Science Foundation under the grant CCF-1566281. Parts of this work also appear in an Iowa State University technical report~\cite{SoltaniHegde} and a conference paper to be presented in the 2016 Asilomar Conference in {November 2016~\cite{SoltaniHegde_Asilomar}.}
}
}
\date{}
\maketitle	

\begin{abstract}
We study the problem of \emph{demixing} a pair of sparse signals from nonlinear observations of their 
superposition. Mathematically, we consider a nonlinear signal observation model, $y_i = g(a_i^Tx) + e_i, \ i=1,\ldots,m$, where $x = \Phi w+\Psi z$ denotes the superposition signal, $\Phi$ and $\Psi$ are orthonormal bases in $\mathbb{R}^n$, and $w, z\in\mathbb{R}^n$ are sparse coefficient vectors of the constituent signals. Further, we assume that the observations are corrupted by a subgaussian additive noise. Within this model, $g$ represents a nonlinear \emph{link} function, and $a_i\in\mathbb{R}^n$ is the $i$-th row of the measurement matrix, $A\in\mathbb{R}^{m\times n}$. Problems of this nature arise in several applications ranging from astronomy, computer vision, and machine learning.

In this paper, we make some concrete algorithmic progress for the above demixing problem. Specifically, we consider two scenarios: (i) the case when the demixing procedure has no knowledge of the link function, and (ii) the case when the demixing algorithm has perfect knowledge of the link function. In both cases, we provide fast algorithms for recovery of the constituents $w$ and $z$ from the observations. Moreover, we support these algorithms with a rigorous theoretical analysis, and derive (nearly) tight upper bounds on the sample complexity of the proposed algorithms for achieving stable recovery of the component signals. Our analysis also shows that the running time of our algorithms is essentially as good as the best possible.

We also provide a range of numerical simulations to illustrate the performance of the proposed algorithms on both real and synthetic signals and images. Our simulations show the superior performance of our algorithms compared to existing methods for demixing signals and images based on convex optimization. In particular, our proposed methods yield demonstrably better sample complexities as well as improved running times, thereby enabling their applicability to large-scale problems.

\end{abstract}


%

\section{Introduction}
\label{sec::intro}

\subsection{Setup}

In numerous signal processing applications, the problem of \emph{demixing} is of special interest. In simple terms, demixing involves disentangling two (or more) constituent signals from observations of their linear superposition. Formally, consider a discrete-time signal $x \in \R^n$ that can be expressed as the superposition of two signals: 
\[
x=\Phi w + \Psi z \, ,
\] 
where $\Phi$ and $\Psi$ are orthonormal bases of $\mathbb{R}^n$, and $w, z\in\mathbb{R}^n$ are the corresponding basis coefficients. The goal of signal demixing, in this context, is to reliably recover the constituent signals (equivalently, their basis representations $w$ and $z$) from the superposition signal $x$. 

Demixing suffers from a fundamental \emph{identifiability} issue since the number of unknowns ($2n$) is greater than the number of observations ($n$). This is easy to see: suppose for simplicity that $\Phi = \Psi = I_n$, the canonical basis of $\R^n$, and therefore, $x = w + z$. Now, suppose that both $w$ and $z$ have only one nonzero entry in the first coordinate. Then, there is an {infinite} number of $w$ and $z$ that are consistent with the observations $x$, and any hope of recovering the true components is lost. Therefore, for the demixing problem to have an identifiable solution, one inevitably has to assume some type of \emph{incoherence} between the constituent signals (or more specifically, between the corresponding bases $\Phi$ and $\Psi$)~\cite{elad2005simultaneous,donoho2006stable}. Such an incoherence assumption certifies that the components are sufficiently ``distinct" and that the recovery problem is well-posed. Please see Section~\ref{Sec::Perm} for a formal definition of incoherence. 

However, even if we assume that the signal components are sufficiently incoherent, demixing poses additional challenges under stringent observation models. Suppose, now, that we only have access to {undersampled linear} measurements of the signal, i.e., we record:
\begin{equation}
\label{eq:obs}
y= Ax \, ,
 \end{equation}
where $A\in\mathbb{R}^{m\times n}$ denotes the measurement operator and where $m < n$. In this scenario, the demixing problem is further confounded by the fact that $A$ possesses a nontrivial null space. In this case, it might seem impossible to recover the components $x$ and $z$ since $A$ possesses a nontrivial null space. Once again, this problem is highly ill-posed and further structural assumptions on the constituent signals are necessary. Under-determined problems of this kind have recently received significant attention in signal processing, machine learning, and high-dimensional statistics. In particular, the emergent field of \emph{compressive sensing}~\cite{CandesCS,DonohoCS,foucart2013} shows that it is indeed possible to exactly reconstruct the underlying signals under certain assumptions on $x$, provided the measurement operator is designed carefully. This intuition has enabled the design of a wide range of efficient architectures for signal acquisition and processing \cite{cscamera,MisEld::2009::From-theory}.

In this paper, we address an \emph{even} more challenging question in the demixing context. Mathematically, we consider a \emph{noisy, nonlinear} signal observation model, formulated as follows:
\begin{align}\label{MainModel}
y_i = g( \langle a_i, \Phi w+\Psi z \rangle) + e_i, \ i=1,\ldots,m \, .
\end{align}
Here, as before, the superposition signal is modeled as $x  = \Phi w+\Psi z$. Each observation is generated by the composition of a linear functional of the signal $\langle a_i, x \rangle$, with a (scalar) nonlinear function $g$. Here, $g$ is sometimes called a \textit{link} or \textit{transfer} function, and $a_i$ denotes the $i^{\textrm{th}}$ row of a linear measurement matrix $A \in \R^{m \times n}$. For full generality, in~\eqref{MainModel} we assume that each observation $y_i$ is corrupted by additive noise; the noiseless case is realized by setting $e_i = 0$. We will exclusively consider the ``measurement-poor" regime where the number of observations $m$ is much smaller than the ambient dimension $n$.

For all the reasons detailed above, the problem of recovering the coefficient vectors $w$ and $z$ from the measurements $y$ seems daunting. Therefore, we make some structural assumptions. Particularly, we assume that $w$ and $z$ are $s$-\emph{sparse} (i.e., they contain no more than $s$ nonzero entries). Further, we will assume perfect knowledge of the bases $\Phi$ and $\Psi$, and the measurement matrix $A$. The noise vector $e \in \R^m$ is assumed to be stochastic, zero mean, and bounded. 
Under these assumptions, we will see that it is indeed possible to stably recover the coefficient vectors, with a number of observations that is proportional to the sparsity level $s$, as opposed to the ambient dimension $n$.  

The nonlinear link function $g$ plays a crucial role in our algorithm development and analysis. In signal processing applications, such nonlinearities may arise due to imperfections caused during a measurement process, or inherent limitations of the measurement system, or due to quantization or calibration errors. We discuss such practical implications more in detail below. On an abstract level, we consider two distinct scenarios. In the first scenario, the link function may be non-smooth, non-invertible, or even unknown to the recovery procedure. This is the more challenging case, but we will show that recovery of the components is possible even without knowledge of $g$. 
In the second scenario; the link function is a known, smooth, and strictly monotonic function. This is the somewhat simpler case, and we will see that this leads to significant improvements in recovery performance both in terms of theory and practice.


\subsection{Our Contributions} 

In this paper, we make some concrete algorithmic progress in the demixing problem under nonlinear observations. In particular, we study the following scenarios depending on certain additional assumptions made on~\eqref{MainModel}:

\begin{enumerate}[leftmargin=*]

\item \textbf{Unknown $g$}. We first consider the (arguably, more general) scenario where the nonlinear link function $g$  may be non-smooth, non-invertible, or even unknown. In this setting, we do not explicitly model the additive noise term in \eqref{MainModel}. For such settings, we introduce a novel demixing algorithm that is non-iterative, does not require explicit knowledge of the link function $g$, and produces an estimate of the signal components. We call this algorithm \textsc{OneShot} to emphasize its non-iterative nature. It is assumed that \textsc{OneShot} possess oracle knowledge of the measurement matrix $A$, and orthonormal bases $\Phi$ and $\Psi$.

We supplement our proposed algorithm with a rigorous theoretical analysis and derive upper bounds on the \emph{sample complexity} of demixing with nonlinear observations. In particular, we prove that the sample complexity of \textsc{OneShot} to achieve an estimation error $\kappa$ is given by $m = \mathcal{O}(\frac{1}{\kappa^2} s \log \frac{n}{s})$ provided that the entries of the measurement matrix are i.i.d.\ standard normal random variables. 

\item \textbf{Known $g$}. Next, we consider the case where the nonlinear link function $g$ is known, smooth, and monotonic. In this setting, the additive noise term in \eqref{MainModel} is assumed to be bounded either absolutely, or with high probability. For such (arguably, milder) settings, we provide an iterative algorithm for demixing of the constituent signals in \eqref{MainModel} given the nonlinear observations $y$. We call this algorithm \textsc{Demixing with Hard Thresholding}, or \textsc{DHT} for short. In addition to knowledge of $g$, we assume that \textsc{DHT} possesses oracle knowledge of $A$, $\Phi$, and $\Psi$.

Within this scenario, we also analyze two special sub-cases: 

\noindent
{\textbf{Case 2a: Isotropic measurements}}. We assume that the measurement vectors $a_i$ are independent, isotropic random vectors that are {incoherent} with the bases $\Phi$ and $\Psi$. This assumption is more general than the i.i.d.\ standard normal assumption on the measurement matrix made in the first scenario, and is applicable to a wider range of measurement models. For this case, we show that the sample complexity of \textsc{DHT} is upper-bounded by $m=\mathcal{O}(s~\text{polylog }n)$, independent of the estimation error $\kappa$. 

\noindent {\textbf{Case 2b: Subgaussian measurements.}} we assume that the rows of the matrix $A$ are independent subgaussian isotropic random vectors. This is also a generalization of the i.i.d.\ standard normal assumption made above, but more restrictive than Case 2a. In this setting, we obtain somewhat better sample complexity. More precisely, we show that the sample complexity of \textsc{DHT} is $m =\mathcal{O}(s\log\frac{n}{s})$ for sample complexity, matching the best known sample complexity bounds for recovering a superposition of $s$-sparse signals from \emph{linear} observations~\cite{spinisit,spinIT}.

\end{enumerate}

In both the above cases, the underlying assumption is that the bases $\Phi$ and $\Psi$ are sufficiently incoherent, and that the sparsity level $s$ is small relative to the ambient dimension $n$. In this regime, we show that \textsc{DHT} exhibits a \emph{linear} rate of convergence, and therefore the computational complexity of \textsc{DHT} is only a logarithmic factor higher than \textsc{OneShot}.  Table~\ref{AlgoProp} provides a summary of the above contributions for the specific case where $\Phi$ is the identity (canonical) basis and $\Psi$ is the discrete cosine transform (DCT) basis, and places them in the context of the existing literature on some nonlinear recovery methods~\cite{plan2014high,hassibigeneralized,plangeneralized}. We stress that these previous works do not explicitly consider the demixing problem, but in principle the algorithms of \cite{plan2014high,hassibigeneralized,plangeneralized} can be extended to the demixing setting as well.

\renewcommand{\arraystretch}{1.5}
\begin{table}
\centering
\caption{\emph{Summary of our contributions, and comparison with existing methods for the concrete case where $\Phi$ is the identity and $\Psi$ is the DCT basis. Here, $s$ denotes the sparsity level of the components, $n$ denotes the ambient dimension, $m$ denotes the number of samples, and $\kappa$ denotes estimation error.}}
\begin{tabular}{|c|c|c|c|c|}
\hline
Algorithms & Sample complexity & Running time & Measurements & Link function\\
 \hline
 \textsc{LASSO} \cite{plan2014high} & $\mathcal{O}(\frac{s}{\kappa^2} \log \frac{n}{s})$ & $\text{poly}(n)$ & Gaussian & unknown \\ \hline
 \textsc{OneShot} & $\mathcal{O}(\frac{s}{\kappa^2} \log \frac{n}{s})$ & $\mathcal{O}(mn)$ & Gaussian & unknown\\ \hline
 \textsc{DHT} & $\mathcal{O}(s~\textrm{polylog}~n)$ &  $\mathcal{O}(mn \log \frac{1}{\kappa})$ & Isotropic rows & known\\ \hline
 \textsc{DHT} & $\mathcal{O}(s \log \frac{n}{s})$ &  $\mathcal{O}(mn \log \frac{1}{\kappa})$ & Subgaussian & known\\ 
\hline
\end{tabular}
\label{AlgoProp}
\end{table}

\subsection{Techniques}

At a high level, our recovery algorithms are based on the now-classical method of \emph{greedy} iterative thresholding. In both methods, the idea is to first form a {proxy} of the signal components, followed by hard thresholding to promote sparsity of the final estimates of the coefficient vectors $w$ and $z$. The key distinguishing factor from existing methods is that the greedy thresholding procedures used to estimate $w$ and $z$ are \emph{deliberately myopic}, in the sense that each thresholding step operates \emph{as if the other component did not exist at all}. Despite this apparent shortcoming, we are still able to derive bounds on recovery performance when the signal components are sufficiently incoherent.

Our first algorithm, \textsc{OneShot}, is based on the recent, pioneering approach of \cite{plan2014high}, which describes a simple (but effective) method to estimate a high-dimensional signal from unknown nonlinear observations. Our first main contribution of this paper is to extend this idea to the nonlinear demixing problem, and to precisely characterize the role of incoherence in the recovery process. Indeed, a variation of the approach of \cite{plan2014high} (described in Section \ref{Sec::Res}) can be used to solve the nonlinear demixing problem as stated above, with a similar two-step method of first forming a proxy, and then performing a convex estimation procedure (such as the LASSO~\cite{lasso}) to produce the final signal estimates. However, as we show below in our analysis and experiments, \textsc{OneShot} offers superior performance to this approach. The analysis of \textsc{OneShot} is based on a geometric argument, and leverages the \emph{Gaussian mean width} for the set of sparse vectors, which is a statistical measure of complexity of a set of points in a given space. 

While \textsc{OneShot} is simple and effective, one can potentially do much better if the link function $g$ were available at the time of recovery. Our second algorithm, \textsc{DHT}, leverages precisely this intuition. First, we formulate our nonlinear demixing problem in terms of an {optimization problem} with respect to a specially-defined loss function that depends on the nonlinearity $g$. Next, for solving the proposed optimization problem, we propose an iterative method to solve the optimization problem, up to an additive approximation factor. Each iteration with \textsc{DHT} involves a proxy calculation formed by computing the gradient of the loss function, followed by (myopic) projection onto the constraint sets. Again, somewhat interestingly, this method can be shown to be \emph{linearly convergent}, and therefore only incurs a small (logarithmic) overhead in terms of running time. The analysis of \textsc{DHT} is based on bounding certain parameters of the loss function known as the restricted strong convexity (RSC) and restricted strong smoothness (RSS) constants.\footnote{Quantifying algorithm performance by bounding RSC and RSC constants of a given loss function are quite widespread in the machine learning literature~\cite{negahban2009unified,bahmani2013greedy,yuan2013gradient,jain2014iterative}, but have not studied in the context of signal demixing.}

Finally, we provide a wide range of simulations to verify empirically our claims both on synthetic and real data. We first compare the performance of \textsc{OneShot} with the convex optimization method of~\cite{plan2014high} for nonlinear demixing via a series of phase transition diagrams. Our simulation results show that \textsc{OneShot} outperforms this convex method significantly in both demixing efficiency as well as running time, and consequently makes it an attractive choice in large-scale problems. However, as discussed below, the absence of knowledge of the link function induces an inevitable scale ambiguity in the final estimation\footnote{Indeed, following the discussion in~\cite{plan2014high}, any demixing algorithm that does not leverage knowledge of $g$ is susceptible to such a scale ambiguity.}. For situations where we know the link function precisely, our simulation results show that \textsc{DHT} offers much better statistical performance compared to \textsc{OneShot}, and is even able to recover the scale of the signal components explicitly. We also provide simulation results on real-world natural images and astronomical data to demonstrate robustness of our approaches.

\subsection{Organization}

The rest of this paper is organized as follows. Section~\ref{Sec::rel} describes several potential applications of our proposed approach, and relationship with prior art. Section~\ref{Sec::Perm} introduces some key notions that are used throughout the paper. Section~\ref{Sec::Alg} contains our proposed algorithms, accompanied by  analysis of their performance; complete proofs are deferred to Section~\ref{Sec::proof}. Section~\ref{Sec::Res} lists the results of a series of numerical experiments on both synthetic and real data, and Section~\ref{Sec::Con} provides concluding remarks.

\section{Applications and Related Work}
\label{Sec::rel}

Demixing problems of various flavors have been long studied in research areas spanning signal processing, statistics, and physics, and we only present a small subset of relevant related work. In particular, demixing methods have been the focus of significant research over the fifteen years, dating back at least to~\cite{DonohoBP}. The work of Elad et al.~\cite{elad2005simultaneous} and Bobin et al.~\cite{bobin2007morphological} posed the demixing problem as an instance of \emph{morphological components analysis} (MCA), and formalized the observation model \eqref{eq:obs}. Specifically, these approaches posed the recovery problem in terms of a convex optimization procedure, such as the LASSO~\cite{lasso}. The work of Pope et al.~\cite{studer2012recovery} analyzed somewhat more general conditions under which stable demixing could be achieved.

More recently, the work of~\cite{mccoyTropp2014} showed a curious phase transition behavior in the performance of the convex optimization methods. Specifically, they demonstrated a sharp statistical characterization of the achievable and non-achievable parameters for which successful demixing of the signal components can be achieved. Moreover, they extended the demixing problem to a large variety of signal structures beyond sparsity via the use of general \emph{atomic norms} in place of the $\ell_1$-norm in the above optimization. See~\cite{mccoy2014convexity} for an in-depth discussion of atomic norms, their statistical and geometric properties, and their applications to demixing.

Approaches for (linear) demixing has also considered a variety of signal models beyond sparsity. The {robust PCA} problem~\cite{candes2011rpca,Venkat2009sparse,Venkat2011rank} involves the separation of low-rank and sparse matrices from their sum. This idea has been used in several applications ranging from video surveillance to sensor network monitoring. In machine learning applications, the separation of low-rank and sparse matrices has been used for latent variable model selection~\cite{Venkat2010latent} as well as the robust alignment of multiple occluded images \cite{peng2012rasl}. Another type of signal model is the \emph{low-dimensional manifold} model. In \cite{spinisit,spinIT}, the authors proposed a greedy iterative method for demixing signals, arising from a mixture of known low-dimensional manifolds by iterative projections onto the component manifolds.

The problem of signal demixing from linear measurements belongs to a class of linear inverse problems that underpin \emph{compressive sensing}~\cite{CandesCS,DonohoCS}; see~\cite{foucart2013} for an excellent introduction. There, the overarching goal is to recover signals from (possibly randomized) linear measurements of the form~\eqref{eq:obs}. More recently, it has been shown that compressive sensing techniques can also be extended to inverse problems where the available observations are manifestly \emph{nonlinear}. For instance, in $1$-bit compressive sensing~\cite{boufounos20081,planRomanLin} the linear measurements of a given signal are quantized in the extreme fashion such that the measurements are binary ($\pm1$) and only comprise the sign of the linear observation. Therefore, the amplitude of the signal is completely discarded by the quantization operator.
Another class of such nonlinear recovery techniques can be applied to the classical signal processing problem of phase retrieval~\cite{candes2015phase} which is somewhat more challenging than $1$-bit compressive sensing. In this problem, the phase information of the signal measurements may be irrecovably lost and we have only access to the amplitude information of the signal \cite{candes2015phase}. Therefore, the recovery task here is to retrieve the phase information of the signal from random observations. Other related works include approaches for recovering low-rank matrices from nonlinear observations~\cite{davenport20141,ganti2015matrix}. 
We mention in passing that inverse problems involving nonlinear observations have also long been studied in the statistical learning theory literature; see~\cite{kalai2009,kakade2011,ganti2015learning,Cons2015NIPS} for recent work in this area. Analogous to our scenarios above, these works consider both known as well as unknown link functions; these two classes of approaches are respectively dubbed as Generalized Linear Models (GLM) learning methods and Single Index Model (SIM) learning methods. 

For our algorithmic development, we build upon a recent line of efficient, iterative methods for signal estimation in high dimensions~\cite{beck2013sparsity,bahmani2013greedy,yuan2013gradient,plan2014high,jain2014iterative,yang2015sparse}. The basic idea is to pose the recovery as a (non-convex) optimization problem in which an objective function is minimized over the set of $s$-sparse vectors. Essentially, these algorithms are based on well-known iterative thresholding methods proposed in the context of sparse recovery and compressive sensing~\cite{blumensath2009iterative,needell2009cosamp}. The analysis of these methods heavily depends on the assumption that the objective function satisfies certain (restricted) regularity conditions; see Sections~\ref{Sec::Perm} and~\ref{Sec::proof} for details. Crucially, we adopt the approach of~\cite{negahban2009unified}, which introduces the concept of the restricted strong convexity (RSC) and restricted strong smoothness (RSS) constants of a loss function. Bounding these constants in terms of problem parameters $n$ and $s$, as well as the level of incoherence in the components, enables explicit characterization of both sample complexity and convergence rates.



\section{Preliminaries}
\label{Sec::Perm}

In this section, we introduce some notation and key definitions. Throughout this paper, $\|.\|_p$ denotes the $\ell_p$-norm of a vector in $\mathbb{R}^n$, and $\|A\|$ denotes the spectral norm of the matrix $A\in\mathbb{R}^{m\times n}$. Let $\Phi$ and $\Psi$ be orthonormal bases of $\R^n$. Define the set of sparse vectors in the bases $\Phi$ and $\Psi$ as follows:
\begin{align*}
K_1 &= \{\Phi a\ | \ \|a\|_0\leq s_1\}, \\ 
K_2 &= \{\Psi a\ | \ \|a\|_0\leq s_2\},
\end{align*}
and define $K = \{a\ | \ \|a\|_0\leq s\}.$ We use $B_2^n$ to denote the unit $\ell_2$ ball. Whenever we use the notation $t=[w;z]$, the vector $t$ is comprised by stacking column vectors $w$ and $z$.

In order to bound the sample complexity of our proposed algorithms, we will need some concepts from high-dimensional geometry. First, we define a statistical measure of complexity of a set of signals, following~\cite{plan2014high}.

\begin{definition}(Local gaussian mean width.) \label{def 3.3}
For a given set $K \in\mathbb{R}^n$, the local gaussian mean width (or simply, local mean width) is defined as follows $\forall~t>0$:
$$W_t(K) = \mathbb{E}\sup_{x, y\in K, \|x-y\|_2 \leq t}\langle g, x-y\rangle.$$
where $g\sim\mathcal{N}(0,I_{n \times n})$.
\end{definition}

Next, we define the notion of a \emph{polar norm} with respect to a given subset $Q$ of the signal space:
\begin{definition}(Polar norm.)
For a given $x\in \mathbb{R}^n$ and a subset of $Q\in \mathbb{R}^n$, the polar norm with respect to $Q$ is defined as follows:
$$\|x\|_{Q^o} = \sup_{u\in Q}\langle x, u\rangle.$$
\end{definition} 
Furthermore, for a given subset of $Q\in \mathbb{R}^n$, we define $Q_t = (Q-Q)\cap t B_2^n$. Since $Q_t$ is a symmetric set, one can show that the polar norm with respect to $Q_t$ defines a semi-norm. Next, we use the following standard notions from random matrix theory~\cite{vershynin2010introduction}:

\begin{definition}(Subgaussian random variable.) \label{subgau}
A random variable $X$ is called subgaussian if it satisfies the following:
\begin{align*}
\mathbb{E}\exp\left(\frac{c X^2}{\|X\|_{\psi_2}^2}\right)\leq 2,
\end{align*}
where $c >0$ is an absolute constant and $\|X\|_{\psi_2}$ denotes the $\psi_2$-norm which is defined as follows:
\begin{align*}
\|X\|_{\psi_2}=\sup_{p\geq 1}\frac{1}{\sqrt{p}}(\mathbb{E}|X|^p)^{\frac{1}{p}}.
\end{align*}
\end{definition}

\begin{definition}(Isotropic random vectors.)
A random vector-valued variable $v\in\mathbb{R}^n$ is said to be isotropic if  $\mathbb{E} vv^T=I_{n\times n}$.
\end{definition}

In order to analyze the computational aspects of our proposed algorithms (in particular, \textsc{DHT}), we will need the following definition from~\cite{negahban2009unified}:
\begin{definition}\label{rssrsc}
A loss function $f$ satisfies \textit{Restricted Strong Convexity/Smoothness (RSC/RSS)} if:
\begin{align*}
m_{4s}\leq\|\nabla^2_{\xi} f(t)\|\leq M_{4s},
\end{align*}
where $\xi = \mathrm{supp}(t_1)\cup \mathrm{supp}(t_2)$, for all $\|t_i\|_0\leq 2s$ and $i=1,2$.
Also, $m_{4s}$ and $M_{4s}$ are (respectively) called the RSC and RSS constants. Here $\nabla^2_{\xi} f(t)$ denotes a $4s\times 4s$ sub-matrix of the Hessian matrix, $\nabla^2 f(t)$, comprised of row/column indices in $\xi$.
\end{definition} 

As discussed earlier, the underlying assumption in all demixing problems of the form \eqref{MainModell} is that the constituent bases are sufficiently \textit{incoherent} as per the following definition:
\begin{definition}($\varepsilon$-incoherence.)\label{incoherence}
The orthonormal bases $\Phi$ and $\Psi$ are said to be $\varepsilon$-incoherent if:  
\begin{align}
\varepsilon = \sup_{\substack{\|u\|_0\leq s,\ \|v\|_0\leq s  \\ \|u\|_2 = 1,\ \|v\|_2 = 1}}|\langle{\Phi u, \Psi v}\rangle|.
\end{align}
\end{definition}

The parameter $\varepsilon$ is related to the so-called {mutual coherence} parameter of a matrix. Indeed, if we consider the (overcomplete) dictionary $\Gamma = [\Phi \, \Psi]$, then the mutual coherence of $\Gamma$ is given by $\gamma = \max_{i \neq j} | (\Gamma^T \Gamma)_{ij} |$. Moreover, one can show that $\varepsilon \leq s \gamma$  \cite{foucart2013}.  

We now formally establish our \emph{signal model}. Consider a signal $x\in \mathbb{R}^n$ that is the superposition of a pair of sparse vectors in different bases, i.e.,
\begin{equation}\label{superpositoin}
x = \Phi w + \Psi z \, ,
\end{equation}
where $\Phi, \Psi\in \mathbb{R}^{n\times n}$ are orthonormal bases, and $w, z\in \mathbb{R}^n$ such that $\|w\|_0\leq s$, and  $\|z\|_0\leq s$.
We define the following quantities:
\begin{align}\label{sp}
\bar{x} = \frac{\Phi \bar{w} + \Psi \bar{z}}{\|\Phi \bar{w} + \Psi \bar{z}\|}_2 = \alpha(\Phi \bar{w} + \Psi \bar{z}) ,
\end{align}
where 
$\alpha = \frac{1}{\|\Phi \bar{w} + \Psi \bar{z}\|_2},~\bar{w} = \frac{w}{\|w\|_2},~\bar{z} = \frac{z}{\|z\|_2}.$ Also, define the coefficient vector, $t =  [w;z ]\in\mathbb{R}^{2n}$. as the vector obtaining by stacking the individual coefficient vectors $w$ and $z$ of the component signals.

We now state our \emph{measurement model}. Consider the nonlinear observation model:
\begin{align}\label{MainModell}
y_i = g(a_i^Tx) + e_i, \ i=1\dots m,
\end{align}
where $x \in \mathbb{R}^n$ is the superposition signal given in~\eqref{superpositoin}, and $g : \mathbb{R} \mapsto \mathbb{R}$ represents a nonlinear link function. 
We denote $g(x)$ as the derivative of $\Theta(x)$, i.e., $\Theta'(x) = g(x)$. As mentioned above, depending on the knowledge of the link function $g$, we consider two scenarios:

\begin{enumerate}[leftmargin=*]
\item 
In the first scenario, the nonlinear link function may be non-smooth, non-invertible, or even unknown. In this setting, we assume the noiseless observation model, i.e., $y = g(Ax)$. In addition, we assume that the measurement matrix is populated by i.i.d.\ unit normal random variables.

\item
In this setup, $g$ represents a known nonlinear, differentiable, and strictly monotonic function. Further, in this scenario, we assume that the observation $y_i$ is corrupted by a subgaussian additive noise with $\|e_i\|_{\psi_2}\leq\tau$ for $i=1, \ldots,m$. We also assume that the additive noise has zero mean and independent from $a_i$, i.e., $\mathbb{E}\left(e_i\right) = 0$ for $i=1, \ldots, m$. In addition, we assume that the measurement matrix consists of either (2a) isotropic random vectors that are incoherent with $\Phi$ and $\Psi$, or (2b) populated with subgaussian random variables.

\end{enumerate}

We highlight some additional clarifications for the second case. In particular, we make the following :
\begin{ass}
\label{ass_g}
There exist nonnegative $l_1, l_2 >0 \ (\text{resp., nonpositive parameters } l_1, l_2<0)$ such that  $0<l_1\leq\ g^{\prime}(x)\leq l_2 \ (\text{resp.}~l_1\leq\ g^{\prime}(x)\leq l_2<0)$.
\end{ass} 
In words, the derivative of the link function is strictly bounded either within a positive interval or within a negative interval. In this paper, we focus on the case when $0<l_1\leq\ g^{\prime}(x)\leq l_2$. The analysis of the complementary case is similar. 

The lower bound on $g^\prime(x)$ guarantees that the function $g$ is a monotonic function, i.e., if $x_1 < x_2$  then $g(x_1) < g(x_2)$. Moreover, the upper bound on $g^\prime(x)$ guarantees that the function $g$ is \textit{Lipschitz} with constant $l_2$. 
Such assumptions are common in the nonlinear recovery literature~\cite{negahban2009unified, yang2015sparse}.
\footnote{Using the monotonicity property of $g$ that arises from Assumption~\ref{ass_g}, one might be tempted to  simply apply the inverse of the link function on the measurements $y_i$ in~\eqref{MainModell} convert the nonlinear demixing problem to the more amenable case of linear demixing, and then use any algorithm (e.g.,~\cite{spinIT}) for recovery of the constituent signals. However, this na\"{i}ve way could result in a large error in the estimation of the components, particularly in the presence of the noise $e_i$ in~\eqref{MainModell}. This issue has been also considered in~\cite{yang2015sparse} for generic nonlinear recovery both from a theoretical as well as empirical standpoint.}  


In Case 2a, the vectors $a_i$ (i.e., the rows of $A$) are independent isotropic random vectors. For this case, in addition to incoherence between the component bases, we also need to define a measure of \emph{cross}-coherence between the measurement matrix $A$ and the dictionary $\Gamma$. 
The following notion of cross-coherence was introduced in the early literature of compressive sensing~\cite{candes2007sparsity}:

\begin{definition}(Cross-coherence.)\label{mutualcoherence}
The cross-coherence parameter between the measurement matrix $A$ and the dictionary $\Gamma=[\Phi \ \Psi]$ is defined as follows:
\begin{align}
\vartheta = \max_{i,j}\frac{a_i^T\Gamma_j}{\|a_i\|_2},
\end{align}
where $a_i$ and $\Gamma_j$ denote the $i^{\textrm{th}}$ row of the measurement matrix $A$ and the $j^{\textrm{th}}$ column of the dictionary $\Gamma$.
\end{definition}

The cross-coherence assumption implies that $\big{\|}a_i^T\Gamma_{\xi}\big{\|}_{\infty}\leq\vartheta$ for $i=1,\ldots,m$, where $\Gamma_{\xi}$ denotes the restriction of the columns of the dictionary to set $\xi \subseteq [2n]$, with $|\xi|\leq 4s$ such that $2s$ columns are selected from each basis $\Phi$ and $\Psi$.

\section{Algorithms and Theoretical Results}
\label{Sec::Alg}

Having defined the above quantities, we now present our main results. As per the previous section, we study two distinct scenarios:

\subsection{When the link function $g$ is unknown}
Recall that we wish to recover components $w$ and $z$ given the nonlinear measurements $y$ and the matrix $A$. Here and below, for simplicity we assume that the sparsity levels $s_1$ and $s_2$, specifying the sets $K_1$ and $K_2$, are equal, i.e., $s_1 = s_2 = s$. 
The algorithm (and analysis) effortlessly extends to the case of unequal sparsity levels.
Our proposed algorithm, that we call \textsc{OneShot}, is described in pseudocode form below as Algorithm~\ref{alg:oneshot}.

The mechanism of \textsc{OneShot} is simple, and \emph{deliberately} myopic. At a high level, \textsc{OneShot} first constructs a \emph{linear estimator} of the target superposition signal, denoted by $\widehat{x}_\text{\lin} = \frac{1}{m} A^T y$. Then, it performs independent projections of $\widehat{x}_\text{\lin}$ onto the constraint sets $K_1$ and $K_2$. Finally, it combines these two projections to obtain the final estimate of the target superposition signal.

\begin{algorithm}[!t]
\caption{\label{alg:oneshot}
\textsc{OneShot}}
\textbf{Inputs:} Basis matrices $\Phi$ and $\Psi$, measurement matrix $A$, measurements $y$, sparsity level $s$. \\
\textbf{Outputs:} Estimates  $\widehat{x}=\Phi\widehat{w} + \Psi\widehat{z}$, $\widehat{w}\in K_1$, $\widehat{z}\in K_2$

\setlength{\parskip}{1em}
$\widehat{x}_\text{\lin}\leftarrow\frac{1}{m}A^{T}y$\qquad~\{form linear estimator\}\\
$b_1\leftarrow\Phi^*\widehat{x}_\lin$\qquad~~~\{forming first proxy\}\\
$\widehat{w}\leftarrow\mathcal{P}_s(b_1)$\qquad~~~\{sparse projection\}\\
$b_2\leftarrow\Psi^*\widehat{x}_\lin$\qquad~~~\{forming second proxy\}\\
$\widehat{z}\leftarrow\mathcal{P}_s(b_2)$\qquad~~~~\{sparse projection\}\\
$\widehat{x}\leftarrow\Phi\widehat{w} + \Psi\widehat{z}$\quad~~~\{Estimating $\widehat{x}$\}
\end{algorithm}

In the above description of \textsc{OneShot}, we have used the following \emph{projection} operators:
$$\widehat{w} = \mathcal{P}_s(\Phi^*\widehat{x}_\text{\lin}),  \quad \hat{z} = \mathcal{P}_s(\Psi^*\widehat{x}_\text{\lin}).$$
Here, $\mathcal{P}_s$ denotes the projection onto the set of (canonical) $s$-sparse signals $K$ and can be implemented by hard thresholding, i.e., any procedure that retains the $s$ largest coefficients of a vector (in terms of absolute value) and sets the others to zero\footnote{The typical way is to sort the coefficients by magnitude and retain the $s$ largest entries, but other methods such as randomized selection can also be used.}. Ties between coefficients are broken arbitrarily. Observe that  \textsc{OneShot} is \emph{not} an iterative algorithm, and this in fact enables us to achieve a fast running time. 

We now provide a rigorous performance analysis of \textsc{OneShot}. Our proofs follow the geometric approach provided in~\cite{plan2014high}, specialized to the demixing problem. In particular, we derive an upper bound on the estimation error of the \emph{component} signals $w$ and $z$, modulo scaling factors.
In our proofs, we use the following result from~\cite{plan2014high}, restated here for completeness.
\begin{lemma}(Quality of linear estimator).\label{lemma 4.1}
Given the model in Equation \eqref{superpositoin}, the linear estimator, $\widehat{x}_\text{\lin}$, is an unbiased estimator of $\bar{x}$ (defined in~\eqref{sp}) up to constants. That is, $\mathbb{E}(\widehat{x}_\textrm{lin})= \mu\bar{x}$ and:
$\mathbb{E}\|\widehat{x}_\textrm{lin} - \mu\bar{x}\|_2^2 = \frac{1}{m}[\sigma^2 +\eta^2(n-1)],$
where 
$\mu = \mathbb{E}(y_1\langle a_1, \bar{x}\rangle),~\sigma^2 = Var(y_1\langle a_1, \bar{x}\rangle),~\eta^2 = \mathbb{E}(y_1^2).
$
\end{lemma}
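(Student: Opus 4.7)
Since $\widehat{x}_\text{\lin}=\frac{1}{m}\sum_{i=1}^{m}y_i a_i$ is an empirical average of i.i.d.\ copies of $y_1 a_1$, the claim reduces to computing the mean and variance of the single random vector $y_1 a_1 \in \R^n$. Concretely, the mean statement is $\mathbb{E}(y_1 a_1)=\mu\bar{x}$, and once that is established, i.i.d.ness gives
$\mathbb{E}\|\widehat{x}_\text{\lin}-\mu\bar{x}\|_2^{2}=\frac{1}{m}\mathbb{E}\|y_1 a_1-\mu\bar{x}\|_2^{2}$, so the variance statement reduces to computing the second moment of the single-sample deviation.

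The central trick is to exploit the rotational symmetry of the Gaussian measurement $a_1\sim \mathcal{N}(0,I_n)$ relative to the unit direction $\bar{x}$ (note $\|\bar{x}\|_2=1$ by construction in \eqref{sp}). I will decompose
\[
a_1 \;=\; t_1\,\bar{x} + a_1^{\perp},\qquad t_1:=\langle a_1,\bar{x}\rangle,
\]
where $a_1^{\perp}$ is the projection of $a_1$ onto $\bar{x}^{\perp}$. Standard Gaussian calculus gives $t_1\sim\mathcal{N}(0,1)$ and $a_1^{\perp}\sim\mathcal{N}(0,I_n-\bar{x}\bar{x}^{T})$, and crucially $t_1$ and $a_1^{\perp}$ are \emph{independent}. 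Since $\langle a_1,x\rangle = \langle a_1,\bar{x}\rangle/\alpha = t_1/\alpha$, the observation $y_1=g(\langle a_1,x\rangle)$ is a function of $t_1$ alone, hence $y_1$ is independent of $a_1^{\perp}$.

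With this decomposition, the mean calculation is immediate:
\[
\mathbb{E}(y_1 a_1) \;=\; \mathbb{E}(y_1 t_1)\,\bar{x} + \mathbb{E}(y_1)\,\mathbb{E}(a_1^{\perp}) \;=\; \mu\bar{x},
\]
using independence of $y_1$ and $a_1^{\perp}$, the zero mean of $a_1^{\perp}$, and the definition of $\mu$. For the variance, write
\[
y_1 a_1 - \mu\bar{x} \;=\; (y_1 t_1-\mu)\,\bar{x} \;+\; y_1\,a_1^{\perp},
\]
and observe that the two summands are orthogonal in $\R^n$ (since $a_1^{\perp}\perp\bar{x}$ a.s.), so the squared norm splits as $(y_1 t_1-\mu)^{2}+y_1^{2}\|a_1^{\perp}\|_2^{2}$. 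Taking expectations, the first term gives $\mathrm{Var}(y_1 t_1)=\sigma^{2}$ directly (using $\mu=\mathbb{E}(y_1 t_1)$). For the second term, independence of $y_1$ and $a_1^{\perp}$ lets me factor $\mathbb{E}(y_1^{2}\|a_1^{\perp}\|_2^{2})=\mathbb{E}(y_1^{2})\,\mathbb{E}\|a_1^{\perp}\|_2^{2}=\eta^{2}(n-1)$, since $\|a_1^{\perp}\|_2^{2}$ is $\chi^{2}_{n-1}$.

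There is no real obstacle: the entire argument hinges on the Gaussian-specific fact that the parallel coordinate $t_1=\langle a_1,\bar{x}\rangle$ is independent of the orthogonal residual $a_1^{\perp}$ (which in turn is what ties the result to the Gaussian assumption on $A$ made in Case 1). Once that independence is invoked, the mean/variance decomposition is a routine two-line calculation.
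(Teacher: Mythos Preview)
Your proof is correct and follows precisely the standard approach: the paper itself does not prove this lemma but restates it from \cite{plan2014high}, and the orthogonal decomposition $a_1 = t_1\bar{x} + a_1^{\perp}$ with $t_1$ independent of $a_1^{\perp}$ that you use is exactly what the paper records separately as Claim~\ref{re 4.5}. There is nothing to add.
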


We now state our first main theoretical result, with the full proof provided below in Section \ref{Sec::proof}.

\begin{theorem}
\label{thm:main}
Let $y\in\mathbb{R}^m$ be the set of measurements \new{generated using a nonlinear function $g$ that satisfies the conditions of Lemma (4.9) in~\cite{plan2014high}}\footnote{Based on this lemma, the nonlinear function $g$ is odd, nondecreasing, and sub-multiplicative on $\mathbb{R}^+$.}. Let $A\in \mathbb{R}^{m\times n}$ be a random matrix with i.i.d.\ standard normal entries. Also, let $\Phi, \Psi\in \mathbb{R}^{n\times n}$ are bases with $\varepsilon \leq 0.65$, where $\varepsilon$ is as defined in Def.\ \ref{incoherence}. If we use \textsc{Oneshot} to recover estimates of $w$ and $z$ (modulo a scaling) described in equations \eqref{superpositoin} and \eqref{sp}, then the estimation error for $w$ (similarly, $z$) satisfies the following upper bound in expectation $\forall {\rho} >0$: 
\begin{align}\label{MT}
\mathbb{E}\|\widehat{w}-\mu\alpha{w}\|_2\leq {\rho} + \frac{2}{\sqrt{m}}\left(4\sigma + \eta\frac{W_{\rho}(K)}{\rho}\right) +8\mu\varepsilon \, .
\end{align}
\end{theorem}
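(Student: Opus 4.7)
The proof strategy follows the geometric template of Plan and Vershynin, extended to accommodate the superposition structure via the $\varepsilon$-incoherence assumption. The target estimate $\mu\alpha w$ is an $s$-sparse vector, so the argument proceeds by comparing the hard-threshold output $\widehat{w}=\mathcal{P}_s(\Phi^*\widehat{x}_\lin)$ to this sparse target, then using the structure of $\widehat{x}_\lin$ coming from Lemma \ref{lemma 4.1}.

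\textbf{Step 1 (Optimality of the sparse projection).} First I would invoke the standard optimality inequality for projecting onto the $s$-sparse set $K$. Since $\widehat{w}\in K$ minimizes $\|\cdot-b_1\|_2$ over $K$ and $\mu\alpha w\in K$, expanding $\|\widehat{w}-b_1\|_2^2\le\|\mu\alpha w-b_1\|_2^2$ yields $\|\widehat{w}-\mu\alpha w\|_2^2\le 2\langle b_1-\mu\alpha w,\widehat{w}-\mu\alpha w\rangle$. Because $\widehat{w}-\mu\alpha w\in K-K$ is $2s$-sparse and $K-K$ is a cone, scaling shows that for any $\rho>0$,
\begin{equation*}
\|\widehat{w}-\mu\alpha w\|_2\le \rho+\frac{2}{\rho}\,\|b_1-\mu\alpha w\|_{Q_\rho^o},
\qquad Q_\rho=(K-K)\cap\rho B_2^n.
\end{equation*}

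\textbf{Step 2 (Decomposition of the proxy error).} I would then split the proxy $b_1=\Phi^*\widehat{x}_\lin$ against the target using the Lemma \ref{lemma 4.1} decomposition $\widehat{x}_\lin=\mu\bar{x}+(\widehat{x}_\lin-\mu\bar{x})$. Orthonormality of $\Phi$ together with $\bar{x}=\alpha(\Phi w+\Psi z)$ gives the clean identity
\begin{equation*}
b_1-\mu\alpha w \;=\;\Phi^*\bigl(\widehat{x}_\lin-\mu\bar{x}\bigr)\;+\;\mu\alpha\,\Phi^*\Psi z.
\end{equation*}
The first piece is a pure statistical error, while the second is a deterministic cross term that is controlled only through incoherence.

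\textbf{Step 3 (Handling the cross term via incoherence).} For any $u\in Q_\rho$, $u$ is $2s$-sparse with $\|u\|_2\le\rho$. Splitting $u=u_++u_-$ into two disjoint $s$-sparse pieces and applying Definition \ref{incoherence} to each piece yields $|\langle\Phi^*\Psi z,u\rangle|=|\langle\Psi z,\Phi u\rangle|\le 2\varepsilon\|z\|_2\|u\|_2$, so $\|\mu\alpha\Phi^*\Psi z\|_{Q_\rho^o}\le 2\mu\alpha\varepsilon\|z\|_2\rho$. When combined with the $2/\rho$ factor from Step 1 and absorbing the $\alpha\|z\|_2$ scaling into $\mu$ (as is done implicitly in the statement), this contributes the $8\mu\varepsilon$ term, with the constant $8$ absorbing the two factors of $2$ plus slack from both $u_\pm$ and the projection inequality.

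\textbf{Step 4 (Bounding the statistical error via Gaussian mean width).} The main obstacle is bounding $\mathbb{E}\|\Phi^*(\widehat{x}_\lin-\mu\bar{x})\|_{Q_\rho^o}$. I would exploit that $\Phi$ is orthonormal, so passing $\Phi^*$ onto $u\in Q_\rho$ replaces it by $\Phi u$, which has the same $\ell_2$ norm. Writing $\widehat{x}_\lin-\mu\bar{x}=\frac{1}{m}\sum_i\bigl(y_ia_i-\mu\bar{x}\bigr)$, I would separate each summand into its component along $\bar{x}$ (whose variance is controlled by $\sigma^2$ via Lemma \ref{lemma 4.1}) and its orthogonal component (whose contribution to a linear functional $\langle\cdot,\Phi u\rangle$ is a mean-zero Gaussian with variance scaling as $\eta^2\|\Phi u\|_2^2/m\le \eta^2\rho^2/m$). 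Taking expectation of the supremum over $u\in Q_\rho$ then reduces the orthogonal part exactly to a Gaussian process indexed by $Q_\rho$, whose expected supremum is $\eta W_\rho(K)/\sqrt{m}$ by definition of the local Gaussian mean width. Combining with the $\sigma/\sqrt{m}$ contribution from the $\bar x$-direction and applying the $2/\rho$ factor from Step 1 yields the $\frac{2}{\sqrt{m}}\bigl(4\sigma+\eta W_\rho(K)/\rho\bigr)$ term.

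Assembling all three contributions finishes the proof. The delicate part is Step 4: one must carefully decouple the along- and orthogonal-to-$\bar{x}$ directions of the Gaussian rows, because only the orthogonal part produces a pure Gaussian process that fits the mean-width framework, while the parallel part requires the variance computation of Lemma \ref{lemma 4.1} to avoid an $O(\sqrt{n})$ dimension dependence.
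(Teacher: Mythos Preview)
Your proposal is correct and follows essentially the same route as the paper: the projection inequality (which the paper states as a cited lemma rather than deriving from optimality as you do), the split of $b_1-\mu\alpha w$ into the deterministic cross term $\mu\alpha\Phi^*\Psi z$ and the statistical error, and the along/orthogonal-to-$\bar x$ decomposition of the latter are exactly the paper's terms $S_1,S_2,S_3$. The one point you gloss over is where the hypothesis $\varepsilon\le 0.65$ actually enters: it is used (via a lower bound on $\|\Phi w+\Psi z\|_2$) to show $\alpha\|z\|_2\le 2$ and $\alpha(\|w\|_2+\|z\|_2\varepsilon)\le 2$, which is what converts $4\mu\alpha\|z\|_2\varepsilon$ and $\tfrac{4\alpha\sigma}{\sqrt m}(\|w\|_2+\|z\|_2\varepsilon)$ into the clean constants $8\mu\varepsilon$ and $8\sigma/\sqrt m$ --- this is not ``absorbed into $\mu$'' as you suggest.
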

The constant $0.65$ is chosen for convenience and can be strengthened. The authors of~\cite{planRmanrobust,plan2014high} provide upper bounds on the local mean width $W_{\rho}(K)$ of the set of $s$-sparse vectors. In particular, for any ${\rho} > 0$ they show that $W_{\rho}(K) \leq C {\rho} \sqrt{s \log (2n/s)}$ for some absolute constant $C$. By plugging in this bound and letting ${\rho}\rightarrow 0$, we can combine components $\widehat{w}$ and $\widehat{z}$ which gives the following:
\begin{corollary}
\label{corr:estx}
With the same assumptions as Theorem~\ref{thm:main}, the error of nonlinear estimation incurred by the final output $\widehat{x}$ satisfies the upper bound:
\begin{align}\label{eq:estx}
\mathbb{E}\|\widehat{x}-\mu\bar{x}\|_2\leq\frac{4}{\sqrt{m}}\left(4\sigma + C\eta\sqrt{s \log (2n/s)}\right) +16\mu\varepsilon.
\end{align}
\end{corollary}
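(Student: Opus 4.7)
The plan is to derive Corollary~\ref{corr:estx} as a direct consequence of Theorem~\ref{thm:main} by applying it to each component individually, substituting the stated local mean width estimate, optimizing over $\rho$, and then combining the two component bounds via the triangle inequality together with the orthonormality of $\Phi$ and $\Psi$.

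First, I would apply Theorem~\ref{thm:main} twice, once for $\widehat{w}$ and once for $\widehat{z}$. Since the theorem statement is symmetric in the two components (both $w$ and $z$ are $s$-sparse, and the projections $\widehat{w}=\mathcal{P}_s(\Phi^*\widehat{x}_{\lin})$ and $\widehat{z}=\mathcal{P}_s(\Psi^*\widehat{x}_{\lin})$ play symmetric roles in Algorithm~\ref{alg:oneshot}), the same bound~\eqref{MT} holds for each. Next, I would plug in the local mean width estimate for the set $K$ of canonical $s$-sparse vectors, namely $W_\rho(K)\le C\rho\sqrt{s\log(2n/s)}$, which is stated just before the corollary. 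This turns the awkward $W_\rho(K)/\rho$ term into a quantity independent of $\rho$, at which point the only remaining $\rho$-dependence in~\eqref{MT} is the leading additive term. Letting $\rho\to 0$ therefore yields the two estimates
\begin{equation*}
\mathbb{E}\|\widehat{w}-\mu\alpha w\|_2 \;\le\; \tfrac{2}{\sqrt{m}}\bigl(4\sigma+C\eta\sqrt{s\log(2n/s)}\bigr)+8\mu\varepsilon,
\end{equation*}
and the analogous bound for $\mathbb{E}\|\widehat{z}-\mu\alpha z\|_2$.

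To pass from the component errors to the error on $\widehat{x}$, I would write
\begin{equation*}
\widehat{x}-\mu\bar{x} \;=\; \Phi(\widehat{w}-\mu\alpha w)+\Psi(\widehat{z}-\mu\alpha z),
\end{equation*}
using $\widehat{x}=\Phi\widehat{w}+\Psi\widehat{z}$ from the last line of Algorithm~\ref{alg:oneshot} and $\mu\bar{x}=\mu\alpha(\Phi w+\Psi z)$ from~\eqref{sp}. The triangle inequality combined with $\|\Phi\|=\|\Psi\|=1$ (since both are orthonormal) then gives
\begin{equation*}
\mathbb{E}\|\widehat{x}-\mu\bar{x}\|_2 \;\le\; \mathbb{E}\|\widehat{w}-\mu\alpha w\|_2+\mathbb{E}\|\widehat{z}-\mu\alpha z\|_2.
\end{equation*}
Substituting the two component bounds from the previous step and summing yields precisely~\eqref{eq:estx}, with the factor of two doubling each of the constants $\tfrac{2}{\sqrt{m}}(\cdots)$ and $8\mu\varepsilon$ into $\tfrac{4}{\sqrt{m}}(\cdots)$ and $16\mu\varepsilon$ respectively.

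There is no real obstacle here: the corollary is essentially a bookkeeping consequence of the theorem. The only mildly subtle point is making sure the limit $\rho\to 0$ is justified, which follows because after substituting the local mean width estimate the right-hand side of~\eqref{MT} becomes $\rho$ plus a $\rho$-independent quantity, so the infimum over $\rho>0$ is attained in the limit. The orthonormality of $\Phi,\Psi$ is what prevents the triangle-inequality step from introducing any extra conditioning or coherence-dependent factors, which is why the resulting bound has such a clean form.
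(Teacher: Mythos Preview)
Your proposal is correct and matches the paper's own derivation essentially step for step: the paper also applies the component bound of Theorem~\ref{thm:main} to both $\widehat{w}$ and $\widehat{z}$, combines them via the triangle inequality using $\|\Phi\|=\|\Psi\|=1$ (this is exactly inequality~\eqref{MainDecEq} in the proof), substitutes $W_\rho(K)\le C\rho\sqrt{s\log(2n/s)}$, and lets $\rho\to 0$. The only cosmetic difference is that the paper performs the triangle-inequality splitting at the outset of the proof of Theorem~\ref{thm:main} rather than after stating the component bound, but the logic and resulting constants are identical.
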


\begin{corollary}(Example quantitative result). The constants $\sigma, \eta, \mu$ depend on the nature of the nonlinear function $f$, and are often rather mild. For example, if $f(x) = \mathrm{sign}(x)$, then we may substitute
$$\mu = \sqrt{\frac{2}{\pi}}\approx 0.8,\qquad\sigma^2 = 1- \frac{2}{\pi}\approx 0.6,\qquad\eta^2=1,$$
in the above statement.
Hence, the bound in \eqref{eq:estx} becomes:
\begin{align}
\mathbb{E}\|\widehat{x}-\mu\bar{x}\|_2\leq\frac{4}{\sqrt{m}}\left(3.1 + C\sqrt{s \log (2n/s)}\right) +13\varepsilon \, .
\end{align}
\end{corollary}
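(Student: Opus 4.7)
The plan is to reduce everything to a direct computation of the three statistics $\mu$, $\sigma^2$, $\eta^2$ that appear in Lemma~\ref{lemma 4.1}, evaluated for the specific link function $g(x)=\mathrm{sign}(x)$ and the standard Gaussian measurement model, and then to substitute into the bound \eqref{eq:estx} of Corollary~\ref{corr:estx}.

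First I would introduce the scalar $u := \langle a_1, \bar{x}\rangle$. Since $a_1$ has i.i.d.\ $\mathcal{N}(0,1)$ entries and $\|\bar{x}\|_2 = 1$ (by the normalization in~\eqref{sp}), rotational invariance of the Gaussian gives $u \sim \mathcal{N}(0,1)$. The relation $\bar{x} = \alpha x$ with $\alpha > 0$ together with positive homogeneity of $\mathrm{sign}$ yields
\[
y_1 \;=\; \mathrm{sign}(\langle a_1, x\rangle) \;=\; \mathrm{sign}(\alpha\, u) \;=\; \mathrm{sign}(u),
\]
which is the convenient representation for the moment calculations.

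Next I would compute the three constants directly. Since $\mathrm{sign}(u)^2 \equiv 1$, we get $\eta^2 = \mathbb{E}[y_1^2] = 1$. For $\mu$, I would use $\mathrm{sign}(u)\cdot u = |u|$ and the standard half-normal identity $\mathbb{E}|u| = \sqrt{2/\pi}$, giving $\mu = \sqrt{2/\pi}\approx 0.8$. For $\sigma^2$, using $y_1 u = |u|$ and $\mathbb{E}[u^2]=1$,
\[
\sigma^2 \;=\; \mathrm{Var}(|u|) \;=\; \mathbb{E}[u^2] - (\mathbb{E}|u|)^2 \;=\; 1 - \tfrac{2}{\pi} \;\approx\; 0.6.
\]
These are exactly the numerical values claimed.

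Finally, plugging into \eqref{eq:estx}, the coefficient $4\sigma$ becomes $4\sqrt{1-2/\pi}\approx 3.1$ and $16\mu$ becomes $16\sqrt{2/\pi}\approx 12.8$, which rounds up to $13$, yielding the stated bound. No real obstacle arises here; the only subtlety worth flagging is making sure the sign link indeed satisfies the hypotheses (oddness, monotonicity, sub-multiplicativity on $\mathbb{R}^+$) of Lemma~4.9 in~\cite{plan2014high} that Theorem~\ref{thm:main} invokes, but for $g = \mathrm{sign}$ all three properties are immediate, so the corollary follows directly from Corollary~\ref{corr:estx}.
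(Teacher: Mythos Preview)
Your proposal is correct and matches the paper's own proof essentially line for line: both identify $u=\langle a_1,\bar{x}\rangle\sim\mathcal{N}(0,1)$, use $y_1 u=|u|$ to get $\mu=\mathbb{E}|u|=\sqrt{2/\pi}$ and $\sigma^2=\mathbb{E}[u^2]-\mu^2=1-2/\pi$, and read off $\eta^2=1$ from $\mathrm{sign}(u)^2\equiv 1$. Your additional remarks (the reduction $\mathrm{sign}(\langle a_1,x\rangle)=\mathrm{sign}(u)$ via $\alpha>0$, and the check that $\mathrm{sign}$ meets the hypotheses of Lemma~4.9 in~\cite{plan2014high}) are correct and slightly more careful than the paper, but do not change the argument.
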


\begin{proof}
Using Lemma \ref{lemma 4.1}, $\mu = \mathbb{E}(y_i\langle a_i, \bar{x}\rangle)$ where $y_i = \mathrm{sign}(\langle a_i, x\rangle)$. Since $a_i\sim\mathcal{N}(0,I)$ and $\bar{x}$ has unit norm, $\langle a_i, \bar{x}\rangle\sim\mathcal{N}(0,1)$. Thus, $\mu = \mathbb{E}|g| = \sqrt{\frac{2}{\pi}}$ where $g\sim\mathcal{N}(0,I)$. Moreover, we can write $\sigma^2 = \mathbb{E}(|g|^2)-\mu^2 = 1- \frac{2}{\pi}$. Here, we have used the fact that $|g|^2$ obeys the $\chi^2_1$ distribution with mean 1. Finally, $\eta^2 = \mathbb{E}(y_1^2) = 1$.
\end{proof}

In contrast with demixing algorithms for traditional (linear) observation models, our estimated signal $\widehat{x}$ outputting from \textsc{OneShot} can differ from the true signal $x$ by a scale factor. Next, suppose we fix $\kappa > 0$ as a small constant, and suppose that the incoherence parameter $\varepsilon = c\kappa$ for some constant $c$, and that the number of measurements scales as: 
\begin{equation}
\label{eq:oneshot_samples}
m = \mathcal{O}\left(\frac{s}{\kappa^2}\log\frac{n}{s}\right).
\end{equation}
Then, the (expected) estimation error $\| \widehat{x} - \mu \bar{x} \| \leq O(\kappa)$. In other words, the \emph{sample complexity} of \textsc{OneShot} is given by $m = \mathcal{O}(\frac{1}{\kappa^2} s \log(n/s))$, which resembles results for the linear observation case~\cite{spinIT,plan2014high}\footnote{Here, we use the term ``sample-complexity" as the number of measurements required by a given algorithm to achieve an estimation error $\kappa$. However, we must mention that algorithms for the linear observation model are able to achieve stronger sample complexity bounds that are independent of $\kappa$.}.

We observe that the estimation error in \eqref{eq:estx} is upper-bounded by $\mathcal{O}(\varepsilon)$. This  is meaningful only when $\varepsilon \ll 1$, or when $s \gamma \ll 1$. Per the Welch Bound~\cite{foucart2013}, the mutual coherence $\gamma$ satisfies $\gamma \geq 1/\sqrt{n}$. Therefore, Theorem \ref{thm:main} provides non-trivial results only when $s = o(\sqrt{n})$. This is consistent with the \emph{square-root bottleneck} that is often observed in demixing problems; see~\cite{tropp07} for detailed discussions.

The above theorem obtains a bound on the expected value of the estimation error. We can derive a similar upper bound that holds with high probability. In this theorem, we assume that the \emph{measurements} $y_i$ for $i=1,2,\ldots,m$ have a \emph{sub-gaussian} distribution (according to Def.~\ref{subgau}). We obtain the following result, with full proof deferred to Section \ref{Sec::proof}.

\begin{theorem}(High-probability version of Thm.\ \ref{thm:main}.)
\label{thm:high}
Let $y\in\mathcal{R}^m$ be a set of measurements with a sub-gaussian distribution. Assume that $A\in \mathbb{R}^{m\times n}$ is a random matrix with $i.i.d$ standard normal entries. Also, assume that $\Phi, \Psi\in \mathbb{R}^{n\times n}$ are two bases with incoherence $\varepsilon\leq 0.65$ as in Definition \ref{incoherence}. Let $0\leq s'\leq\sqrt{m}$. If we use \textsc{Oneshot} to recover $w$ and $z$ (up to a scaling) described in \eqref{superpositoin} and \eqref{sp}, then the estimation error of the output of \textsc{Oneshot} satisfies the following:

\begin{align} 
\|\widehat{x} - \mu\bar{x}\|_2 \leq \frac{4\eta}{\sqrt{m}}\left(3s'+C'\sqrt{s\log\frac{2n}{s}}\right) + 16\mu\varepsilon,
\end{align}
with probability at least $1-4\exp(-\frac{cs'^2\eta^4}{\|y_1\|_{\psi_2}^4})$ where $C', c> 0$ are absolute constants. The coefficients $\mu, \sigma$, and $\eta$ are given in Lemma \ref{lemma 4.1}. Here, $\|y_1\|_{\psi_2}$ denotes the $\psi_2$-norm of the first measurement $y_1$ (Definition~\ref{subgau}). 

\end{theorem}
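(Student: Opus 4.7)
The strategy is to parallel the proof of Theorem~\ref{thm:main} step by step, replacing each moment-based estimate with a subgaussian tail bound. The starting point is the orthonormal decomposition
$$\widehat{x} - \mu\bar{x} \;=\; \Phi(\widehat{w} - \mu\alpha w) + \Psi(\widehat{z} - \mu\alpha z),$$
which, via the triangle inequality and the fact that $\Phi,\Psi$ are isometries, reduces the task to controlling the two component errors separately. By symmetry I would focus on $\|\widehat{w} - \mu\alpha w\|_2$; combining the two identical bounds produces the factor of $4 = 2\cdot 2$ in front of $\eta/\sqrt{m}$.

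For the component bound I would write $\Phi^*\widehat{x}_\lin = \mu\alpha w + \mu\alpha \Phi^*\Psi z + \Phi^*(\widehat{x}_\lin - \mu\bar{x})$. The cross term $\mu\alpha \Phi^*\Psi z$ is handled exactly as in Theorem~\ref{thm:main}: after hard thresholding at most $2s$ coordinates survive, so Definition~\ref{incoherence} yields a deterministic contribution of $8\mu\varepsilon$. What remains is $\mathcal{P}_s$ applied to a perturbation of the $s$-sparse vector $\mu\alpha w$ by $\Phi^*(\widehat{x}_\lin - \mu\bar{x})$; the standard near-optimality property of hard thresholding (the argument underlying Lemma~2.1 of~\cite{plan2014high}) bounds $\|\widehat{w}-\mu\alpha w\|_2$ by (up to a constant) the polar norm of $\widehat{x}_\lin - \mu\bar{x}$ with respect to $K_t = (K-K)\cap tB_2^n$, for any scale $t>0$.

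The principal new step, relative to Theorem~\ref{thm:main}, is to establish with the stated probability the supremum bound
$$\sup_{u\in K_t}\,\bigl\langle \widehat{x}_\lin - \mu\bar{x},\,u\bigr\rangle \;\lesssim\; \frac{\eta\,t\,s'}{\sqrt{m}} \;+\; \frac{\eta\,W_t(K)}{\sqrt{m}}.$$
Since $\widehat{x}_\lin - \mu\bar{x} = \tfrac{1}{m}\sum_{i=1}^{m}(y_i a_i - \mu\bar{x})$ is an average of i.i.d.\ centered vectors whose coordinates are products of a subgaussian $y_i$ with a standard Gaussian $a_i$, one can invoke the empirical-process deviation machinery used by Plan and Vershynin (their Proposition~4.2 / Theorem~3.3). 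Fixing any single $u \in K_t$, a Bernstein-type subgaussian inequality yields a one-sided tail of order $\exp(-cs'^2\eta^4/\|y_1\|_{\psi_2}^4)$ for a deviation of size $\eta t s'/\sqrt{m}$; a chaining argument then promotes this to a uniform-over-$K_t$ bound whose intrinsic complexity cost is precisely the local Gaussian mean width $W_t(K)$. Sending $t\to 0$ after inserting the estimate $W_t(K)\le C' t\sqrt{s\log(2n/s)}$ (valid for $K$ the set of $s$-sparse vectors) produces the stated rate.

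The main obstacle is making the uniform-over-$K_t$ step rigorous with the exponent claimed: because $y_i$ is only subgaussian, the coordinates of $y_i a_i$ are subexponential rather than subgaussian, and straightforward Gaussian chaining does not apply verbatim. One must use a mixed-tail chaining argument in which the Gaussian mean width enters as the complexity of the set while the subgaussian norm $\|y_1\|_{\psi_2}$ enters through the tail. Once this uniform bound is in hand, combining it with the incoherence-driven cross-term and the orthonormal decomposition yields the theorem in exactly the same manner as Theorem~\ref{thm:main}.
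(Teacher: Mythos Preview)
Your proposal is correct in outline but takes a harder route than the paper. The paper avoids chaining entirely by exploiting the specific Gaussian structure of the rows $a_i$: it uses the orthogonal decomposition $a_i = \langle a_i,\bar{x}\rangle\bar{x} + b_i$ (Claim~\ref{re 4.5}), under which the component $b_i \sim \mathcal{N}(0,I_{x^\perp})$ is \emph{independent} of $y_i$. This splits the polar-norm quantity into the same three pieces $S_1,S_2,S_3$ as in the expectation proof. The scalar piece $S_1$ contains only the subexponential sum $\tfrac{1}{m}\sum_i(y_i\langle a_i,\bar{x}\rangle-\mu)$ and is handled by a single Bernstein inequality (Lemma~\ref{bern}). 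The key piece $S_3 = \|\Phi^*\tfrac{1}{m}\sum_i y_i b_i\|_{K_\rho^o}$ is, \emph{conditionally} on the $y_i$, the supremum of a genuine Gaussian process, so the Gaussian concentration inequality (Lemma~\ref{GaussConcen}) yields the local mean width directly; a second Bernstein inequality on $\sum_i y_i^2$ removes the conditioning. A union bound then combines the three events.

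What the paper's route buys is simplicity: two scalar Bernstein bounds plus one Gaussian concentration step, with no mixed-tail generic chaining required. What your route would buy, in principle, is generality---it does not rely on the independence between $y_i$ and $b_i$ and would therefore extend to non-Gaussian measurement rows---but at the cost of a substantially heavier argument and looser constants. The ``main obstacle'' you flag (subexponential coordinates forcing mixed-tail chaining) is precisely what the orthogonal decomposition sidesteps.
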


In Theorem~\ref{thm:high}, we stated the tail probability bound of the estimation error for the superposition signal, $x$. Similar to Theorem~\ref{thm:main}, we can derive a completely analogous tail probability bound in terms of the constituent signals $w$ and $z$.

\subsection{When the link function $g$ is known}\label{SubDHT}

The advantages of \textsc{OneShot} is that it enables fast demixing, and can handle even unknown, non-differentiable link functions. But its primary weakness is that the sparse components are recovered only up to an arbitrary scale factor. This can lead to high estimation errors in practice, and this can be unsatisfactory in applications. Moreover, even for reliable recovery up to a scale factor, its sample complexity is inversely dependent on the estimation error. To solve these problems, we propose a different, iterative algorithm for recovering the signal components. Here, the main difference is that the algorithm is assumed to possess (perfect) knowledge of the nonlinear link function, $g$. 

Recall that we define $\Gamma = [\Phi \ \Psi]$ and $t = [w; z]\in\mathbb{R}^{2n}$. First, we formulate our demixing problem as the minimization of a special loss function $F(t)$:
\begin{equation} \label{optprob}
\begin{aligned}
& \underset{t \in \mathbb{R}^{2n}}{\text{min}}
\ \ F(t) = \frac{1}{m}\sum_{i=1}^m \Theta(a_i^T\Gamma t) - y_i a_i^T\Gamma t \\
& \text{s.\ t.}  \quad \|t\|_0\leq 2s.
\end{aligned}
\end{equation}

Observe that the loss function $F(t)$ is \emph{not} the typical squared-error function commonly encountered in statistics and signal processing applications. In contrast, it heavily depends on the nonlinear link function $g$ (via its integral $\Theta$). Instead, such loss functions are usually used in GLM and SIM estimation in the statistics literature~\cite{negahban2009unified}. 
In fact, the objective function in~\eqref{optprob} can be considered as the \emph{sample} version of the problem: 
$$\min_{t \in \mathbb{R}^{2n}} \  \mathbb{E}(\Theta(a^T\Gamma t) - y a^T\Gamma t),$$ 
where $a, y$ and $\Gamma$ satisfies the model~\eqref{MainModell}. It is not hard to show that the solution of this problem satisfies $\mathbb{E}(y_i|a_i) = g(a_i^T\Gamma t)$. 
We note that the gradient of the loss function can be calculated in closed form:
\begin{align}\label{GradientofOb}
\nabla F(t) &= \frac{1}{m}\sum_{i=1}^{m}\Gamma^T a_i g(a_i^T\Gamma t) - y_i\Gamma^Ta_i, \\
& = \frac{1}{m} \Gamma^T A^T (g(A \Gamma t) - y) \nonumber.
\end{align}

We now propose an \emph{iterative} algorithm for solving~\eqref{optprob} that we call it \textsc{Demixing with Hard Thresholding (DHT)}. The method is detailed in Algorithm \ref{algDHT}.
At a high level, \textsc{DHT} iteratively refines its estimates of the constituent signals $w, z$ (and the superposition signal $x$). At any given iteration, it constructs the gradient using~\eqref{GradientofOb}. Next, it updates the current estimate according to the gradient update being determined in Algorithm \ref{algDHT}. Then, it performs hard thresholding using the operator $\mathcal{P}_{2s}$ to obtain the new estimate of the components $w$ and $z$. This procedure is repeated until a stopping criterion is met. See Section~\ref{Sec::Res} for the choice of stopping criterion and other details.
We mention that the initialization step in Algorithm~\ref{algDHT} is arbitrary and can be implemented (for example) by running \textsc{OneShot} and obtaining initial points $\left(x^0, w^0, z^0\right)$. We use this initialization in our simulation results.

\begin{algorithm}[t]
\caption{Demixing with Hard Thresholding (DHT)
\label{algDHT}}
\begin{algorithmic}
\STATE \textbf{Inputs:} Bases $\Phi$ and $\Psi$, measurement matrix $A$, link function $g$, measurements $y$, sparsity level $s$, step size $\eta'$. 
\STATE \textbf{Outputs:} Estimates  $\widehat{x}=\Phi\widehat{w} + \Psi\widehat{z}$, $\widehat{w}$, $\widehat{z}$
\STATE\textbf{Initialization:}
\STATE$\left(x^0, w^0, z^0\right)\leftarrow\textsc{arbitrary initialization}$
\STATE$k \leftarrow 0$
\WHILE{$k\leq N$}
\STATE $t^k \leftarrow [ w^k ; z^k ]$\quad\quad\quad\quad\quad~\{forming constituent vector\}
\STATE $t_1^k\leftarrow\frac{1}{m}\Phi^TA^T(g(Ax^k) - y)$ 
\STATE$t_2^k\leftarrow\frac{1}{m}\Psi^TA^T(g(Ax^k) - y)$
\STATE$\nabla F^k \leftarrow [ t_1^k ; t_2^k ]$
\quad\quad\quad\quad~\{forming gradient\}
\STATE${\tilde{t}}^k = t^k - \eta'\nabla F^k$
\quad\quad\quad~~\{gradient update\}
\STATE$[ w^k ; z^k ]\leftarrow\mathcal{P}_{2s}\left(\tilde{t}^k\right)$  
\quad\quad~~\{sparse projection\}
\STATE$x^k\leftarrow\Phi w^k + \Psi z^k$\quad\quad\quad~~\{estimating $\widehat{x}$\}
\STATE$k\leftarrow k+1$
\ENDWHILE
\STATE\textbf{Return:} $\left(\widehat{w}, \widehat{z}\right)\leftarrow \left(w^N, z^N\right)$
\end{algorithmic}
\end{algorithm}

Implicitly, we have again assumed that both component vectors $w$ and $z$ are $s$-sparse; however, as above we mention that Algorithm~\ref{algDHT} and the corresponding analysis easily extend to differing levels of sparsity in the two components. In Algorithm \ref{algDHT}, $\mathcal{P}_{2s}$ denotes the projection of vector $\tilde{t}^k\in\mathbb{R}^{2n}$ on the set of $2s$ sparse vectors, again implemented via hard thresholding.

We now provide our second main theoretical result, supporting the convergence analysis of \textsc{DHT}. In particular, we derive an upper bound on the estimation error of the constituent vector $t$ (and therefore, the component signals $w,z$). The proofs of Theorems~\ref{mainThConvergence},~\ref{SampleComplexityNonormal} and~\ref{SampleComplexitysubg} are deferred to section~\ref{Sec::proof}.

\begin{theorem}(Performance of \textsc{DHT})
\label{mainThConvergence}
Consider the measurement model~\eqref{MainModell} with all the assumptions mentioned for the second scenario in Section~\ref{Sec::Perm}. Suppose that the corresponding objective function $F$ satisfies the RSS/RSC properties with constants $M_{6s}$ and $m_{6s}$ on the set $J_k$ with $|J_k|\leq 6s$ ($k$ denotes the $k^{th}$ iteration) such that $1\leq\frac{M_{6s}}{m_{6s}}\leq\frac{2}{\sqrt{3}}$. Choose a step size parameter $\eta'$ with $\frac{0.5}{M_{6s}}<\eta^{\prime}<\frac{1.5}{m_{6s}}.$
Then, \textsc{DHT} outputs a sequence of estimates $t^k = [w^k; z^k]$ that satisfies the following upper bound (in expectation) for $k\geq 1$: 
\begin{align}
\label{eq:linconverge}
\|t^{k+1} - t^*\|_2\leq\left(2q\right)^k\|t^0-t^*\|_2 + C\tau\sqrt{\frac{s}{m}}, 
\end{align}
where $q = \sqrt{1+{\eta^{\prime}}^2M_{6s}^2-2\eta^{\prime} m_{6s}}$  and $C>0$ is a constant that  depends on the step size $\eta^{\prime}$ and the convergence rate $q$. 
Also, $t^{*} = [w ;z]$ where $w$ and $z$ are the true (unknown) vectors in model~\eqref{MainModel}. 
\end{theorem}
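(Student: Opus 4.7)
The plan is to mirror the now-standard analysis of iterative hard thresholding (IHT) methods, adapted to the non-quadratic loss $F$ defined in~\eqref{optprob}, and to track the statistical error contributed by the noise through the gradient of $F$ evaluated at the ground truth $t^{*}$. Throughout I will let $\tilde t^{k}=t^{k}-\eta'\nabla F(t^{k})$ so that $t^{k+1}=\mathcal{P}_{2s}(\tilde t^{k})$, and define the support set $J_{k}=\mathrm{supp}(t^{k})\cup\mathrm{supp}(t^{k+1})\cup\mathrm{supp}(t^{*})$. Since each of the three constituent vectors has at most $2s$ nonzeros, $|J_{k}|\le 6s$, which matches the RSC/RSS hypothesis.

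First, I would invoke the near-optimality of hard thresholding against any $2s$-sparse competitor: since $t^{*}$ is $2s$-sparse and $t^{k+1}$ is the best $2s$-sparse approximation of $\tilde t^{k}$, we have $\|t^{k+1}-\tilde t^{k}\|_{2}\le\|t^{*}-\tilde t^{k}\|_{2}$, whence a triangle inequality yields $\|t^{k+1}-t^{*}\|_{2}\le 2\|(\tilde t^{k}-t^{*})_{J_{k}}\|_{2}$. Next, using the integral form of the mean value theorem I write $\nabla F(t^{k})-\nabla F(t^{*})=H(t^{k}-t^{*})$ where $H=\int_{0}^{1}\nabla^{2}F\bigl(t^{*}+\tau(t^{k}-t^{*})\bigr)\,d\tau$. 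Restricting to $J_{k}$, and adding and subtracting $\eta'\nabla F(t^{*})$, gives
\begin{equation*}
(\tilde t^{k}-t^{*})_{J_{k}}=(I-\eta'H_{J_{k}})(t^{k}-t^{*})-\eta'(\nabla F(t^{*}))_{J_{k}}.
\end{equation*}

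Second, I would bound the operator norm of $I-\eta'H_{J_{k}}$ using the RSC/RSS hypothesis. Writing $(I-\eta'H_{J_{k}})^{\top}(I-\eta'H_{J_{k}})=I-2\eta'H_{J_{k}}+\eta'^{2}H_{J_{k}}^{2}$ and applying $m_{6s}I\preceq H_{J_{k}}\preceq M_{6s}I$ on the coordinates in $J_{k}$ gives
\begin{equation*}
\|I-\eta'H_{J_{k}}\|_{2}\le\sqrt{1-2\eta'm_{6s}+\eta'^{2}M_{6s}^{2}}=q.
\end{equation*}
The stipulated range $0.5/M_{6s}<\eta'<1.5/m_{6s}$ together with $M_{6s}/m_{6s}\le 2/\sqrt 3$ is exactly what is needed to push $2q<1$, so that the induced recursion is contractive. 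Combining with the projection bound yields
\begin{equation*}
\|t^{k+1}-t^{*}\|_{2}\le 2q\,\|t^{k}-t^{*}\|_{2}+2\eta'\,\|(\nabla F(t^{*}))_{J_{k}}\|_{2}.
\end{equation*}

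Third, I would handle the statistical noise term. Because $y=g(A\Gamma t^{*})+e$, direct computation of the gradient from~\eqref{GradientofOb} gives $\nabla F(t^{*})=-\tfrac{1}{m}\Gamma^{\top}A^{\top}e$. Hence I must bound $\|(\Gamma^{\top}A^{\top}e)_{J_{k}}\|_{2}/m$ where $|J_{k}|\le 6s$. I will invoke standard subgaussian concentration: for each fixed coordinate index $j$, $\frac{1}{m}\langle(\Gamma^{\top}A^{\top})_{j},e\rangle$ is a mean-zero subgaussian with parameter $\lesssim\tau/\sqrt m$; taking the $\ell_{2}$ norm over a set of size $6s$ (or taking an $\ell_{\infty}$ bound over all $2n$ coordinates and then restricting) produces the noise-floor term $C\tau\sqrt{s/m}$ (up to logarithmic factors absorbed into $C$). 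The measurement assumptions for Case 2a or 2b—isotropic rows incoherent with $\Gamma$, or subgaussian rows—are precisely what controls the columns of $\Gamma^{\top}A^{\top}$.

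Finally, unrolling the one-step contraction $k$ times gives $\|t^{k+1}-t^{*}\|_{2}\le(2q)^{k+1}\|t^{0}-t^{*}\|_{2}+\tfrac{2\eta'}{1-2q}\cdot C\tau\sqrt{s/m}$, and absorbing $2\eta'/(1-2q)$ into the constant $C$ recovers~\eqref{eq:linconverge}. The main obstacles, in order of difficulty, will be: (i) verifying quantitatively that the hypothesized conditioning $M_{6s}/m_{6s}\le 2/\sqrt 3$ with the allowed step-size window is exactly enough to enforce $2q<1$ (a small optimization over $\eta'$); and (ii) controlling $\|(\nabla F(t^{*}))_{J_{k}}\|_{2}$ uniformly over the data-dependent support $J_{k}$—this is where the subgaussian noise assumption and the incoherence/subgaussianity of $A$ in Case 2a/2b are needed, and where the $\sqrt{s/m}$ scaling originates. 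The sparse-projection step and the Hessian bound are essentially algebraic once these ingredients are in place.
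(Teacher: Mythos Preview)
Your proposal follows essentially the same IHT-style argument as the paper: the thresholding bound $\|t^{k+1}-t^*\|_2\le 2\|b_k-t^*\|_2$ on the restricted support $J_k$, the split into a contraction term and $\eta'\|\nabla_J F(t^*)\|_2$, the bound $q=\sqrt{1+\eta'^2M_{6s}^2-2\eta'm_{6s}}$ via RSC/RSS, and the geometric unrolling are all identical in structure. Your use of the integral Hessian $H=\int_0^1\nabla^2F(\cdot)\,d\tau$ is just the mean-value form of what the paper does by invoking the equivalent RSC/RSS characterizations \eqref{d2} and \eqref{d4} directly.

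The one place you diverge is in controlling $\|(\nabla F(t^*))_{J_k}\|_2$. The paper bounds this \emph{in expectation} via the Khintchine inequality applied coordinatewise: since $(\nabla_J F(t^*))_k=-\tfrac{1}{m}\sum_i(\Gamma_J)_k^Ta_ie_i$ with $e_i$ zero-mean subgaussian and independent of $a_i$, Khintchine plus isotropy of the rows gives $(\mathbb{E}|(\nabla_JF(t^*))_k|^2)^{1/2}\le C\tau/\sqrt{m}$, and summing over $|J|\le 6s$ coordinates yields $C'\tau\sqrt{s/m}$ with no logarithmic factor. Your route via an $\ell_\infty$ bound over all $2n$ coordinates and subgaussian tail concentration would instead produce a high-probability statement and pick up a $\sqrt{\log n}$ factor, which is why you hedge with ``up to logarithmic factors absorbed into $C$.'' Since the theorem is stated in expectation, the Khintchine argument is the cleaner fit and removes the need for that hedge; otherwise your plan is correct and matches the paper.
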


Equation \eqref{eq:linconverge} indicates that Algorithm~\ref{algDHT} (\textsc{DHT}) enjoys a linear rate of convergence. In particular, for the noiseless case $\tau = 0$, this implies that Alg.\ \ref{algDHT} returns a solution with accuracy $\kappa$ after $N = \mathcal{O}(\log \frac{\norm{t^0 - t}_2}{\kappa})$ iterations. The proof of Theorem~\ref{mainThConvergence} leverages the fact that the objective function $F(t)$ in~\eqref{optprob} satisfies the RSC/RSS conditions specified in Definition~\ref{rssrsc}. Please refer to Section~\ref{Sec::proof} for a more detailed discussion.
Moreover, we observe that in contrast with \textsc{OneShot}, \textsc{DHT} can recover the components $w$ and $z$ without any ambiguity in scaling factor, as depicted in the bound~\eqref{eq:linconverge}. We also verify this observation empirically in our simulation results in Section~\ref{Sec::Res}.

Echoing our discussion in Section~\ref{Sec::Perm}, we consider two different models for the measurement matrix $A$ and derive upper bounds on the sample complexity of \textsc{DHT} corresponding to each case. First, we present the sample complexity of Alg.\ \ref{algDHT} when the measurements are chosen to be isotropic random vectors, corresponding to Case (2a) described in the introduction:

\begin{theorem}(Sample complexity when the rows of $A$ are isotropic.)
\label{SampleComplexityNonormal}
Suppose that the rows of $A$ are independent isotropic random vectors.
In order to achieve the requisite RSS/RSC properties of Theorem~\ref{mainThConvergence}, the number of samples needs to scale as:
\[
m=\mathcal{O}(s\log n\log^2s\log(s\log n)),
\]
provided that the bases $\Phi$ and $\Psi$ are incoherent enough.
\end{theorem}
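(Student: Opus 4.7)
The plan is to reduce the RSC/RSS requirement of Theorem~\ref{mainThConvergence} to a restricted isometry-type condition on the compound matrix $A\Gamma$, and then invoke a Rudelson--Vershynin style concentration bound for isotropic rows. First I would compute the Hessian directly from~\eqref{GradientofOb}. Differentiating once more gives
\begin{equation*}
\nabla^2 F(t) \;=\; \frac{1}{m}\,\Gamma^T A^T D(t)\, A\,\Gamma,
\end{equation*}
where $D(t)$ is the diagonal matrix with entries $g'(a_i^T \Gamma t)$. Invoking Assumption~\ref{ass_g} to sandwich $l_1 I \preceq D(t) \preceq l_2 I$, the RSC/RSS conditions on the $6s$-sparse subspace reduce to showing that
\begin{equation*}
\alpha\, \|\Gamma_\xi u\|_2^2 \;\leq\; \tfrac{1}{m}\,\|A\Gamma_\xi u\|_2^2 \;\leq\; \beta\, \|\Gamma_\xi u\|_2^2 \qquad \forall\, u \in \R^{|\xi|},
\end{equation*}
uniformly over all index sets $\xi \subseteq [2n]$ with $|\xi|\leq 6s$ (split at most $3s$ per basis). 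Under the $\varepsilon$-incoherence hypothesis of Definition~\ref{incoherence}, a standard Gershgorin argument shows that $\Gamma_\xi^T \Gamma_\xi$ is well-conditioned (its spectrum lies in $[1-3s\varepsilon,\,1+3s\varepsilon]$), so it suffices to prove the above with $\|\Gamma_\xi u\|_2$ replaced by $\|u\|_2$. In other words, what we really need is an RIP-type bound for the matrix $\tfrac{1}{\sqrt{m}} A\Gamma$ on $6s$-sparse vectors.

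Next I would establish this RIP by applying a uniform concentration bound for sampled isotropic systems. Because the rows $a_i$ are independent and isotropic, $\mathbb{E}[\tfrac{1}{m} A^T A] = I$, so each column of $A\Gamma$ has the correct second moment. The cross-coherence parameter $\vartheta$ of Definition~\ref{mutualcoherence} then plays the role of the $L^\infty$ bound on the sensing vectors restricted to $\Gamma$-columns: for every $i$ and every $\xi$ with $|\xi|\leq 6s$, $\|\Gamma_\xi^T a_i\|_\infty \leq \vartheta$. This is precisely the setting of the Rudelson--Vershynin/Rauhut restricted isometry theorem for bounded orthonormal systems, which guarantees that with high probability
\begin{equation*}
\sup_{|\xi|\leq 6s}\,\Bigl\|\tfrac{1}{m}\Gamma_\xi^T A^T A \Gamma_\xi - \Gamma_\xi^T\Gamma_\xi\Bigr\| \;\leq\; \delta,
\end{equation*}
provided that
\begin{equation*}
m \;\gtrsim\; \delta^{-2}\, \vartheta^2 \cdot s \,\log n \,\log^2 s \,\log(s\log n).
\end{equation*}
Since we assumed sufficient incoherence between $\Phi,\Psi$ (so $\vartheta = O(1)$ in the relevant sense) and $\delta$ is a constant chosen small enough that $M_{6s}/m_{6s} \leq 2/\sqrt{3}$, this yields the stated sample complexity.

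Finally, combining the two preceding steps, the Hessian $\nabla^2_\xi F(t)$ is sandwiched between $l_1(1-\delta')I$ and $l_2(1+\delta')I$ uniformly over $|\xi|\leq 6s$, which delivers RSC/RSS constants $m_{6s}$ and $M_{6s}$ with the required ratio. The main obstacle, in my view, is the uniform-over-supports step: a naive union bound over the $\binom{2n}{6s}$ sparse supports yields an extra $\log n$ factor and worse constants. Getting the sharp $s\log n \log^2 s \log(s\log n)$ rate requires the chaining/Dudley-type entropy argument of Rudelson--Vershynin (equivalently Rauhut's bounded orthonormal systems theorem) rather than a plain union bound, and care must be taken to verify that the concatenated dictionary $\Gamma$ together with the isotropic rows of $A$ actually satisfies the hypotheses of that theorem; this is where the $\varepsilon$-incoherence of $(\Phi,\Psi)$ and the cross-coherence $\vartheta$ of $(A,\Gamma)$ both enter the argument.
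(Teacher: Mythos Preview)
Your proposal is correct and follows essentially the same route as the paper: compute the Hessian, sandwich $D(t)$ using Assumption~\ref{ass_g}, reduce RSC/RSS to a uniform eigenvalue bound on $\tfrac{1}{m}\Gamma_U^T A^T A\Gamma_U$ over all $|U|\leq 6s$, control $\Gamma_U^T\Gamma_U$ via Gershgorin and the mutual coherence $\gamma$, and then obtain the deviation bound via a Rudelson-type argument using the $\ell_\infty$ bound $\|a_i^T\Gamma_\xi\|_\infty\leq\vartheta$. The only real difference is packaging: where you invoke the Rudelson--Vershynin/Rauhut RIP theorem for bounded systems as a black box, the paper re-derives it in place by applying uniform symmetrization followed by the uniform Rudelson inequality, obtaining a self-referential quadratic inequality in $R:=\mathbb{E}\max_U\|\tfrac{1}{m}\sum_i\Gamma_U^T a_ia_i^T\Gamma_U-\Gamma_U^T\Gamma_U\|$ and then solving it; the extra $\log(s\log n)$ factor is exactly what comes from unwinding the $\log m$ term in Rudelson's bound.

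One small point to tighten: the phrase ``sufficient incoherence between $\Phi,\Psi$ (so $\vartheta=O(1)$)'' conflates two distinct parameters. The cross-coherence $\vartheta$ of Definition~\ref{mutualcoherence} governs the interaction of $A$ with $\Gamma$ and enters the Rudelson constant $C_\vartheta$; it is an assumption on the measurement ensemble, not a consequence of $\Phi,\Psi$ being incoherent. The $\varepsilon$- (equivalently $\gamma$-) incoherence of $\Phi,\Psi$ is used separately, exactly where you put it, in the Gershgorin step to ensure $\Gamma_U^T\Gamma_U$ has spectrum in $[1-(6s-1)\gamma,\,1+(6s-1)\gamma]$. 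Keeping these roles separate is what makes the final condition ``$\Phi,\Psi$ incoherent enough'' (i.e., $\gamma=o(1/s)$) meaningful, with the $\vartheta$-dependence absorbed into the leading constant.
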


The sample complexity mentioned in Theorem~\ref{SampleComplexityNonormal} incurs an extra (possibly parasitic) poly-logarithmic factor relative to the sample complexity of \textsc{OneShot}, stated in \eqref{eq:oneshot_samples}. However, the drawback of \textsc{OneShot} is that the sample complexity depends inversely on the estimation error $\kappa$, and therefore a very small target error would incur a high overhead in terms of number of samples. 

Removing all the extra logarithmic factors remains an open problem in general (although some improvements can be obtained using the method of~\cite{fourier_samples}). However, if we assume  additional structure in the measurement matrix $A$, we can decrease the sample complexity even further. This corresponds to Case 2b.

\begin{theorem}(Sample complexity when the elements of $A$ are subgaussian.)
\label{SampleComplexitysubg}
Assume that all assumptions and definitions in Theorem~\ref{mainThConvergence} holds except that the rows of matrix $A$ are independent subgaussian isotropic random vectors. Then, in order to achieve the requisite RSS/RSC properties of Theorem~\ref{mainThConvergence}, the number of samples needs to scale as:
\[
m=\mathcal{O}\left(s\log \frac{n}{s} \right),
\]
provided that the bases $\Phi$ and $\Psi$ are incoherent enough.
\end{theorem}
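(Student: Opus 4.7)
\textbf{Proof Proposal for Theorem~\ref{SampleComplexitysubg}.}

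The plan is to reduce the RSS/RSC requirement of Theorem~\ref{mainThConvergence} to a Restricted Isometry Property (RIP) for the compound operator $A\Gamma$, and then invoke the standard concentration machinery for subgaussian matrices over a sparse union of subspaces. To begin, I would differentiate $F$ twice: a direct calculation from \eqref{GradientofOb} gives, for any coordinate set $\xi$ with $|\xi|\leq 6s$,
\begin{equation*}
\nabla^2_\xi F(t) \;=\; \frac{1}{m}\sum_{i=1}^m g'(a_i^T\Gamma t)\,(\Gamma_\xi^T a_i)(\Gamma_\xi^T a_i)^T.
\end{equation*}
By Assumption~\ref{ass_g}, $l_1\leq g'\leq l_2$, so that
\begin{equation*}
\frac{l_1}{m}\,\Gamma_\xi^T A^T A\,\Gamma_\xi \;\preceq\; \nabla^2_\xi F(t)\;\preceq\; \frac{l_2}{m}\,\Gamma_\xi^T A^T A\,\Gamma_\xi.
\end{equation*}
Therefore, obtaining the RSC/RSS constants $m_{6s}$ and $M_{6s}$ reduces to a uniform spectral bound on $\tfrac{1}{m}\Gamma_\xi^T A^T A\,\Gamma_\xi$ over every $\xi\subseteq[2n]$ with $|\xi|\leq 6s$, which amounts to showing that $A\Gamma$ satisfies an RIP-type condition on $6s$-sparse coefficient vectors.

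Next, I would view $\Gamma=[\Phi\ \Psi]\in\mathbb{R}^{n\times 2n}$ as a redundant dictionary with $2n$ unit-norm columns. For $6s$-sparse $t\in\mathbb{R}^{2n}$, the vector $\Gamma t$ lies in a union of $\binom{2n}{6s}$ subspaces of dimension at most $6s$. The $\varepsilon$-incoherence of $\Phi,\Psi$ from Definition~\ref{incoherence} forces $\Gamma_\xi^T\Gamma_\xi$ to be diagonally dominant, and a Ger\v{s}gorin argument gives $(1-c\varepsilon)\|t\|_2^2\leq \|\Gamma t\|_2^2\leq(1+c\varepsilon)\|t\|_2^2$ for an absolute constant $c$, provided $\varepsilon$ is small enough. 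Because the rows of $A$ are independent isotropic subgaussian vectors, the Hanson--Wright inequality (or equivalently the Bernstein bound on subgaussian quadratic forms) gives, for any fixed unit vector $v=\Gamma t/\|\Gamma t\|_2$,
\begin{equation*}
\Pr\!\left(\left|\tfrac{1}{m}\|Av\|_2^2 - 1\right| > \delta\right) \;\leq\; 2\exp(-c'\,m\delta^2).
\end{equation*}

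To upgrade this pointwise bound to uniformity, I would run the standard $\varepsilon_0$-net argument: cover the unit sphere of each fixed $6s$-dimensional support-subspace by a net of cardinality at most $9^{6s}$, and take a union bound over the $\binom{2n}{6s}\leq(en/3s)^{6s}$ choices of support. The total failure probability is bounded by
\begin{equation*}
\exp\!\bigl(-c'\,m\delta^2 + C_1 s\log(2n/s)\bigr),
\end{equation*}
and choosing $m=C_2(s/\delta^2)\log(2n/s)$ makes this exponentially small. Combined with the incoherence estimate above, this yields
\begin{equation*}
(1-\delta)(1-c\varepsilon)\,\|t\|_2^2 \;\leq\; \tfrac{1}{m}\,t^T\Gamma_\xi^T A^T A\,\Gamma_\xi t \;\leq\; (1+\delta)(1+c\varepsilon)\,\|t\|_2^2,
\end{equation*}
uniformly over $|\xi|\leq 6s$, which in turn gives the required bounds on $m_{6s}$ and $M_{6s}$.

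\textbf{Main obstacle.} The delicate point is not the order of magnitude $m=\mathcal{O}(s\log(n/s))$, which is essentially the textbook RIP count; rather, it is making the ratio $M_{6s}/m_{6s}$ small enough to satisfy the hypothesis $1\leq M_{6s}/m_{6s}\leq 2/\sqrt{3}$ of Theorem~\ref{mainThConvergence}. This forces $\delta$ and $\varepsilon$ to be chosen as absolute constants, which in turn constrains both the incoherence level between $\Phi$ and $\Psi$ and the constants $l_1,l_2$ from Assumption~\ref{ass_g}; these constraints are absorbed into the ``incoherent enough'' hypothesis of the statement. The subgaussian concentration is what allows us to avoid the extra $\log^2 s\,\log(s\log n)$ factor that appeared in Theorem~\ref{SampleComplexityNonormal} (which relied on heavier-tailed isotropic concentration \`a la Adamczak--Litvak--Pajor--Tomczak), yielding the sharpened count $m=\mathcal{O}(s\log(n/s))$.
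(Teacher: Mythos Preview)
Your proposal is correct and follows essentially the same route as the paper: sandwich $\nabla^2_\xi F(t)$ between $l_1$ and $l_2$ times $\tfrac{1}{m}\Gamma_\xi^T A^T A\,\Gamma_\xi$, establish a uniform RIP-type bound on $A\Gamma$ over all $6s$-sparse supports via subgaussian concentration plus an $\epsilon$-net and a union bound over $\binom{2n}{6s}$ supports, and control the conditioning of $\Gamma_\xi^T\Gamma_\xi$ by a Ger\v{s}gorin argument. The only cosmetic difference is that the paper expresses the Ger\v{s}gorin bound in terms of the mutual coherence $\gamma$ (obtaining $1\pm(6s-1)\gamma$) rather than the $s$-sparse incoherence $\varepsilon$, but this does not affect the argument or the final sample complexity.
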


The leading big-Oh constant in the expression for $m$ in Theorems~\ref{SampleComplexityNonormal} and~\ref{SampleComplexitysubg} is somewhat complicated, and hides the dependence on the incoherence parameter $\varepsilon$, the mutual coherence $\vartheta$, the RSC/RSS constants, and the growth parameters of the link function $l_1$ and $l_2$. Please see section~\ref{Sec::proof} for more details.

In Theorem~\ref{mainThConvergence}, we expressed the upper bounds on the estimation error in terms of the constituent vector, $t$. It is easy to translate these results in terms of the component vectors $w$ and $z$ using the triangle inequality:
\begin{align*}
\max\{\|w^0-w^*\|_2,\|z^0-z^*\|_2\}\leq\|t^0-t^*\|_2\leq\|w^0-w^*\|_2 + \|z^0-z^*\|_2.
\end{align*}
See Section~\ref{Sec::proof} for proofs and futher details.

\section{Experimental Results}
\label{Sec::Res}

In this section, we provide a range of numerical experiments for our proposed algorithms based on synthetic and real data. We compare the performance of \textsc{OneShot} and \textsc{DHT} with a LASSO-type technique for demixing, as well as a heuristic version of \textsc{OneShot} based on soft thresholding (inspired by the approach proposed in~\cite{yang}). We call these methods \emph{Nonlinear convex demixing with LASSO} or \textsc{(NlcdLASSO)}, and \emph{Demixing with Soft Thresholding} or \textsc{DST}, respectively. Before describing our simulation results, we briefly describe these two methods. 

\textsc{NlcdLASSO} is a \new{heuristic} method \new{motivated by}~\cite{plan2014high}, although it was not explicitly developed in the demixing context. Using our notation from Section \ref{Sec::Perm} and \ref{Sec::Alg}, \textsc{NlcdLASSO} solves the following convex problem:
\begin{equation} \label{prob 6.1}
\begin{aligned}
& \underset{z,w}{\text{min}}
& &  \big{\|}\widehat{x}_{\lin} - [\Phi~\Psi][w ; z] \big{\|}_2\\
& \text{subject to} 
& & \|w\|_1\leq \sqrt{s},\quad\|z\|_1\leq \sqrt{s}.
\end{aligned}
\end{equation}
Here, $\widehat{x}_{\lin}$ denotes the proxy of $x$ (equal to $\frac{1}{m} A^T y$) and $s$ denotes the sparsity level of signals $w$ and $z$ in basis $\Phi$ and $\Psi$, respectively. The constraints in problem \eqref{prob 6.1} are convex penalties reflecting the knowledge that $w$ and $z$ are $s$-sparse and have unit $\ell_2$-norm (since the
nonlinearity is unknown, we have a scale ambiguity, and therefore w.l.o.g. we can assume that the
underlying signals lie in the unit ball). 
The outputs of this algorithm are the estimates $\widehat{w}$, $\widehat{x}$, and $\widehat{x} = \Phi\widehat{w} + \Psi\widehat{z}.$

To solve the optimization problem in \eqref{prob 6.1}, we have used the SPGL1 solver \cite{Berg2008,spgl12007}. This solver can handle large scale problems, which is the scenario that we have used in our experimental evaluations. We impose the joint constraint $\|t\|_1 = \| [w; z] \|_1 \leq 2\sqrt{s}$ which is a slight relaxation of the constraints in ~\ref{prob 6.1}. The upper-bound of $\sqrt{s}$ in the constraints is a worst-case criterion; therefore, for a fairer comparison, we also include simulation results with the constraint $\|t\|_1\leq \varrho$, where $\varrho$ has been tuned to the best of our ability. 

On the other hand, \textsc{DST} solves the optimization problem~\eqref{optprob} via a convex relaxation of the sparsity constraint. In other words, this method attempts to  solve the following relaxed version of the problem~\eqref{optprob}:

\begin{equation}
\begin{aligned}
& \underset{t}{\text{min}}
\ \ \frac{1}{m}\sum_{i=1}^m \Theta(a_i^T\Gamma t) - y_i a_i^T\Gamma t + \beta'\|t\|_1,
\end{aligned}
\end{equation}
where $\|t\|_1$ represents $l_1$-norm of the constituent vector $t$ and $\beta' >0$ denotes the tuning parameter. The solution of this problem at iteration $k$ is given by soft thresholding operator as follows:
\begin{align*}
t^{k+1} = S_{\beta'\eta^{\prime}}(t^k - \eta^{\prime}\nabla F(t^k)),
\end{align*}
where $\eta^{\prime}$ denotes the step size, and the soft thresholding operator, $S_{\lambda}(.)$ is given by:
\[
    S_{\lambda}(y) =
\begin{cases}
    y-\lambda \, , \ & \text{if } y> \lambda\\
    0        \, ,       \ & \text{if } |y| \leq\lambda\\
    y+\lambda  \, , \ & \text{if } y< -\lambda.
\end{cases}
\]

Both \textsc{OneShot} and \textsc{NlcdLASSO} do not assume knowledge of the link function, and consequently return a solution up to a scalar ambiguity. Therefore, to compare performance across algorithms, we use the (scale-invariant) \textit{cosine similarity} between the original superposition signal $x$ and the output of a given algorithm $\widehat{x}$ defined as: 
$$\cos(x,\widehat{x}) = \frac{x^T\widehat{x}}{\|x\|_2\|\widehat{x}\|_2}.$$ 
 
\subsection{Synthetic Data}
As discussed above, for successful recovery we require the constituent signals to be sufficiently incoherent. To achieve this, we choose $\Phi$ to be the 1D Haar wavelets basis, and $\Psi$ to be the noiselet basis\footnote{These bases are known to be maximally incoherent relative to each other~\cite{coifman2001noiselets}}. For the measurement operator $A$, we choose a partial DFT matrix. Such matrices are known to have similar recovery performance as random Gaussian matrices, but enable fast numerical operations~\cite{candes2006robust}. Also, we present our experiments based on both non-smooth as well as differentiable link functions. For the non-smooth case, we choose $g(x) = \text{sign}(x)$; here, we only present recovery results using \textsc{OneShot} and \textsc{NlcdLASSO} since in our analysis \textsc{DHT} and \textsc{DST} can only handle smooth link functions. 

The results of our first experiment are shown in Figure~\ref{figCos}(a) and Figure~\ref{figCos}(b). The test signal is generated as follows: set length $n={2^{20}}$, and generate the vectors $w$ and $z$  by randomly selecting a signal support with $s$ nonzero elements, and populating the nonzero entries with random $\pm 1$ coefficients. The plot illustrates the performance of \textsc{Oneshot} and \textsc{NlcdLASSO} measured by the cosine similarity for different choices of sparsity level $s$, where the nonlinear link function is set to $g(x) = \text{sign(x)}$ \new{and we have used both $\|t\|_1\leq 2\sqrt{s}$ and $\|t\|_1\leq \varrho$ constraints}. The horizontal axis denotes an increasing number of measurements. Each data point in the plot is obtained by conducting a Monte Carlo experiment in which a new random measurement matrix $A$ is generated, recording the cosine similarity between the true signal $x$ and the reconstructed estimate and averaging over $20$ trials. 

\begin{figure}[t]
\begin{center}
\begin{tabular}{ccc}
\includegraphics[trim = 5mm 2mm 5mm 5mm, clip, width=0.32\linewidth]{./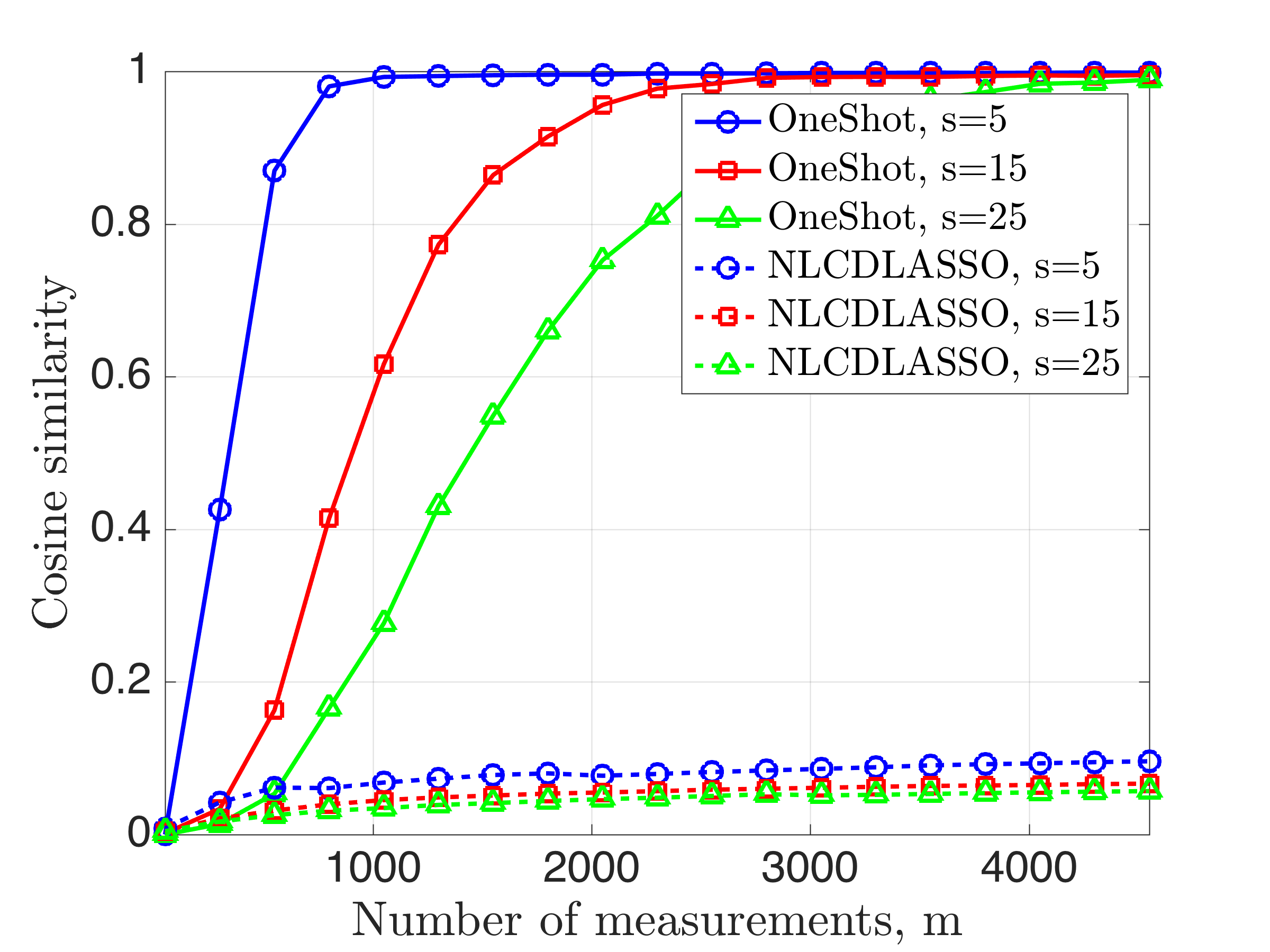} & 
\includegraphics[trim = 5mm 2mm 5mm 5mm, clip, width=0.32\linewidth]{./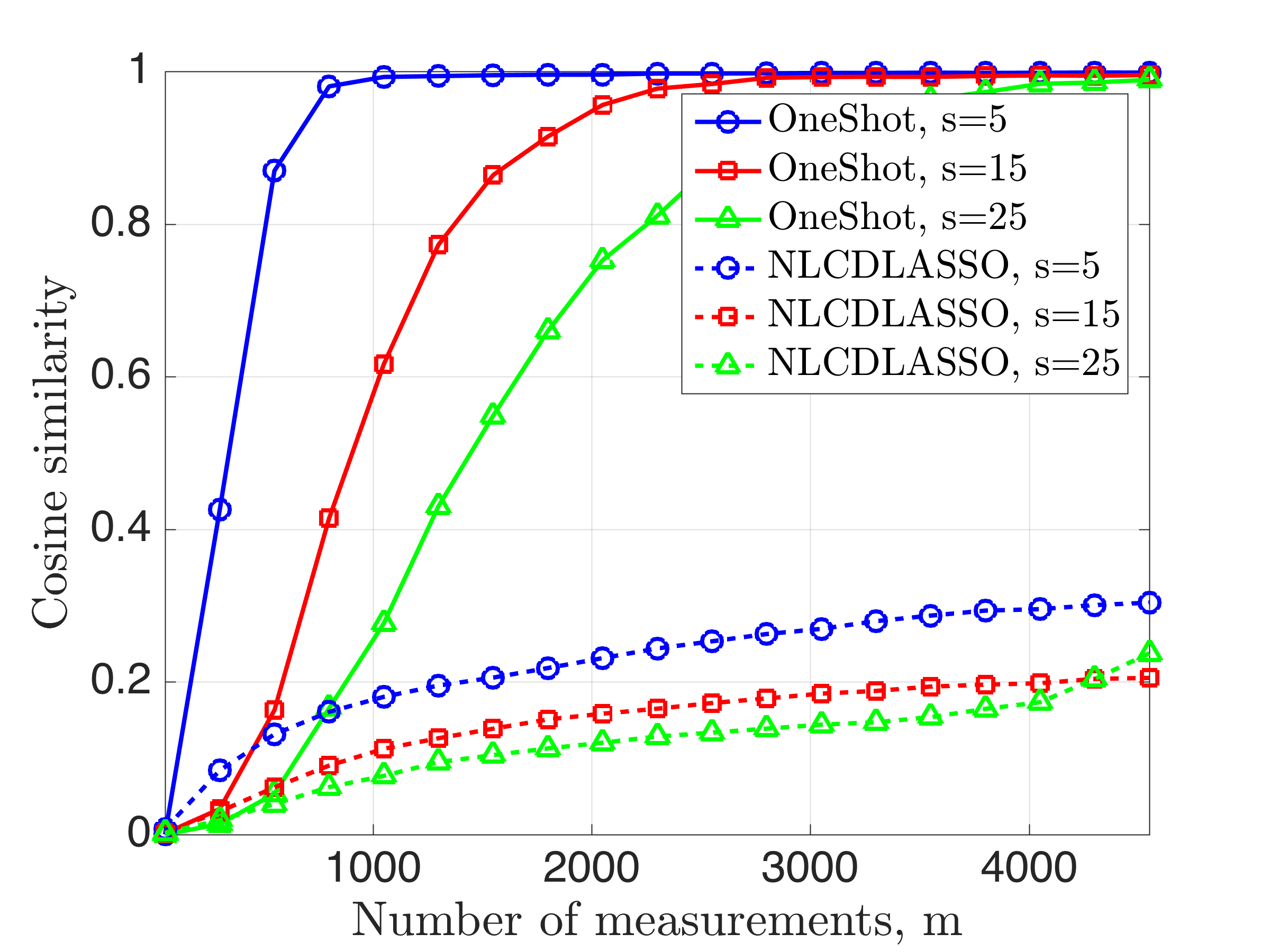} & 
\includegraphics[trim = 5mm 1mm 5mm 5mm, clip, width=0.32\linewidth]{./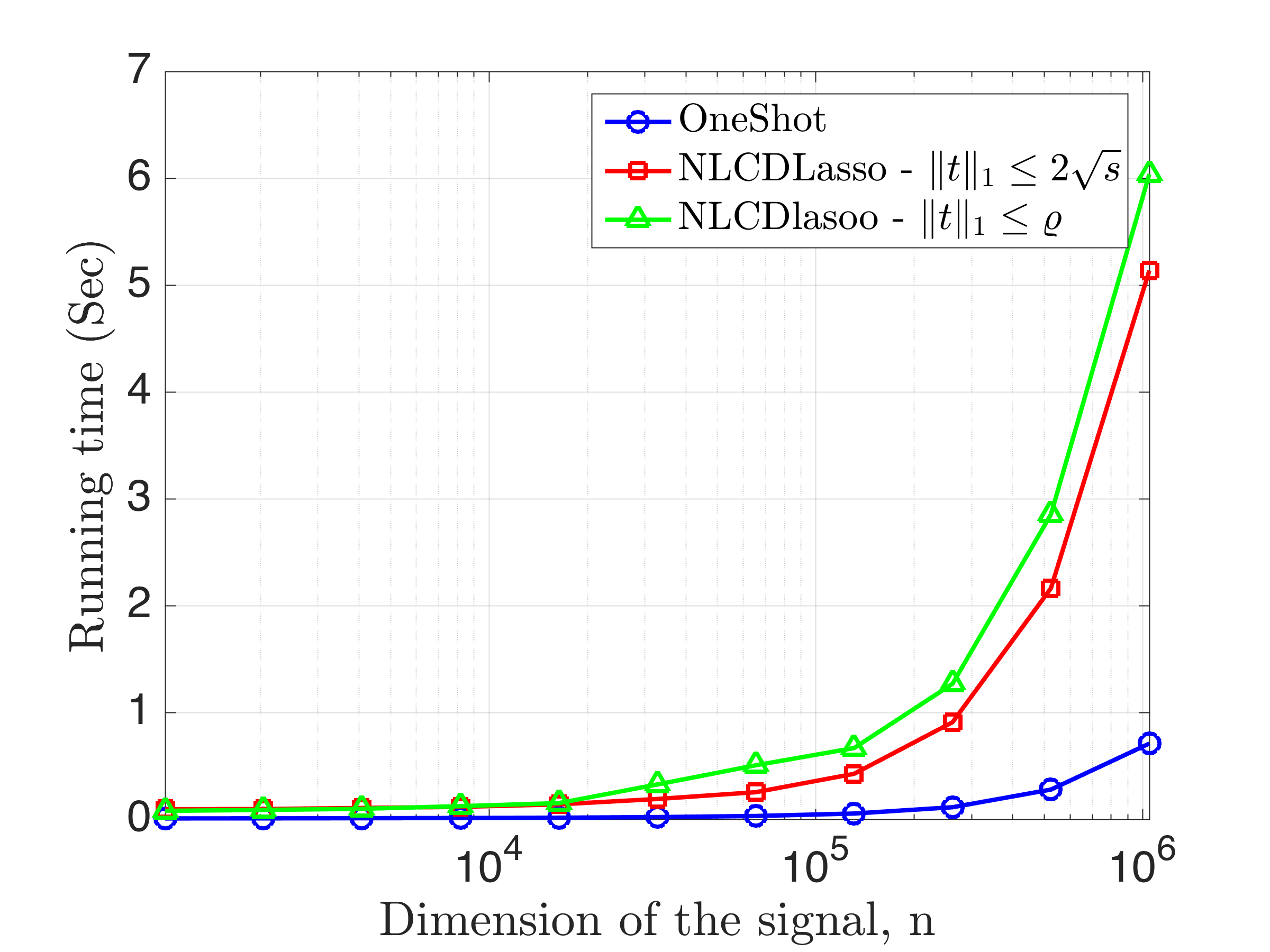} \\
(a)  & (b)  & (c) 
\end{tabular}
\end{center}
\vskip -.2in\caption{\small\emph{\new{Performance of \textsc{OneShot} and \textsc{NlcdLASSO} according to the \textsc{Cosine Similarity} for different choices of sparsity level $s$ for $g(x) = sign(x)$. (a) \textsc{NlcdLASSO} with $\|t\|_1\leq 2\sqrt{s}$. (b) \textsc{NlcdLASSO} with $\|t\|_1\leq \varrho$. (c) Comparison of running times of \textsc{OneShot} with \textsc{NlcdLASSO}.} }}
\label{figCos}
\end{figure}

As we can see, notably, the performance of \textsc{NlcdLASSO} is worse than \textsc{OneShot} for any fixed choice of $m$ and $s$ no matter what upper bound we use on $t$. Even when the number of measurements is high (for example, at $m = 4550$ in plot (b)), we see that \textsc{OneShot} outperforms \textsc{NlcdLASSO} by a significant degree. In this case, \textsc{NlcdLASSO} is at least $70\%$ worse in terms of signal estimation quality, while \textsc{OneShot} recovers the (normalized) signal perfectly. This result indicates the inefficiency of \textsc{NlcdLASSO} for nonlinear demixing.

%

Next, we contrast the running time of both algorithms, illustrated in Figure \ref{figCos}(c). In this experiment, we measure the wall-clock running time of the two recovery algorithms (\textsc{OneShot} and \textsc{NlcdLASSO}), by varying signal size $x$ from $n = 2^{10}$ to $n = 2^{20}$.  Here, we set $m = 500$, $s=5$, and the number of Monte Carlo trials to $20$. Also, the nonlinear link function is considered as $g(x)=\text{sign}(x)$. As we can see from the plot, \textsc{OneShot} is at least $6$ times faster than \textsc{NlcdLASSO} when the size of signal equals to $2^{20}$. Overall, \textsc{OneShot} is efficient even for large-scale nonlinear demixing problems.  We mention that in the above setup, the main computational costs incurred in \textsc{OneShot} involve a matrix-vector multiplication followed by a thresholding step, both of which can be performed in time that is \emph{nearly-linear} in terms of the signal length $n$ for certain choices of $A,\Phi,\Psi$
%

Next, we turn to differentiable link functions. In this case, we generate the constituent signal coefficient vectors, $w, z$ with $n= 2^{16}$, and compare performance of the four above algorithms. The nonlinear link function is chosen to be $g(x) = 2x + \sin(x)$; it is easy to check that the derivative of this function is strictly bounded between $l_1 = 1$ and $l_2 = 3$. The maximal number of iterations for both \textsc{DHT} and \textsc{DST} is set to to $1000$ with an early stopping criterion if convergence is detected. The step size is hard to estimate in practice, and therefore is chosen by manual tuning such that both \textsc{DHT} and \textsc{DST} obtain the best respective performance.

 Figure~\ref{PT} illustrates the performance of the four algorithms in terms of \emph{phase transition} plots, following~\cite{mccoyTropp2014}. In these plots, we varied both the sparsity level $s$ and the number of measurements $m$. For each pair $(s,m)$, as above we randomly generate the test superposition signal by choosing both the support and coefficients of $x$ at random, as well as the measurement matrix. We repeat this experiment over $20$ Monte Carlo trials.  We calculate the empirical probability of successful recovery as the number of trials in which the output cosine similarity is greater than $0.99$. Pixel intensities in each figure are normalized to lie between 0 and 1, indicating the probability of successful recovery.


\begin{figure}[t]
\begin{center}
\begin{tabular}{cccc}
\hskip -.2in
\includegraphics[trim = 22mm 1mm 22mm 5mm, clip, width=0.24\linewidth]{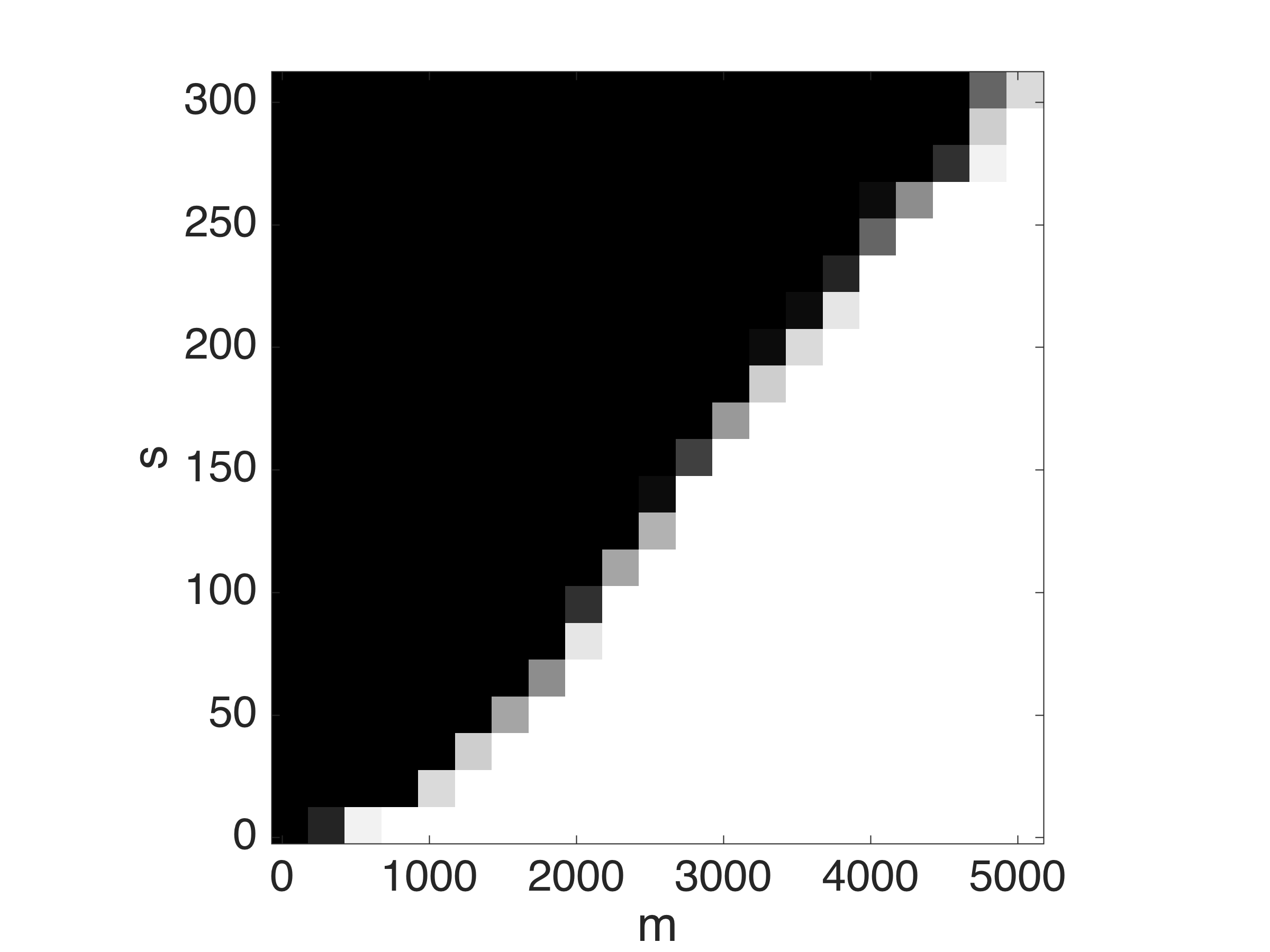} &
\includegraphics[trim = 22mm 1mm 22mm 5mm, clip, width=0.24\linewidth]{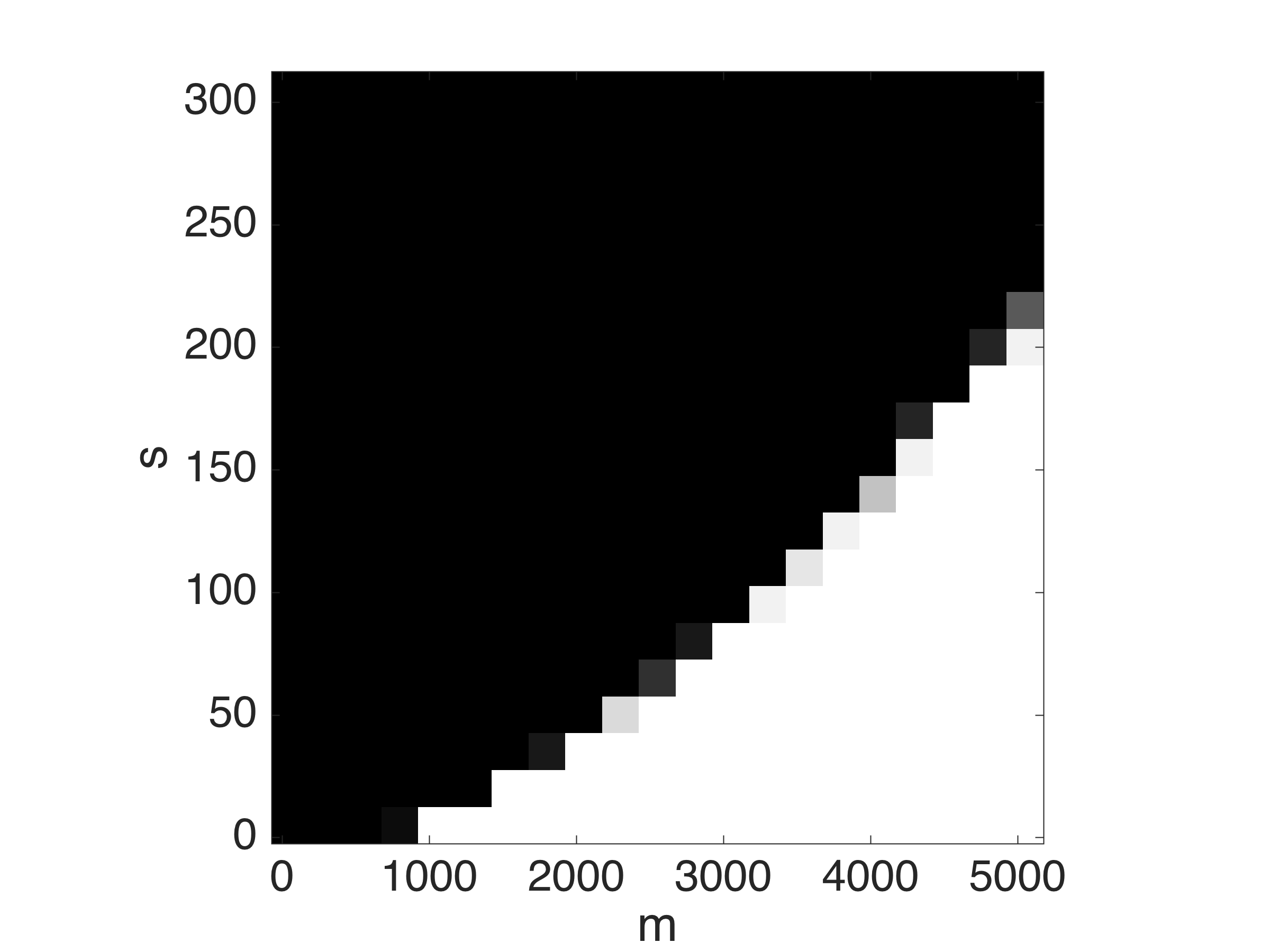} &
\includegraphics[trim = 22mm 1mm 22mm 5mm, clip, width=0.24\linewidth]{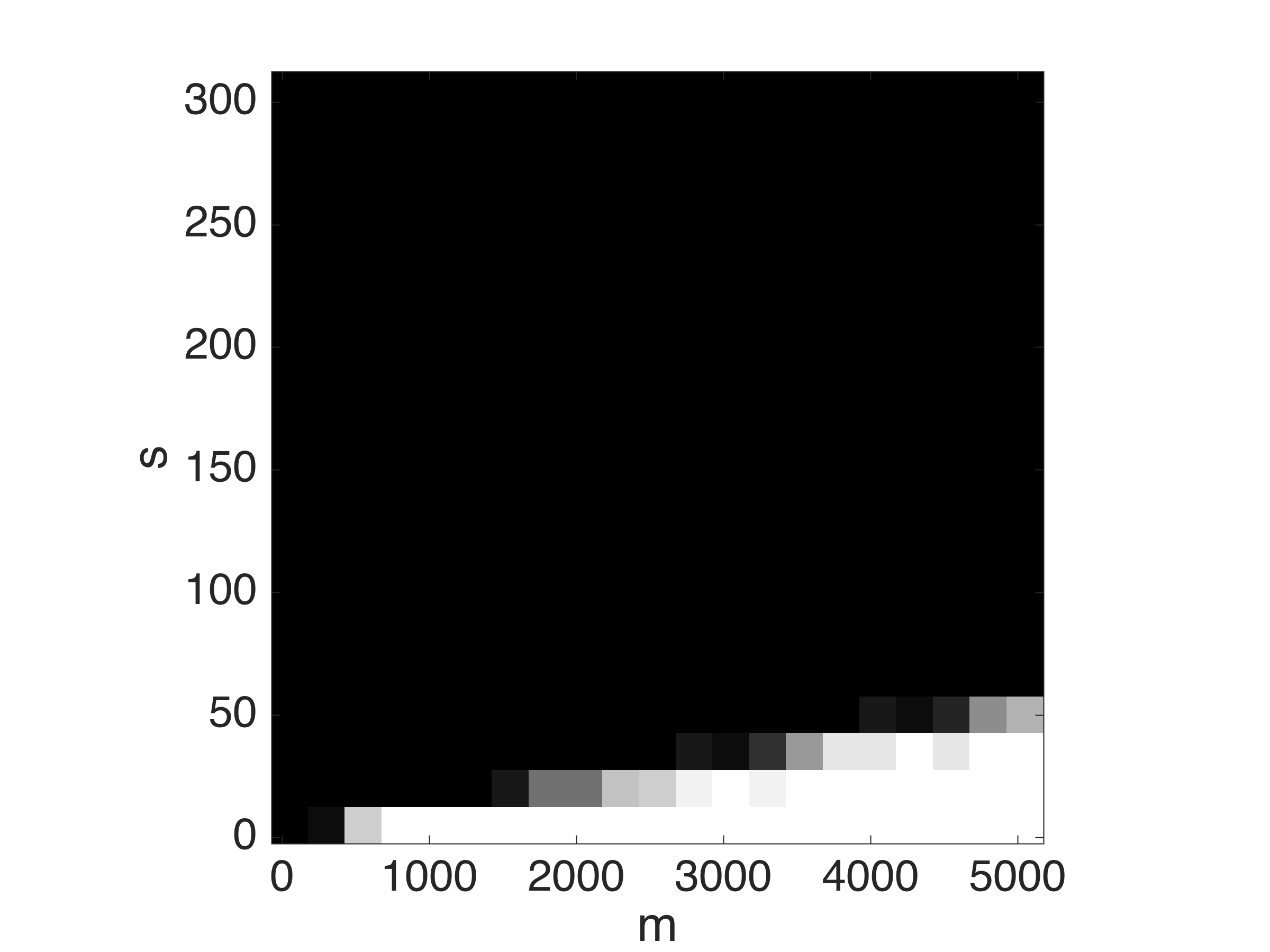} & 
\includegraphics[trim = 22mm 1mm 22mm 5mm, clip, width=0.24\linewidth]{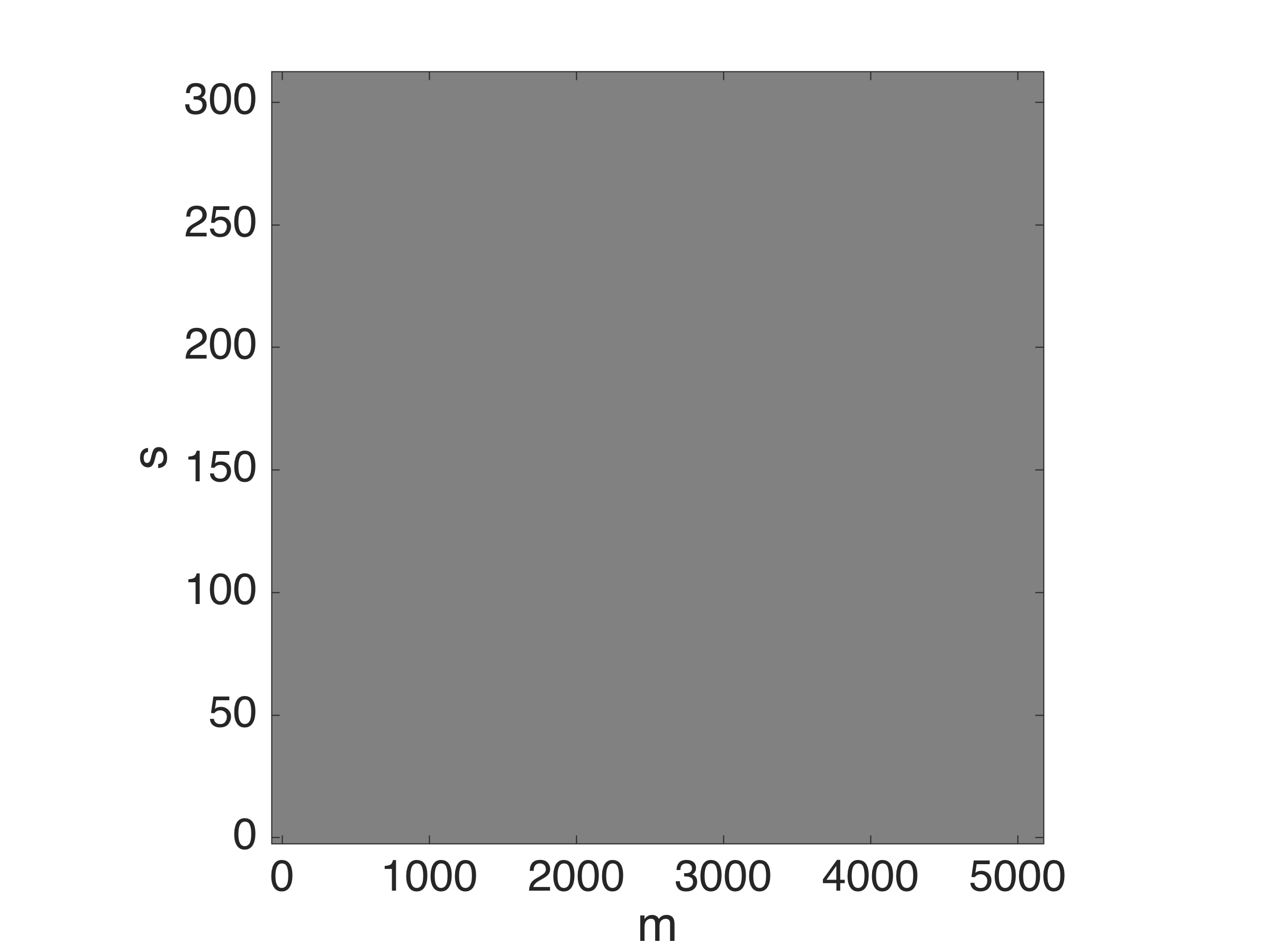} \\
 (a) DHT & (b) DST &
 (c) \textsc{OneShot} & (d) \textsc{NlcdLASSO} 
\end{tabular}
\end{center}
\vskip -.2in \caption{\emph{Phase transition plots of various algorithms for solving the demixing problem~\eqref{MainModell} as a function of sparsity level $s$ and number of measurements $m$ with cosine similarity as the criterion. Dimension of the signals $n= 2^{16}$.}}\label{PT}
\end{figure}

As we observe in Fig.~\ref{PT}, \textsc{DHT} has the best performance among the different methods, and in particular, outperforms both the convex-relaxation based methods. The closest algorithm to \textsc{DHT} in terms of the signal recovery is \textsc{DST}, while the \textsc{LASSO}-based method fails to recover the superposition signal $x$ (and consequently the constituent signals $w$ and $z$).  The improvements over \textsc{OneShot} are to be expected since as discussed before, this algorithm does not leverage the knowledge of the link function $g$ and is not iterative. 

In Fig.~\ref{4algProb}, we fix the sparsity level $s=50$ and plot the probability of recovery of different algorithms with a varying number of measurements. The number of Monte Carlo trials is set to 20 and the empirical probability of successful recovery is defined as the number of trials in which the output cosine similarity is greater than $0.95$. The nonlinear link function is set to be $g(x) = 2x + \sin(x)$ for figure (a) and $g(x) = \frac{1}{1+e^{-x}}$ for figure (b). As we can see, \textsc{DHT} has the best performance, while \textsc{NlcdLASSO} for figure (a) and \textsc{Oneshot}, and \textsc{NlcdLASSO} for figure (b) cannot recover the superposition signal even with the maximum number of measurements.

\begin{figure}[h]
\begin{center}
\begin{tabular}{cc}
\includegraphics[trim = 5mm 65mm 22mm 60mm, clip, width=0.42\linewidth]{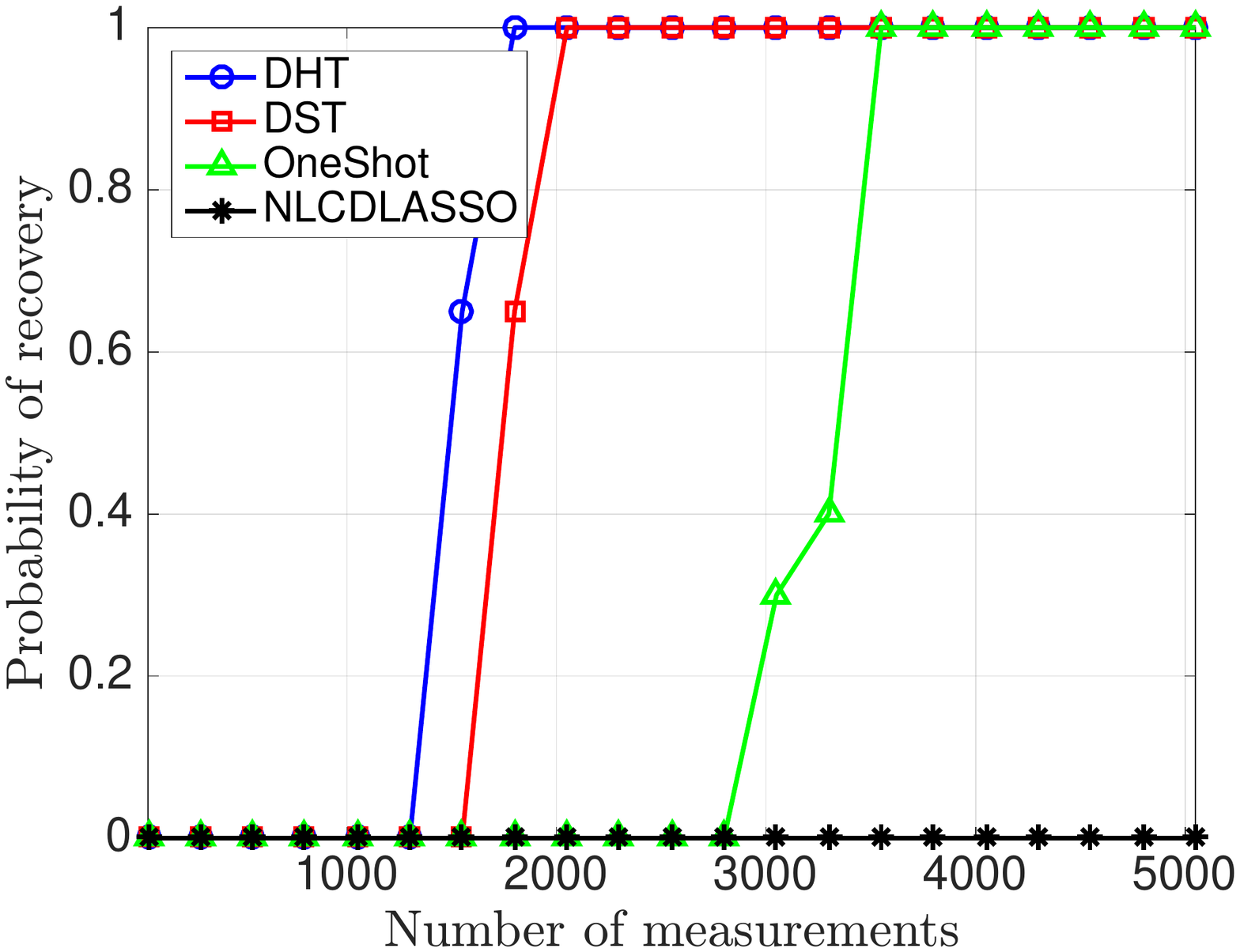}&
\includegraphics[trim = 5mm 65mm 22mm 60mm, clip, width=0.42\linewidth]{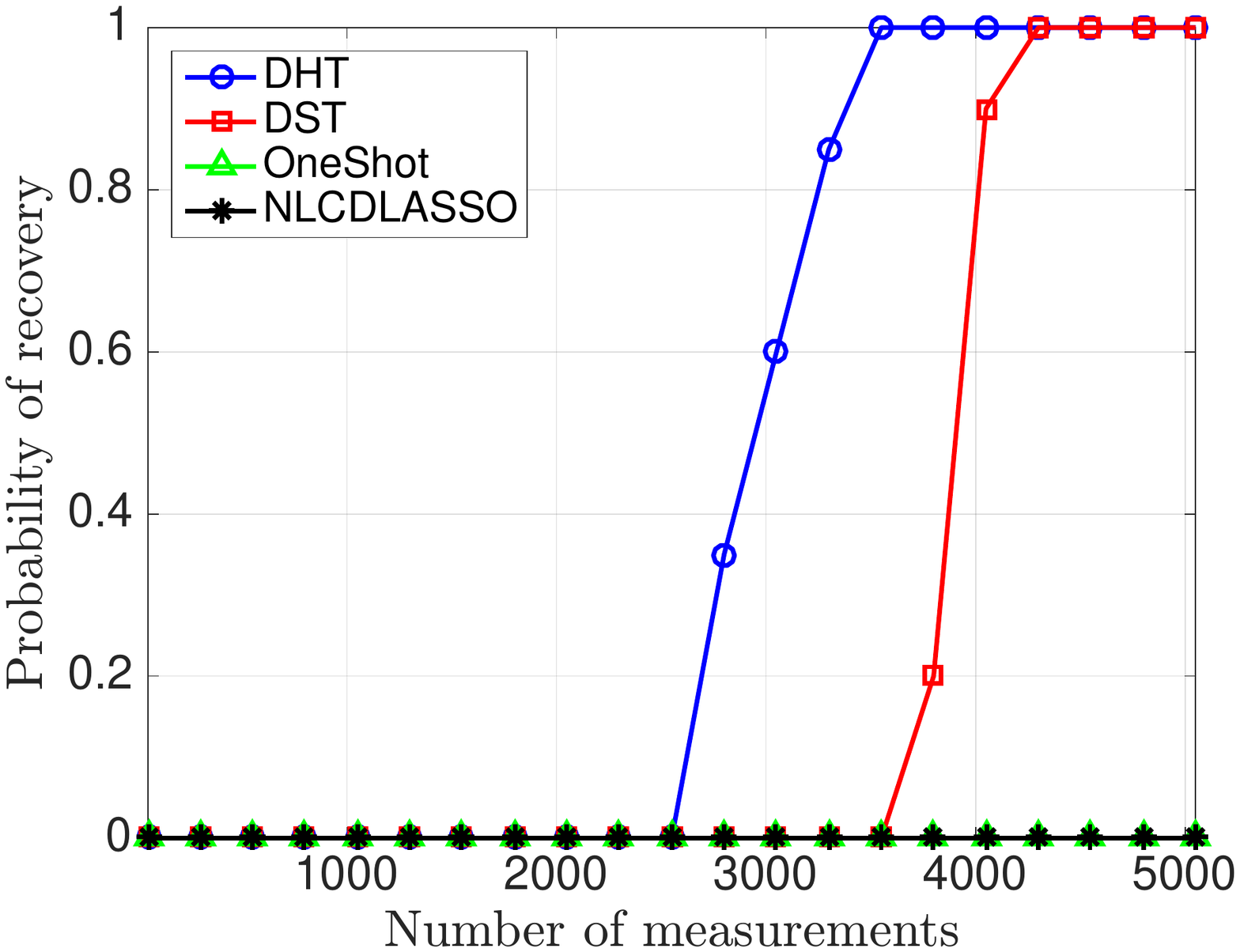}\\
(a) & (b) 
\end{tabular}
\end{center}
\vskip -.2in\caption{\emph{Probability of recovery for four algorithms; \textsc{DHT}, \textsc{STM}, \textsc{Oneshot}, and \textsc{NlcdLASSO}. Sparsity level is set to $s =50$ and dimension of the signals equal to $n= 2^{16}$. (a) $g(x) = 2x + \sin(x)$, (b) $g(x) = \frac{1}{1+e^{-x}}$.}}
\label{4algProb}
\end{figure}

\subsection{Real Data}

In this section, we provide representative results on real-world 2D image data using \textsc{Oneshot} and \textsc{NlcdLASSO} for non-smooth link function given by $g(x) = \text{sign}(x)$. In addition, we illustrate results for all four algorithms using smooth $g(x) = \frac{1-e^{-x}}{1+e^{-x}}$ as our link function.

We begin with a $256\times 256$ test image. First, we obtain its 2D Haar wavelet decomposition and retain the $s=500$ largest coefficients, denoted by the $s$-sparse vector $w$. Then, we reconstruct the image based on these largest coefficients, denoted by $\widehat{x} = \Phi w$. Similar to the synthetic case, we generate a noise component in our superposition model based on 500 noiselet coefficients $z$. In addition, we consider a parameter which controls the strength of the noiselet component contributing to the superposition model. We set this parameter to 0.1. Therefore, our test image $x$ is given by $x = \Phi w + 0.1\Psi z$.

\vskip -.55in
\begin{figure}[!h]
\begin{center}
\begin{tabular}{ccc}
\includegraphics[trim = 27mm 83mm 35mm 22mm, clip, width=0.282\linewidth]{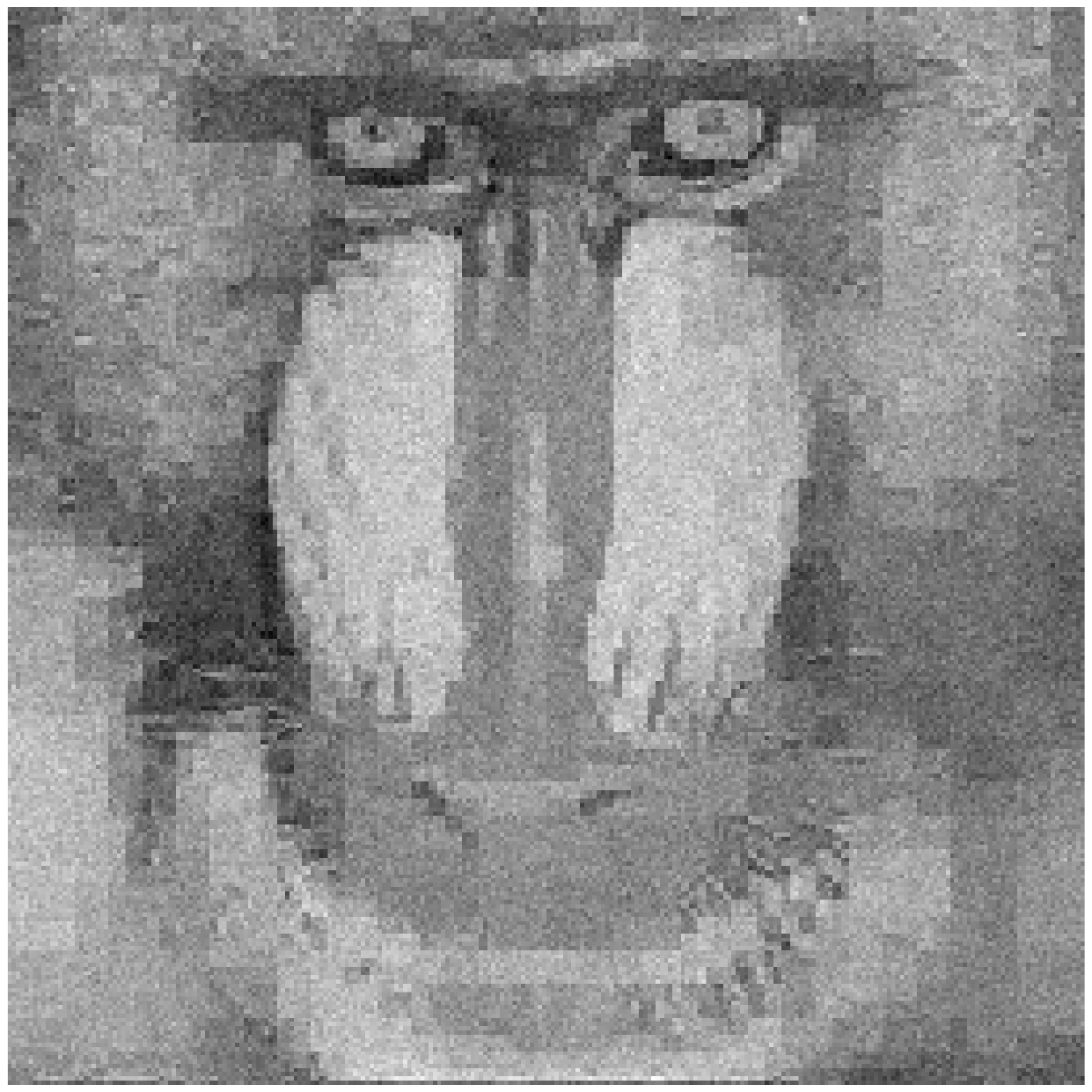} &
\includegraphics[trim = 27mm 83mm 35mm 22mm, clip, width=0.282\linewidth]{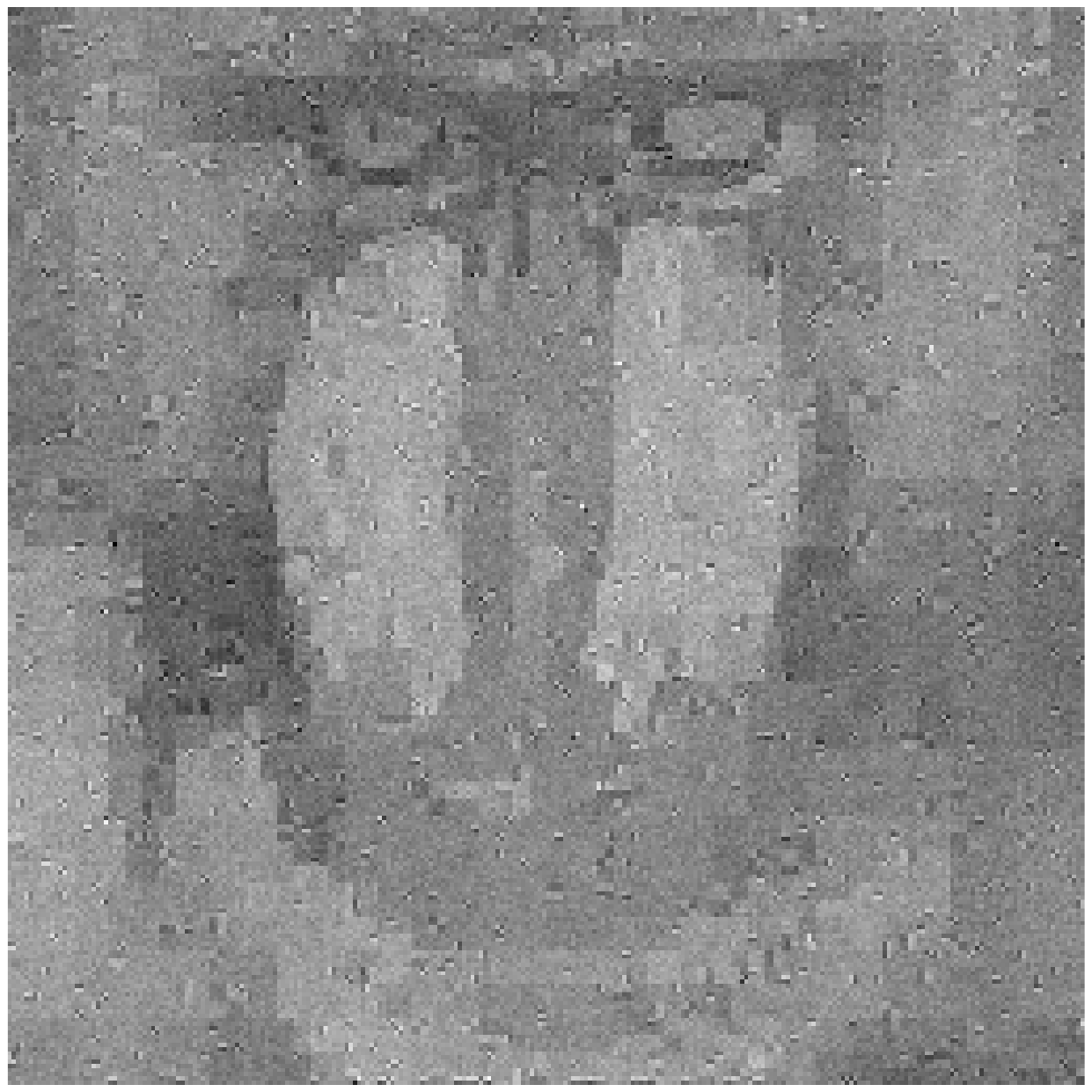} &
\hskip .21in\includegraphics[trim = 2mm 0mm 1mm 1.5mm, clip, width=0.228\linewidth]{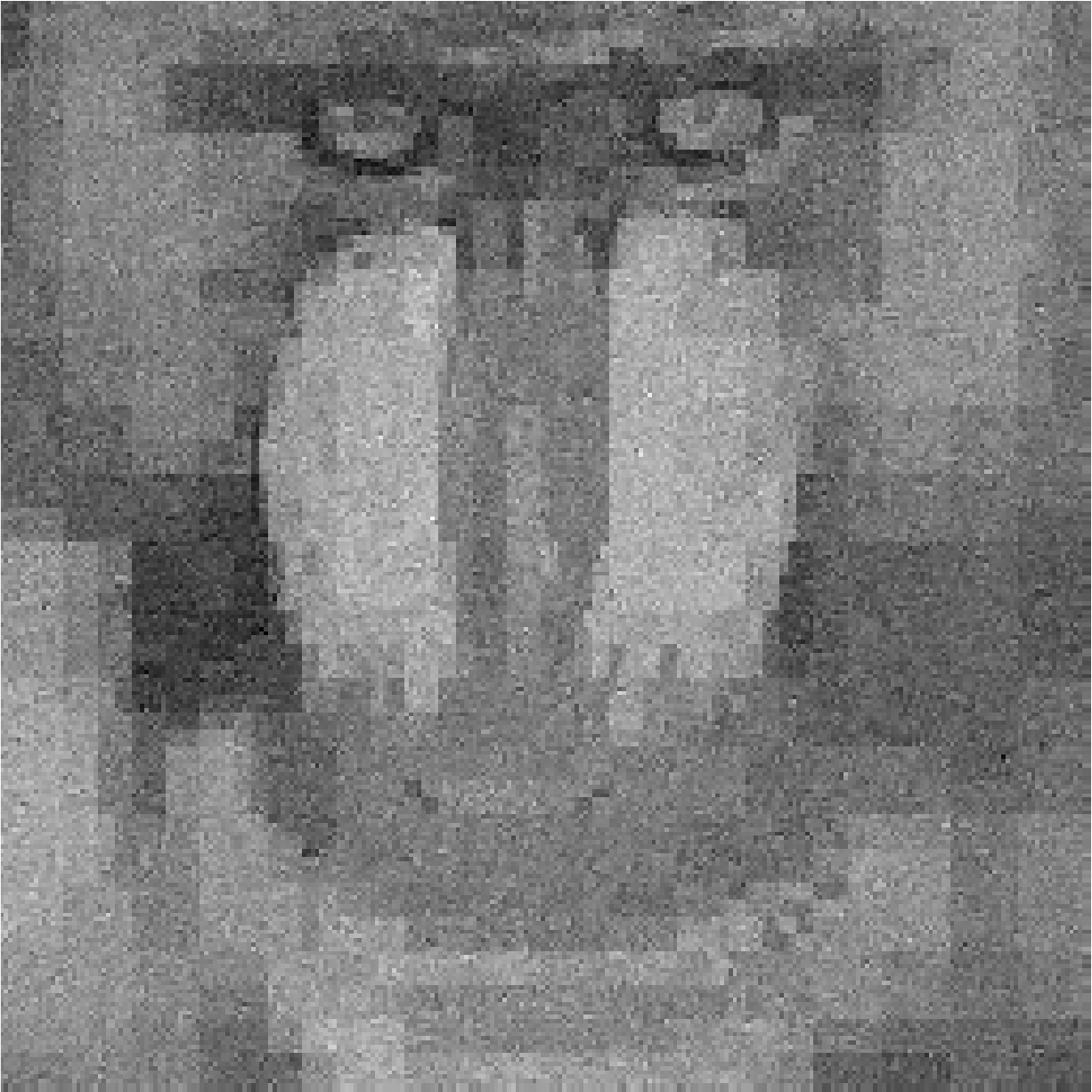} \\
$x$ & $\widehat{x}~(\textsc{OneShot})$ &  $\widehat{x}~(\textsc{NlcdLasso})$  
\end{tabular}
\end{center}
\vskip -.2in\caption{\emph{Comparison of \textsc{Oneshot} and \textsc{NlcdLASSO} for real 2D image data from nonlinear under-sampled observations. Parameters: $n = 256 \times 256, s = 500, m = 35000, g(x) = sign(x)$.}}\label{real2d}
\end{figure}

Figure~\ref{real2d} illustrates both the true and the reconstructed images $x$ and $\widehat{x}$ using \textsc{Oneshot} and \textsc{NlcdLASSO}. The number of measurements is set to $35000$ (using subsampled Fourier matrix with $m=35000$ rows). From visual inspection we see that the reconstructed image, $\widehat{x}$, using \textsc{Oneshot} is better than the reconstructed image by \textsc{NlcdLASSO}. Quantitatively, we also calculate Peak signal-to-noise-ratio (PSNR) of the reconstructed images using both algorithms relative to the test image, $x$. We obtain
PSNR of 19.8335 dB using \textsc{OneShot}, and a PSNR of 17.9092 dB using {\textsc{NlcdLASSO}},
again illustrating the better performance of \textsc{Oneshot} compared to \textsc{NlcdLASSO}.

Next, we show our results using a differentiable link function. For this experiment, we consider an astronomical image illustrated in Fig.~\ref{fig:StarGla}. This image includes two components; the ``stars" component, which can be considered to be sparse in the identity basis ($\Phi$), and the ``galaxy" component which are sparse when they are expressed in the discrete cosine transform basis ($\Psi$). The superposition image $x = \Phi w + \Psi z$ is observed using a subsampled Fourier matrix with $m=15000$ rows multiplied with a diagonal matrix with random $\pm 1$ entries~\cite{krahmer2011new}. Further, each measurement is nonlinearly transformed by applying the (shifted) logistic function $g(x) = \frac{1}{2}\frac{1-e^{-x}}{1+e^{-x}}$ as the link function. In the recovery procedure using DHT, we set the number of iterations to $1000$ and step size $\eta'$ to $150000$. As is visually evident, our proposed DHT method is able to reliably recover the component signals.

\begin{figure}
\begin{center}
\begingroup
\setlength{\tabcolsep}{.1pt} 
\renewcommand{\arraystretch}{.1} 
\begin{tabular}{ccc}      
\includegraphics[trim = 27mm 5mm 35mm 5mm, clip, width=0.27\linewidth]{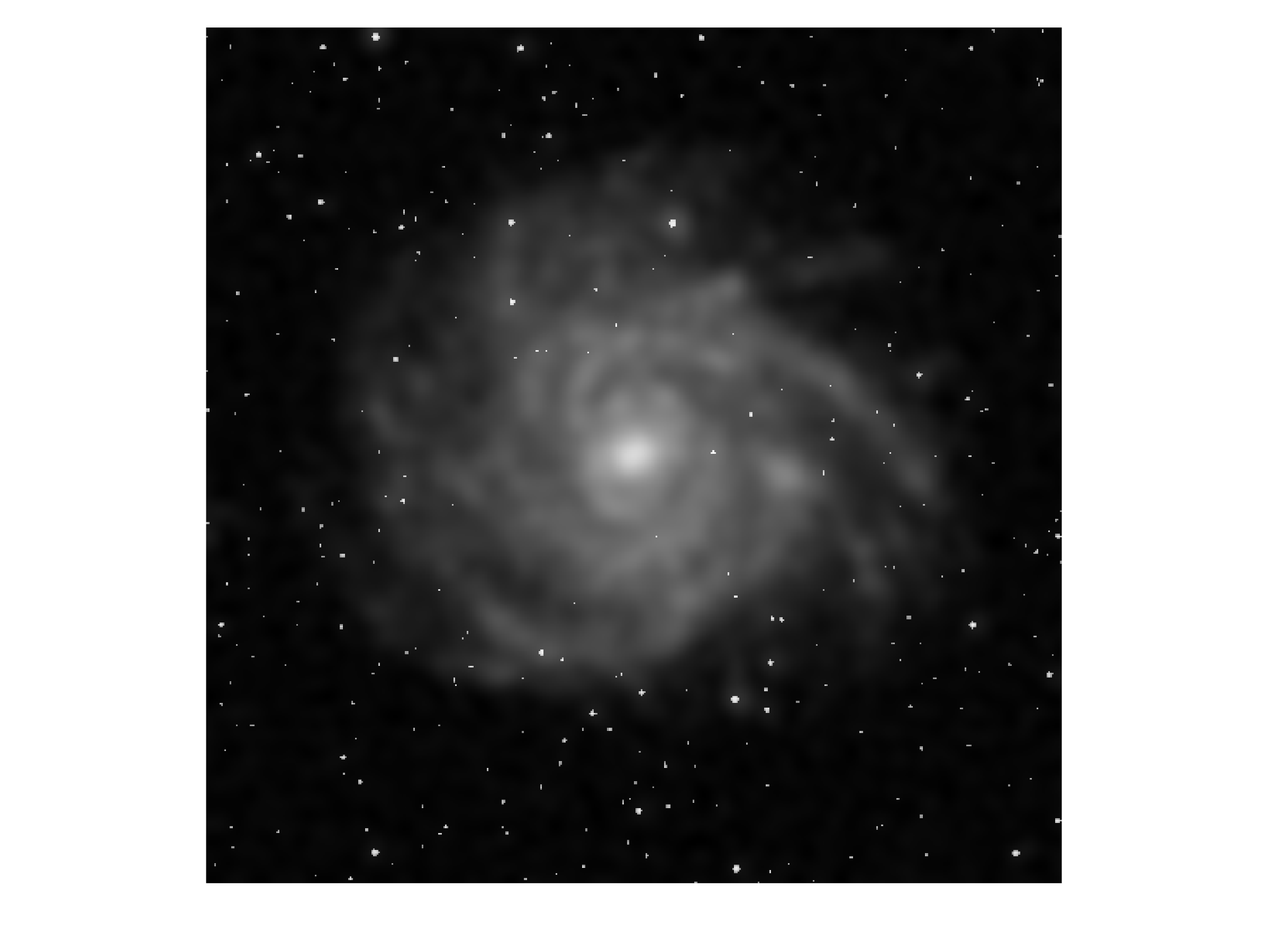}&
\includegraphics[trim = 27mm 5mm 35mm 5mm, clip, width=0.27\linewidth]{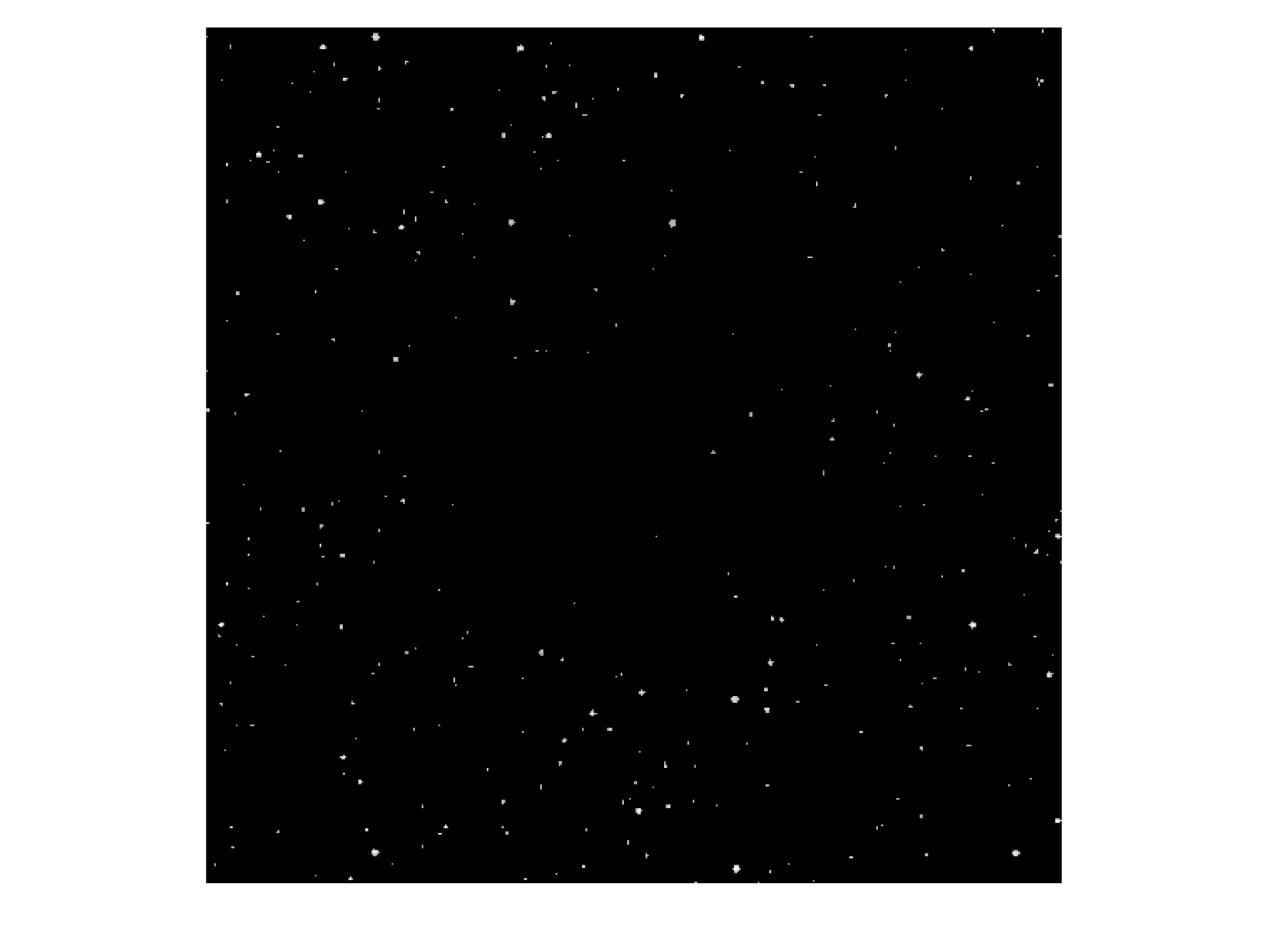}&
\includegraphics[trim = 27mm 5mm 35mm 5mm, clip, width=0.27\linewidth]{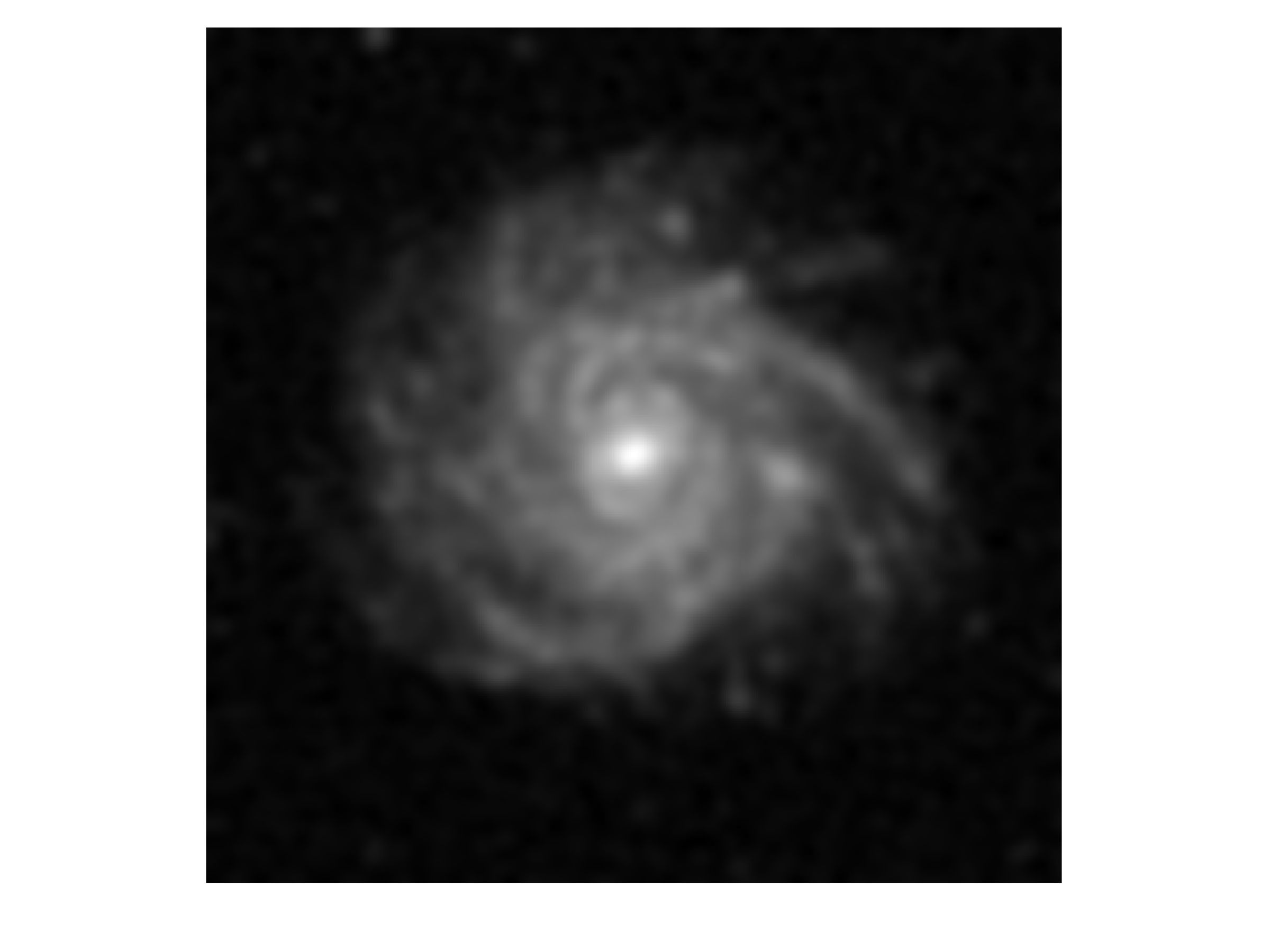}\\
(a) Original {\small$x$} & (b) {\small$\Phi(\widehat{w})$} & {\small$\Psi(\widehat{z})$}
\end{tabular}
\endgroup
\end{center}
\caption{\emph{Demixing a real 2-dimensional image from nonlinear observations with \textsc{DHT}. Parameters: $n = 512 \times 512, s = 1000, m = 15000, g(x) = \frac{1}{2}\frac{1-e^{-x}}{1+e^{-x}}$. Image credits: NASA, \cite{mccoy2014convexity}.}}
\label{fig:StarGla}
\end{figure}

\section{Proofs}
\label{Sec::proof}

In this section, we derive the proofs of our theoretical results stated in Section~\ref{Sec::Alg}. 


\subsection{Analysis of \textsc{OneShot}}
Our analysis mostly follows the techniques of \cite{plan2014high}. However, several additional complications in the proof arise due to the structure of the demixing problem. 
As a precursor, we need the following lemma from geometric functional analysis, restated from~\cite{plan2014high}.

\begin{lemma}\label{lemma 5.1}
Assume $K$ is a closed star-shaped set. Then for $u\in K$, and $a\in \mathbb{R}^n$, one has the following result $\forall~t > 0$:
\begin{equation}\label{eq 5.1}
\|\mathcal{P}_K(a) - u\|_2\leq\max \left(t, \frac{2}{t}\|a - u\|_{K_t^o}\right).
\end{equation}
\end{lemma}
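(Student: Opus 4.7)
The plan is a case analysis on whether $\|\mathcal{P}_K(a) - u\|_2$ already lies below $t$. Writing $\hat{x} := \mathcal{P}_K(a)$, the easy case $\|\hat{x} - u\|_2 \leq t$ makes the claimed inequality trivial (the first term in the max already dominates), so the real content lies in the complementary case $\|\hat{x} - u\|_2 > t$, where I would exploit the star-shaped structure of $K$ to produce a feasible auxiliary point that activates the polar-norm bound over $K_t$.

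The key construction is to slide from $u$ toward $\hat{x}$ by exactly length $t$. Setting $\beta := t/\|\hat{x} - u\|_2 \in (0,1)$ and $z := u + \beta(\hat{x} - u)$, the star-shaped hypothesis on $K$ gives $z \in K$, and by construction $z - u \in (K - K)$ with $\|z - u\|_2 = t$, so $z - u \in K_t$. I would then apply the optimality of $\hat{x}$ against $z$, $\|a - \hat{x}\|_2^2 \leq \|a - z\|_2^2$, expand both sides around the common reference point $u$ by writing $a - \hat{x} = (a - u) - (\hat{x} - u)$ (similarly for $z$), and simplify using the identity $\hat{x} - z = \tfrac{1 - \beta}{\beta}(z - u)$, to arrive at
\begin{equation*}
\|\hat{x} - u\|_2^2 - t^2 \leq \frac{2(\|\hat{x} - u\|_2 - t)}{t}\,\langle a - u,\, z - u\rangle.
\end{equation*}
Bounding the inner product on the right by $\|a - u\|_{K_t^o}$ (using $z - u \in K_t$), factoring the left-hand side as $(\|\hat{x} - u\|_2 - t)(\|\hat{x} - u\|_2 + t)$, and canceling the positive factor $\|\hat{x} - u\|_2 - t$ gives $\|\hat{x} - u\|_2 + t \leq \tfrac{2}{t}\|a - u\|_{K_t^o}$, which yields the desired bound in this case.

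The main obstacle I anticipate is verifying $z \in K$, i.e., interpreting the star-shaped hypothesis strongly enough to allow sliding from $u$ toward $\hat{x}$ inside $K$. For convex $K$ this is trivial, and if $K$ is star-shaped with respect to every one of its points it follows by definition. If instead $K$ is only star-shaped at the origin, a minor workaround is needed: one can work directly with $\beta(\hat{x} - u) = \beta\hat{x} - \beta u$, whose two summands each lie in $K$ by the origin-star-shaped property, so that $\beta(\hat{x}-u) \in (K-K) \cap t B_2^n = K_t$; the same algebra then goes through with this element playing the role of $z - u$. Modulo this structural check, the remainder of the argument is routine manipulation of the projection-optimality inequality and the definition of the polar norm.
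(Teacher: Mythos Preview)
The paper does not actually prove this lemma; it is restated from \cite{plan2014high} and used as a black box, so there is no in-paper argument to compare against. Your proof is essentially the standard one and is correct in spirit.

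One small imprecision in your workaround: when $K$ is only star-shaped at the origin and you replace $z-u$ by $v:=\beta(\hat{x}-u)\in K_t$, you can no longer invoke the optimality inequality $\|a-\hat{x}\|_2^2\le\|a-z\|_2^2$, because that step required $z=u+v\in K$. So ``the same algebra'' does not go through verbatim. The fix is immediate, though: use optimality of $\hat{x}$ against $u$ itself (which \emph{is} in $K$) to obtain $\langle a-\hat{x},\hat{x}-u\rangle\ge -\tfrac{1}{2}\|\hat{x}-u\|_2^2$, hence $\langle a-u,\hat{x}-u\rangle\ge \tfrac{1}{2}\|\hat{x}-u\|_2^2$. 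Since $v=\beta(\hat{x}-u)\in K_t$ with $\beta=t/\|\hat{x}-u\|_2$, this gives
\[
\|a-u\|_{K_t^o}\ \ge\ \langle a-u,\,v\rangle\ =\ \beta\,\langle a-u,\hat{x}-u\rangle\ \ge\ \frac{t}{2}\,\|\hat{x}-u\|_2,
\]
which is exactly the bound you need in the case $\|\hat{x}-u\|_2>t$. With this adjustment your argument is complete.
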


We also use the following result of~\cite{plan2014high}.

\begin{claim}(Orthogonal decomposition of $a_i$.)\label{re 4.5}
Suppose we decompose the rows of $A$, $a_i$, as:
\begin{align}
 a_i =  \langle{a_i,\bar{x}}\rangle\bar{x} + b_i,
\end{align} where $b_i\in\mathbb{R}^n$ is orthogonal to $\bar{x}$. Then we have $b_i\sim\mathcal{N}(0,I_{x^{\perp}})$ since $a_i\sim\mathcal{N}(0,I)$. Also, $I_{x^{\perp}}=I-\bar{x}\bar{x}^T.$ Moreover, the measurements $y_i$ in equation \eqref{MainModell} and the orthogonal component $b_i$ are statistically independent.
\end{claim}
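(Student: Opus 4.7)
The plan is to verify the two assertions of the claim separately: first the Gaussian distribution of the residual vector $b_i$, and second the statistical independence of $b_i$ from $y_i$. Both rest on the standard fact that applying an orthogonal projection to a standard Gaussian vector produces a Gaussian vector supported on the projected subspace, and that Gaussian vectors are independent whenever their cross-covariance vanishes.

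First I would rewrite the decomposition in operator form. Since $\bar{x}$ is a unit vector, the matrix $P = I - \bar{x}\bar{x}^T$ is the orthogonal projection onto $\bar{x}^{\perp}$, and the defining equation $a_i = \langle a_i,\bar{x}\rangle\bar{x}+b_i$ gives $b_i = P a_i$. Because $a_i\sim \mathcal{N}(0,I_n)$, the vector $b_i$ is a linear image of a standard Gaussian and therefore Gaussian itself, with mean $P\cdot 0 = 0$ and covariance $P I P^T = P^2 = P$, using that $P$ is an idempotent symmetric projection. This gives $b_i\sim \mathcal{N}(0,I-\bar{x}\bar{x}^T)=\mathcal{N}(0,I_{x^\perp})$ as claimed.

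For the independence statement I would form the joint vector $(\langle a_i,\bar{x}\rangle,\, b_i)$, which is jointly Gaussian as an affine image of $a_i$. Its cross-covariance is
\begin{equation*}
\mathbb{E}\bigl[\langle a_i,\bar{x}\rangle\, b_i^T\bigr] \;=\; \bar{x}^T\,\mathbb{E}[a_i a_i^T]\, P^T \;=\; \bar{x}^T(I-\bar{x}\bar{x}^T) \;=\; \bar{x}^T - \bar{x}^T \;=\; 0,
\end{equation*}
so joint Gaussianity upgrades uncorrelatedness to independence: $\langle a_i,\bar{x}\rangle \perp b_i$. Since the measurement model depends on $a_i$ only through the scalar projection $\langle a_i,\bar{x}\rangle$ (after absorbing the norm $\|x\|_2$ into the link, using $\bar{x}\propto x$), the measurement $y_i = g(\langle a_i,x\rangle)$ is a deterministic function of $\langle a_i,\bar{x}\rangle$ (and, in the noisy case, the independent noise $e_i$), both of which are independent of $b_i$. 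Hence $y_i$ and $b_i$ are independent.

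The one subtle point, and the place I would be most careful, is the reduction ``$y_i$ depends on $a_i$ only through $\langle a_i,\bar{x}\rangle$.'' This requires that the superposition signal $x$ be parallel to the unit vector $\bar{x}$ used in the decomposition; this is exactly the convention under which the linear-estimator lemma (Lemma~\ref{lemma 4.1}) is stated, so I would invoke the same convention explicitly at the start of the proof, write $x = \|x\|_2\,\bar{x}$, and thereby express $y_i = g(\|x\|_2 \langle a_i,\bar{x}\rangle)$. Once this is in place, everything else is a one-line Gaussian computation and the claim follows.
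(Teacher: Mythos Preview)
Your proof is correct and is exactly the standard argument one would expect: write $b_i=(I-\bar{x}\bar{x}^T)a_i$, read off the covariance $P I P^T=P$, use joint Gaussianity plus zero cross-covariance to get $\langle a_i,\bar{x}\rangle\perp b_i$, and then observe that $y_i$ is a measurable function of $\langle a_i,\bar{x}\rangle$ (and, in the noisy case, of the independent $e_i$). The caveat you flag about needing $x\parallel\bar{x}$ is handled correctly by writing $x=\|x\|_2\,\bar{x}$.

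There is nothing to compare against in the paper itself: the paper does not prove this claim but simply imports it from~\cite{plan2014high}. Your write-up therefore supplies a self-contained justification that the paper omits, and it matches the argument in the original reference.
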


\begin{proof}[Proof of Theorem~\ref{thm:main}]
Observe that the magnitude of the signal $x$ may be lost due to the action of the nonlinear measurement function $f$ (such as the $\mathrm{sign}(\cdot)$ function). Therefore, our recovered signal $\widehat{x}$ approximates the true signal modulo a scaling factor. Indeed, for $\mu$ defined in Lemma~\ref{lemma 4.1}, we have:
\begin{align}\label{MainDecEq}
\|\widehat{x} - \mu\bar{x}\|_2 &= \|\Phi\widehat{w}+\Psi\widehat{z}-\alpha\mu\Phi{w}-\alpha\mu\Psi{z}\|_2 \nonumber\\
&\leq\|\Phi\|\|\widehat{w}-\mu\alpha{w}\|_2+\|\Psi\|\|\widehat{z}-\mu\alpha{z}\|_2 \nonumber\\
&\leq ({\rho}+\frac{2}{\rho}\|\Phi^*\widehat{x}_{\lin}-\mu\alpha{w}\|_{K_{\rho}^o}) + ({\rho}+\frac{2}{\rho}\|\Psi^*\widehat{x}_{\lin}-\mu\alpha{z}\|_{K_{\rho}^o}).
\end{align}
The equality comes from the definition of $\bar{x}$. The first inequality results from an application of the triangle inequality and the definition of the operator norm of a matrix, while the second inequality follows from Lemma \ref{lemma 5.1}. 
Now, it suffices to derive a bound on the first term in the above expression (since a similar bound will hold for the second term). This proves the first part of Theorem~\ref{thm:main}. We have:
\begin{align}
\|\Phi^*\widehat{x}_{\lin}-\mu\alpha{w}\|_{K_{\rho}^o} &= \|\Phi^*\frac{1}{m}\Sigma_i(y_i\langle a_i, \bar{x}\rangle\bar{x}+y_i b_i)-\mu\alpha{w}\|_{K_{\rho}^o}\nonumber\\
&\leq\|\Phi^*\frac{1}{m}\Sigma_i(y_i\langle a_i, \bar{x}\rangle\bar{x})-\mu\alpha{w}\|_{K_{\rho}^o}+\|\Phi^*\frac{1}{m}\Sigma_i y_i b_i\|_{K_{\rho}^o}\nonumber\\
&\leq\underbrace{\|\Phi^*\frac{1}{m}\Sigma_i(y_i\langle a_i, \bar{x}\rangle\bar{x})-\mu\Phi^*\bar{x}\|_{K_{\rho}^o}}_\text{$S_1$}
+ \underbrace{\|\mu\alpha\Phi^*\Psi{z}\|_{K_{\rho}^o}}_\text{$S_2$}\nonumber \\
&+\underbrace{\|\Phi^*\frac{1}{m}\Sigma_i y_i b_i\|_{K_{\rho}^o}}_\text{$S_3$} . \label{eq}
\end{align}
The first equality follows from Claim~\ref{re 4.5}, while the second and third inequalities result from the triangle inequality. Also:
\begin{align*}
S_1 &= \|\Phi^*\frac{1}{m}\Sigma_i(y_i\langle a_i, \bar{x}\rangle\bar{x})-\mu\Phi^*\bar{x}\|_{K_{\rho}^o} \\
&= \|(\frac{1}{m}\Sigma_i(y_i\langle a_i, \bar{x}\rangle-\mu) )\Phi^*\bar{x}\|_{K_{\rho}^o}\\
&= \lvert\frac{1}{m}\Sigma_i(y_i\langle a_i, \bar{x}\rangle-\mu) \rvert\|\Phi^*\bar{x}\|_{K_{\rho}^o} .\\
&\Longrightarrow\mathbb{E}(S_1^2) = \mathbb{E}(|\frac{1}{m}\Sigma_i(y_i\langle a_i, \bar{x}\rangle-\mu) |^2\|\Phi^*\bar{x}\|_{K_{\rho}^o}^2) .
\end{align*}
Define $\gamma_i\overset{\Delta}{=} y_i\langle a_i,\bar{x}\rangle - \mu_i$. Then,
 \begin{align*}
\mathbb{E}(|\frac{1}{m}\Sigma_i(y_i\langle a_i, \bar{x}\rangle-\mu) |^2)&=\mathbb{E}(\frac{1}{m^2}(\Sigma_i \gamma_i)^2) \\
& =\mathbb{E}(\frac{1}{m^2}(\sum_{i=}^{m} \gamma_i^2 + \Sigma_{i\neq j}\gamma_i\gamma_j)) \\
&=\frac{1}{m^2}(\sum_{i=1}^m \mathbb{E}\gamma_i^2) = \frac{1}{m}\mathbb{E}\gamma_1^2 = \frac{\sigma^2}{m},
\end{align*}
where $\sigma^2$ has been defined in Lemma \ref{lemma 4.1}. The third and last equalities follow from the fact that the $y_i$'s are independent and identically distributed. 
Now, we bound $\|\Phi^*\bar{x}\|_{K_{\rho}^o}^2$ as follows::
\begin{align}
\|\Phi^*\bar{x}\|_{K_{\rho}^o} &= \sup_{u\in (K-K)\cap {\rho}B_n^2}\langle\Phi^*\bar{x},u\rangle \nonumber \\
&= {\rho}\sup_{\substack{v_1\in \frac{1}{\rho}K, v_2\in \frac{1}{\rho}K\\\|v_i\|_2\leq 1, i=1,2}}\langle\Phi^*\bar{x},v_1-v_2\rangle \nonumber\\
& \leq 2{\rho}\sup_{\substack{\|a\|_0\leq s\\\|a\|_2\leq 1}}|\langle\Phi^*\bar{x},a\rangle| \nonumber \\
&\leq 2{\rho}(\sup_{\substack{\|a\|_0\leq s\\\|a\|_2\leq 1}}|\langle\alpha{w},a\rangle|+\sup_{\substack{\|a\|_0\leq s\\\|a\|_2\leq 1}}|\langle\alpha\Phi^*\Psi{z},a\rangle|) \nonumber\\
&\leq 2{\rho}(\alpha\|w\|_2+\sup_{\substack{\|a\|_0\leq s\\\|a\|_2\leq 1}}|\langle\alpha\Psi{z},\Phi a\rangle|)\leq2\alpha {\rho}(\|w\|_2+\|z\|_2\varepsilon).\nonumber 
\end{align}
The second inequality follows from \eqref{superpositoin} and the triangle inequality. The last inequality is resulted from an application of the Cauchy-Schwarz inequality and the definition of $\varepsilon$. As a result, we have:
\begin{align}
&\Longrightarrow\mathbb{E}(S_1^2)\leq 4\frac{\alpha^2 {\rho}^2\sigma^2}{m}\left(\|w\|_2+\|z\|_2\varepsilon\right)^2\label{eq 5.3}.
\end{align}
Similarly we can bound $S_2$ as follows:
\begin{align}
\mathbb{E}(S_2) = \mathbb{E}(\|\mu\alpha\Phi^*\Psi{z}\|_{K_{\rho}^o})&=\mathbb{E}(|\mu\alpha|\|\Phi^*\Psi{z}\|_{K_{\rho}^o}) \nonumber\\
&=|\mu\alpha|\|\Phi^*\Psi{z}\|_{K_{\rho}^o}=|\mu\alpha|\sup_{u\in (K-K)\cap {\rho}B_n^2}\langle\Psi{z},\Phi u\rangle \nonumber\\
&=|\mu\alpha|{\rho}\sup_{\substack{v_1\in \frac{1}{\rho}K, v_2\in \frac{1}{\rho}K\\\|v_i\|_2\leq 1, i=1,2}}\langle\Psi{z},\Phi(v_1-v_2)\rangle\leq 2\mu\alpha {\rho}\|z\|_2\varepsilon .\label{eq 5.4}
\end{align}
Finally, we give the bound for $S_3$. Define $\quad L\overset{\Delta}{=} \frac{1}{m}\Sigma_i y_i b_i$. Then, we get:
$\mathbb{E}(S_3)=\mathbb{E}\|\Phi^*\frac{1}{m}\Sigma_i y_i b_i\|_{K_{\rho}^o} = \mathbb{E}\|\Phi^*L\|_{K_{\rho}^o}$.
Our goal is to bound $\mathbb{E}\|\Phi^*L\|_{K_{\rho}^o}$. Since $y_i$ and $b_i$ are independent random variables (as per Claim \ref{re 4.5}), we can use the law of conditional covariance and the law of iterated expectation. That is, we first condition on $y_i$, and then take expectation with respect to $b_i$.
By conditioning on $y_i$, we have $L\sim\mathcal{N}(0,\beta^2 I_{x^{\perp}})$ where $I_{x^{\perp}} = I - \bar{x}\bar{x}^T$ is the covariance of vector $b_i$ according to claim \ref{re 4.5} and $\beta^2 = \frac{1}{m^2}\Sigma_i y_i^{2}$. Define $g_{x^{\perp}}\sim\mathcal{N}(0,I_{x^{\perp}})$. Therefore, \new{$L\sim \beta g_{x^{\perp}}$ ($L$ equivalent to $g_{x^{\perp}}$ in distribution)}. Putting everything together, we get:
$$\mathbb{E}(S_3)=\mathbb{E}\|\Phi^*L\|_{K_{\rho}^o} = \mathbb{E}\|\Phi^*\beta g_{x^{\perp}}\|_{K_{\rho}^o} = \beta\mathbb{E}\|\Phi^* g_{x^{\perp}}\|_{K_{\rho}^o}.$$
We need to extend the support of distribution of $g_{x^{\perp}}$ and consequently $L$ from $x^{\perp}$ to $\mathbb{R}^n$. This follows from \cite{plan2014high}:
\begin{claim}
Let $g_E$ be a random vector which is distributed as $\mathcal{N}(0,I_E)$. Also, assume that $\Gamma : \mathbb{R}^n\rightarrow\mathbb{R}$ is a convex function. Then, for any subspace $E$ of $\mathbb{R}^n$ such that $E\subseteq F$, we have:
$$\mathbb{E}(\Gamma(g_E))\leq\mathbb{E}(\Gamma(g_F)) .$$
\end{claim}
Hence, we can orthogonally decompose $\mathbb{R}^n$ as $\mathbb{R}^n=D\oplus C$ where $D$ is a subspace supporting $x^{\perp}$ and $C$ is the orthogonal subspace onto it. 
Thus, $g_{\mathbb{R}^n}=g_D + g_C$ in distribution such that $g_D\sim\mathcal{N}(0,I_D), \ g_C\sim\mathcal{N}(0,I_C)$. Also, $\|.\|_{K_{\rho}^o}$ is a convex function since it is a semi-norm. Hence,
\begin{align*}
\mathbb{E}_D\|\Phi^*g_D\|_{K_{\rho}^o} &= \mathbb{E}_D\|\Phi^*g_D+\mathbb{E}_C(g_C)\|_{K_{\rho}^o} \\
&=\mathbb{E}_D\|\mathbb{E}_{C|D}(\Phi^*g_D+g_c)\|_{K_{\rho}^o} \\
&\leq\mathbb{E}_D\mathbb{E}_{C|D}\|\Phi^*(g_D+g_C)\|_{K_{\rho}^o} = \mathbb{E}\|\Phi^*g_{\mathbb{R}^n}\|_{K_{\rho}^o} .
\end{align*}
The first inequality follows from Jensen's inequality, while the second inequality follows from the law of iterated expectation. Therefore, we get:
\begin{align*}
\mathbb{E}\|\Phi^*L\|_{K_{\rho}^o} = \mathbb{E}\|\Phi^*\beta g_{x^{\perp}}\|_{K_{\rho}^o} &=\beta\mathbb{E}\|\Phi^* g_{x^{\perp}} \|_{K_{\rho}^o} \\
&\leq \beta \mathbb{E}\|\Phi^*g_{\mathbb{R}^n}\|_{K_{\rho}^o}= \beta\sup_{u\in (K-K)\cap {\rho}B_n^2} \langle\Phi^* g_{\mathbb{R}^n}, u\rangle = \beta W_{\rho}(K) .
\end{align*}
The last equality follows from the fact that $\Phi^*g_{\mathbb{R}^n}\thicksim\mathcal{N}(0,I)$. The final step is to take an expectation with respect to $y_i$, giving us a bound as
$\mathbb{E}(S_3) = \mathbb{E}\|\Phi^*L\|_{K_{\rho}^o}\leq\mathbb{E}(\beta) W_{\rho}(K) \leq\sqrt{\mathbb{E}(\beta^2)} W_{\rho}(K)$,
where $\beta^2 = \frac{1}{m^2}\sum_{i=1}^m y_i^2$. Hence,
\begin{equation}\label{eq 5.5}
\mathbb{E}(S_3) \leq \frac{\eta}{\sqrt{m}}W_{\rho}(K) \, .
\end{equation}
Putting together the results from \eqref{eq 5.3}, \eqref{eq 5.4}, and \eqref{eq 5.5}:
\begin{align*}
\mathbb{E}(\|\Phi^*\widehat{x}_{\lin}-\mu\alpha{w}\|_{K_{\rho}^o})&\leq \mathbb{E}(S_1)+\mathbb{E}(S_2)+\mathbb{E}(S_3)&\\
&\leq \sqrt{\mathbb{E}(S_1)}+\mathbb{E}(S_2)+\mathbb{E}(S_3)\\ 
&\leq \frac{2\alpha {\rho}\sigma}{\sqrt{m}}\left(\|w\|_2+\|z\|_2\varepsilon\right) + 2\mu\alpha {\rho}\|z\|_2\varepsilon + \frac{\eta}{\sqrt{m}}W_{\rho}(K) .
\end{align*}
Therefore, we obtain:
\begin{align}\label{1MT}
 \mathbb{E}\|\widehat{w}-\mu\alpha{w}\|_2& \leq {\rho} +\frac{2}{\rho}\mathbb{E}(\|\Phi^*\widehat{x}_{lin}-\mu\alpha{w}\|_{K_{\rho}^o})\nonumber \\ 
 &\leq {\rho} + \frac{4\alpha \sigma}{\sqrt{m}}\left(\|w\|_2+\|z\|_2\varepsilon\right) + 4\mu\alpha\|z\|_2\varepsilon + \frac{2\eta}{{\rho}\sqrt{m}}W_{\rho}(K).
\end{align}
Recall that  $\alpha = \frac{1}{\|\Phi {w} + \Psi {z}\|_2}$, hence:
\begin{align*}
\|\Phi {w} + \Psi {z}\|_2^2 &\geq \|\Phi {w}\|_2^2 + \|\Psi {z}\|_2^2 - 2|\langle\Phi{w},\Psi{z}\rangle| \\
&\geq \|w\|_2^2+\|z\|_2^2-2\|w\|_2\|z\|_2\varepsilon\nonumber,\\ 
\end{align*}
\text{or,}\hskip 2.3in$\alpha \leq\frac{1}{\sqrt{\|w\|_2^2+\|z\|_2^2-2\|w\|_2\|z\|_2\varepsilon}}.$
\begin{align*}
\mathbb{E}\|\widehat{w}-\mu\alpha{w}\|_2& \leq {\rho} + \frac{4 \sigma}{\sqrt{m}}\left(\frac{\|w\|_2+\|z\|_2\varepsilon}{\sqrt{\|w\|_2^2+\|z\|_2^2-2\|w\|_2\|z\|_2\varepsilon}}\right) \\
&+ 4\mu\left(\frac{\|z\|_2\varepsilon}{\sqrt{\|w\|_2^2+\|z\|_2^2-2\|w\|_2\|z\|_2\varepsilon}}\right) + \frac{2\eta}{{\rho}\sqrt{m}}W_{\rho}(K).
\end{align*}
Now, let $d = \frac{\|z\|_2}{\|w\|_2}$. Hence:
\begin{align}\label{MaintheOneshot}
\mathbb{E}\|\widehat{w}-\mu\alpha{w}\|_2 &\leq{\rho}+ \frac{4 \sigma}{\sqrt{m}}\left(\frac{1+d\varepsilon}{\sqrt{1+d^2-2d\varepsilon}}\right)+ 4\mu\left(\frac{d}{\sqrt{1+d^2-2d\varepsilon}}\right)\varepsilon + \frac{2\eta}{{\rho}\sqrt{m}}W_{\rho}(K).
\end{align}
It is easy to see that $\frac{1+d\varepsilon}{\sqrt{1+d^2-2d\varepsilon}}\leq 2$ and $\frac{d}{\sqrt{1+d^2-2d\varepsilon}}\leq 2$ provided that $\varepsilon\leq 0.65$ (here, the constant $2$ is just selected for convenience). Now, by plugging these bounds in~\eqref{MaintheOneshot}, we obtain the desired result in Theorem~\ref{thm:main}. However, $K$ is a closed star-shaped set (the set of $s$-sparse signals), and therefore $W_{\rho}(K)={\rho}W_1(K)$~\cite{plan2014high}. Now using~\eqref{MainDecEq}, we obtain:
\begin{align}
\mathbb{E}\|\widehat{x}-\mu\bar{x}\|_2\leq 2{\rho} + \frac{4}{\sqrt{m}}\left(4\sigma + \eta\frac{W_{\rho}(K)}{\rho}\right) +16\mu\varepsilon. \nonumber
\end{align}
We can use Lemma 2.3 in \cite{planRmanrobust} and plug in $W_{\rho}(K)\leq C {\rho}\sqrt{s \log (2n/s)}$. Using the above bound on $\alpha$ and by letting ${\rho}\rightarrow 0$, we get:
\begin{align}
\mathbb{E}\|\widehat{x}-\mu\bar{x}\|_2\leq\frac{4}{\sqrt{m}}\left(4\sigma + C\eta\sqrt{s \log (2n/s)}\right) +16\mu\varepsilon,
\end{align}
where $C> 0$ is an absolute constant. This completes the proof of Corollary~\ref{corr:estx}.
\end{proof}

We now prove the high-probability version of the main theorem. As a precursor, we need a few preliminary definitions and lemmas:

\begin{definition}(Subexponential random variable.)\label{subexp}
A random variable $X$ is subexponential if it satisfies the following relation:
\begin{align*}
\mathbb{E}\exp\left(\frac{c X}{\|X\|_{\psi_1}}\right)\leq 2,
\end{align*}
where $c >0$ is an absolute constant. Here, $\|X\|_{\psi_1}$ denotes the $\psi_1$-norm, defined as follows:
\begin{align*}
\|X\|_{\psi_1}=\sup_{p\geq 1}\frac{1}{p}(\mathbb{E}|X|^p)^{\frac{1}{p}}.
\end{align*}
\end{definition}

We should mention that there are other definitions for subexponential random variables (also for subGaussian defined in Definition~\ref{subgau}). Please see~\cite{vershynin2010introduction} for a detailed treatment.

\begin{lemma}\label{SubgaSubexp}
Let $X$ and $Y$ be two subgaussian random variables. Then, $XY$ is a subexponential random variable.
\end{lemma}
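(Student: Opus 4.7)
The plan is to bound the $\psi_1$-norm of $XY$ directly in terms of the $\psi_2$-norms of $X$ and $Y$, using Cauchy--Schwarz to decouple the product inside the moment expression. More precisely, for any integer $p \geq 1$, I would apply Cauchy--Schwarz to get
\[
\mathbb{E}|XY|^p \;\leq\; \left(\mathbb{E}|X|^{2p}\right)^{1/2}\!\left(\mathbb{E}|Y|^{2p}\right)^{1/2},
\]
and therefore $(\mathbb{E}|XY|^p)^{1/p} \leq (\mathbb{E}|X|^{2p})^{1/(2p)}(\mathbb{E}|Y|^{2p})^{1/(2p)}$.

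Next, I would invoke the definition of the $\psi_2$-norm (Definition \ref{subgau}): for any $q \geq 1$, $(\mathbb{E}|X|^q)^{1/q} \leq \sqrt{q}\,\|X\|_{\psi_2}$, and similarly for $Y$. Applying this with $q = 2p$ yields
\[
(\mathbb{E}|X|^{2p})^{1/(2p)} \;\leq\; \sqrt{2p}\,\|X\|_{\psi_2}, \qquad (\mathbb{E}|Y|^{2p})^{1/(2p)} \;\leq\; \sqrt{2p}\,\|Y\|_{\psi_2}.
\]
Multiplying these two bounds and dividing by $p$ gives
\[
\frac{1}{p}(\mathbb{E}|XY|^p)^{1/p} \;\leq\; 2\,\|X\|_{\psi_2}\|Y\|_{\psi_2},
\]
uniformly in $p \geq 1$. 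Taking the supremum over $p \geq 1$ as required by Definition \ref{subexp} then yields $\|XY\|_{\psi_1} \leq 2\|X\|_{\psi_2}\|Y\|_{\psi_2} < \infty$, so $XY$ is subexponential as claimed.

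There is essentially no obstacle here: the entire argument is a two-line application of Cauchy--Schwarz together with the moment characterization of subgaussianity. The only mild subtlety is to use $q = 2p$ (rather than $q = p$) when invoking the $\psi_2$ bound, which is precisely what makes the product of two subgaussians fall into the $\psi_1$ regime rather than the $\psi_2$ regime. Note that independence of $X$ and $Y$ is not needed, since Cauchy--Schwarz handles arbitrary dependence.
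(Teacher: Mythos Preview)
Your proposal is correct and follows essentially the same approach as the paper: apply Cauchy--Schwarz to decouple the product, then invoke the $\psi_2$ moment bound at level $q=2p$ to conclude $\|XY\|_{\psi_1}\leq 2\|X\|_{\psi_2}\|Y\|_{\psi_2}$. Your write-up is in fact slightly cleaner than the paper's, which has a minor typo in the constant (it writes $\sqrt{2}p$ where $2p$ is intended).
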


\begin{proof}
According to the definition of the $\psi_2$-norm, we have:
\begin{align}
(\mathbb{E}|XY|^p)^{\frac{1}{p}}=\left(\mathbb{E}|X|^p|Y|^p\right)^{\frac{1}{p}}\leq\left(\left(\mathbb{E}|X|^{2p}\right)^{\frac{1}{2p}}\left(\mathbb{E}|Y|^{2p}\right)^{\frac{1}{2p}}\right)\leq \sqrt{2}p\|X\|_{\psi_2}\|Y\|_{\psi_2},
\end{align}
where the first inequality results from Cauchy-schwarz inequality, and the last inequality is followed by the subgaussian assumption on $X$ and $Y$. This shows that the random variable $XY$ is subexponential random variable according to Definition \ref{subexp}.
\end{proof}

\begin{lemma}(Gaussian concentration inequality) See \cite{vershynin2010introduction,ledoux2013probability}.\label{GaussConcen}
Let $(G_x)_{x\in T}$ be a centered gaussian process indexed by a finite set $T$. Then $\forall t>0$:
\begin{align*}
\mathbb{P}(\sup_{x\in T}G_x\geq \mathbb{E}\sup_{x\in T}G_x+t))\leq \exp\left(-\frac{t^2}{2\sigma^2}\right)
\end{align*} 
where $\sigma^2 = \sup_{x\in T}\mathbb{E}G_x^2 < \infty$.
\end{lemma}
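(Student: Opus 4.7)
The plan is to deduce this from the classical Gaussian Lipschitz concentration inequality (Borell–Tsirelson–Ibragimov–Sudakov), after reducing the supremum of the Gaussian process to a Lipschitz function of an underlying i.i.d. standard Gaussian vector. This is the standard textbook route, so I would treat the concentration inequality for Lipschitz functions as a black box and focus on the reduction.

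First, since $T$ is finite, the centered Gaussian process $(G_x)_{x \in T}$ has a finite-dimensional covariance matrix $\Sigma$. By Cholesky (or eigendecomposition) I can write $G_x = \langle a_x, Z \rangle$ for a fixed standard Gaussian vector $Z = (Z_1,\ldots,Z_N) \sim \mathcal{N}(0, I_N)$ and vectors $a_x \in \mathbb{R}^N$ satisfying $\|a_x\|_2^2 = \mathbb{E} G_x^2$. Note that, crucially, the same $Z$ realizes every $G_x$, so the process equals the deterministic map $z \mapsto (\langle a_x, z\rangle)_{x\in T}$ evaluated at $Z$.

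Second, define $f : \mathbb{R}^N \to \mathbb{R}$ by $f(z) = \sup_{x \in T} \langle a_x, z\rangle$. For any $z, z' \in \mathbb{R}^N$, using $\sup\phi - \sup\psi \leq \sup(\phi-\psi)$ and the Cauchy–Schwarz inequality,
$$|f(z) - f(z')| \leq \sup_{x \in T}|\langle a_x, z - z'\rangle| \leq \Bigl(\sup_{x\in T}\|a_x\|_2\Bigr)\|z - z'\|_2 = \sigma\|z - z'\|_2,$$
so $f$ is $\sigma$-Lipschitz on $\mathbb{R}^N$ with respect to the Euclidean norm.

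Third, invoke Borell–TIS: for any $L$-Lipschitz $f : \mathbb{R}^N \to \mathbb{R}$ and $Z \sim \mathcal{N}(0, I_N)$,
$$\mathbb{P}\bigl(f(Z) \geq \mathbb{E} f(Z) + t\bigr) \leq \exp\!\left(-\frac{t^2}{2L^2}\right).$$
Since $f(Z) = \sup_{x\in T} G_x$ in distribution and $L = \sigma$, the stated bound follows immediately.

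The main conceptual obstacle is not in any of the steps above, which are essentially algebraic, but in Borell–TIS itself. If one wanted a self-contained proof, the cleanest route would be via Herbst's argument using the Gaussian log-Sobolev inequality: the log-Sobolev inequality $\mathrm{Ent}(e^{\lambda f}) \leq \tfrac{\lambda^2}{2}\mathbb{E}[|\nabla f|^2 e^{\lambda f}]$ for the standard Gaussian measure, combined with $\|\nabla f\|_\infty \leq L$ (which for a $\sup$ of linear functions requires a smoothing argument since $f$ is only a.e.\ differentiable), yields a differential inequality for the log-moment-generating function that integrates to $\log \mathbb{E}\, e^{\lambda(f - \mathbb{E}f)} \leq \lambda^2 L^2/2$; Chernoff then gives the subgaussian tail. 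Since the result is cited from standard references, I would skip this derivation.
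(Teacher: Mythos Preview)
Your argument is correct and follows the standard textbook route (represent the finite Gaussian process as $\langle a_x, Z\rangle$, observe the supremum is $\sigma$-Lipschitz, invoke Borell--TIS). There is nothing to compare against: the paper does not prove this lemma at all, but simply cites it from \cite{vershynin2010introduction,ledoux2013probability} and uses it as a black box in the proof of Theorem~\ref{thm:high}, so your write-up is strictly more than what the paper provides.
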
 
 
\begin{lemma}(Bernstein-type inequality for random variables)~\cite{vershynin2010introduction}.\label{bern}
Let $X_1,X_2,\ldots,X_n$ be independent sub-exponential random variables with zero-mean. Also, assume that $K = \max_{i}\|X_i\|_{\psi_1}$. Then, for any vector $a\in\mathbb{R}^n$ and every $t\geq 0$, we have:
\begin{align*}
\mathbb{P}(|\Sigma_{i}a_iX_i|\geq t)\leq 2\exp\left(-c\min\left\{\frac{t^2}{K^2\|a\|_2^2},\frac{t}{K\|a\|_\infty}\right\}\right).
\end{align*} 
where $c>0$ is an absolute constant.
\end{lemma}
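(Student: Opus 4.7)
Since Lemma~\ref{bern} is cited verbatim from~\cite{vershynin2010introduction}, the proof plan is to reproduce the standard Chernoff / moment generating function (MGF) argument, with careful bookkeeping in the two regimes of the conjugate parameter $\lambda$. The plan splits into three steps: an MGF bound for a single sub-exponential variable, tensorization, and optimization.

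First, I would establish the auxiliary estimate that for a mean-zero random variable $X$ with $\|X\|_{\psi_1}\le K$, there is an absolute constant $c_1>0$ with
\begin{align*}
\mathbb{E}\,e^{\lambda X}\;\le\;\exp\bigl(c_1 K^2 \lambda^2\bigr)\qquad\text{whenever }|\lambda|\le \tfrac{1}{c_1 K}.
\end{align*}
This follows by Taylor-expanding the exponential, using the moment bound $\mathbb{E}|X|^p\le (c_2 p)^p K^p$ that is equivalent to the $\psi_1$-norm definition (Def.~\ref{subexp}), and summing the resulting geometric-type series once $|\lambda|K$ is small enough. The zero-mean hypothesis kills the linear term, so the leading contribution is quadratic in $\lambda$, which is what gives the sub-Gaussian behaviour in the small-$\lambda$ regime.

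Second, I would apply the exponential Markov inequality to $S=\sum_i a_i X_i$. For any $\lambda>0$,
\begin{align*}
\mathbb{P}(S\ge t)\;\le\;e^{-\lambda t}\prod_{i}\mathbb{E}\,e^{\lambda a_i X_i}.
\end{align*}
Since $\|a_i X_i\|_{\psi_1}\le |a_i|K$, the single-variable bound from Step~1 applies to each factor as long as $\lambda\le 1/(c_1 K\|a\|_\infty)$, and independence of the $X_i$ gives
\begin{align*}
\mathbb{P}(S\ge t)\;\le\;\exp\!\Bigl(-\lambda t + c_1 K^2 \lambda^2 \|a\|_2^2\Bigr).
\end{align*}

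Third, I would optimize over the admissible range $\lambda\in(0,\,1/(c_1 K\|a\|_\infty)]$. The unconstrained minimizer is $\lambda^\star=t/(2c_1 K^2\|a\|_2^2)$, producing an exponent of order $-t^2/(K^2\|a\|_2^2)$. This choice lies in the admissible range precisely when $t/(K\|a\|_2^2)\le 2/\|a\|_\infty$; otherwise one must saturate at $\lambda=1/(c_1 K\|a\|_\infty)$, and substituting this value gives an exponent of order $-t/(K\|a\|_\infty)$. The two regimes are exactly captured by the $\min$ appearing in the statement. Applying the same bound to $-S$ and taking a union bound produces the tail on $|S|$ and the factor of~$2$. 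The main obstacle is the tracking of absolute constants throughout the MGF estimate and the regime split, but nothing is conceptually delicate.
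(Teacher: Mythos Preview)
Your proposal is correct and follows the standard Chernoff/MGF argument that is used in the cited reference~\cite{vershynin2010introduction}. Note that the paper itself does not supply a proof of Lemma~\ref{bern}; it is stated and cited as a known result from~\cite{vershynin2010introduction}, so there is no paper-specific proof to compare against beyond the standard derivation you outlined.
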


\begin{proof}[Proof of Theorem~\ref{thm:high}]
We follow the proof given in~\cite{plan2014high}. Let $\beta=\frac{s'}{2\sqrt{m}}$ for $0<s'<\sqrt{m}$ where $m$ denotes the number of measurements. In \eqref{eq}, we saw that $\forall \rho>0$:
\begin{align}\label{mainprob}
\|\widehat{x} - \mu\bar{x}\|_2\leq 2(\rho+\frac{2}{\rho}(S_1+S_2+S_3)).
\end{align}
We attempt to bound each term $S_1, S_2,$ and $S_3$ with high probability, and then use a union bound to obtain the desired result.

For $S_1$, we have:
\begin{align*}\label{mainprob1}
S_1\leq\lvert\frac{1}{m}\Sigma_i(y_i\langle a_i, \bar{x}\rangle-\mu) \rvert\|\Phi^*\bar{x}\|_{K_t^o} .
\end{align*}
We note that $y_i$ is a sub-gaussian random variable (by assumption) and $\langle a_i, \bar{x}\rangle$ is a standard normal random variable. Hence, by Lemma~\ref{SubgaSubexp}, $y_i\langle a_i, \bar{x}\rangle$ is a sub-exponential random variable. Also, $y_i\langle a_i, \bar{x}\rangle$ for $i=1,2,\ldots,m$ are independent sub-exponential random variables that can be centered by subtracting their mean $\mu$. Now, we can apply Lemma~\ref{bern} on $\lvert\frac{1}{m}\Sigma_i(y_i\langle a_i, \bar{x}\rangle-\mu)\rvert$. Therefore:
\begin{align*}
\mathbb{P}(\lvert\frac{1}{m}\Sigma_i(y_i\langle a_i, \bar{x}\rangle-\mu))\rvert\geq \eta\beta)\leq2\exp\left(-\frac{c\beta^2\eta^2m}{\|y_1\|_{\psi_2}^2}\right) .
\end{align*}
Here, $\eta$ and $\mu$ are as defined in~\ref{lemma 4.1}. Using the bound on $\|\Phi^*\bar{x}\|_{K_t^o}$, we have:
\begin{align}
S_1\leq2\alpha {\eta\beta\rho}(\|w\|_2+\|z\|_2\varepsilon)
\end{align}
with probability at least $1-2\exp(-\frac{c\beta^2\eta^2m}{\|y_1\|_{\psi_2}^2})$ where $c>0$ is some constant.

For $S_2$ we have:
\begin{align}\label{mainprob2}
S_2\leq 2\mu\alpha {\rho}\|z\|_2\varepsilon,
\end{align}
with probability $1$ since $S_2$ is a deterministic quantity. 

For $S_3$ we have:
\begin{align*}
S_3\leq\|\Phi^*\frac{1}{m}\Sigma_i y_i b_i\|_{K_\rho^o} .
\end{align*}
To obtain a tail bound for $S_3$, we are using the following:
\begin{align*}
S_3\leq\frac{1}{m}(\Sigma_{i}y_i^2)^{1/2}\|\Phi^*g\|_{K_\rho^o}
\end{align*}
We need to invoke the Bernstein Inequality (Lemma~\ref{bern}) for sub-exponential random variables $(y_i^2-\eta^2)$ for $i=1,2,\ldots,m$ which are zero mean subexponential random variables in order to bound $\frac{1}{m}(\Sigma_{i}y_i^2)^{1/2}$. we have 
$\Big|\frac{1}{m}\Sigma_i(y_i^2-\eta^2)\Big|\leq 3\eta^2$ 
with high probability $1- 2\exp(-\frac{cm\eta^4}{\|y_1\|_{\psi_2}^4})$.

Next, we upper-bound $\|\Phi^*g\|$ (where $g\sim\mathcal{N}(0,I)$) with high probability. Since $\Phi$ is an orthogonal matrix, we have that $\Phi^*g\sim\mathcal{N}(0,I)$. Hence, we can use the Gaussian concentration inequality to bound $\Phi^*g$ as mentioned in Lemma~\ref{GaussConcen}. Putting these pieces together, we have:
\begin{align}\label{mainprob3}
S_3\leq\frac{2\eta}{\sqrt{m}}\left(W_\rho(K) + \rho\beta\sqrt{m}\right) ,
\end{align}     
with probability at least $1- 2\exp(-\frac{cm\eta^4}{\|y_1\|_{\psi_2}^4})-\exp(c\beta^2m)$.
Here, $W_\rho(K)$ denotes the local mean width for the set $K_1$ defining in Definition~\ref{def 3.3}.

Now, combining~\eqref{mainprob},~\eqref{mainprob1},~\eqref{mainprob2}, and~\eqref{mainprob3} together with the union bound, we obtain:
\begin{align}
\|\widehat{x} - \mu\bar{x}\|_2 &\leq \frac{4\alpha {\eta s'}(\|w\|_2+\|z\|_2\varepsilon)}{\sqrt{m}}+ 8\mu\alpha {}\|z\|_2\varepsilon + \frac{C\eta}{\sqrt{m}}\sqrt{s\log\frac{2n}{s}} + 4\frac{\eta s'}{\sqrt{m}},\nonumber \\
&\leq \frac{4\eta}{\sqrt{m}}\left(3s'+C'\sqrt{s\log\frac{2n}{s}}\right) + 16\mu\varepsilon,
\end{align}
with probability at least $1-4\exp(-\frac{cs'^2\eta^4}{\|y_1\|_{\psi_2}^4})$. The second inequality is due to the bounds being used in~\eqref{MaintheOneshot} provided that $\varepsilon\leq 0.65$. Also, $C,C', c> 0$ are absolute constants. Here, we have again used the well-known bound on the local mean width of the set of sparse vectors (for example, see Lemma 2.3 of \cite{planRmanrobust}). This completes the proof.
\end{proof}

\subsection{Analysis of \textsc{DHT}}
\label{AnalysisDHT}


Our analysis of \textsc{DHT} occurs in two stages. First, we define a loss function $F(t)$ that depends on the nonlinear link function $g$ and the measurement matrix $A$. We first assume that $F(t)$ satisfies certain regularity conditions (restricted strong convexity/smoothness), and use this to prove algorithm convergence. 
The proof of Theorem~\ref{mainThConvergence} follows the proof of convergence of the iterative hard thresholding (IHT) algorithm in the linear case~\cite{blumensath2009iterative}, and is more closely related to the work of~\cite{yuan2013gradient} who extended it to the nonlinear setting. Our derivation here differs from these previous works in our specific notion of restricted strong convexity/smoothness, and is relatively more concise. Later, we will prove that the RSS/RSC assumptions on the loss function indeed are valid, given a sufficient number of samples that obey certain measurement models. We assume a variety of measurement models including isotropic row measurements as well as subgaussian measurements. To our knowledge, these derivations of sample complexity are novel. 


First, we state the definitions for restricted strong convexity and restricted strong smoothness, abbreviated as \textit{RSC} and \textit{RSS}. The RSC and RSS was first proposed by~\cite{negahban2009unified,raskutti2010restricted}; also, see~\cite{bahmani2013greedy}.

\begin{definition} \label{defRSCRSS}
A function $f$ satisfies the RSC and RSS conditions if one of the following equivalent definitions is satisfied for all $t_1, t_2$ such that $\|t_1\|_0\leq 2s$ and $\|t_2\|_0\leq 2s$:

\begin{align}\label{d1}
\frac{m_{4s}}{2}\|t_2-t_1\|^2_2\leq f(t_2) - f(t_1) - \langle\nabla f(t_1) , t_2-t_1\rangle\leq\frac{M_{4s}}{2}\|t_2-t_1\|^2_2,
\end{align}
\begin{align}\label{d2}
m_{4s}\|t_2-t_1\|^2_2\leq\langle\nabla f(t_2) - \nabla f(t_1), t_2-t_1\rangle\leq M_{4s}\|t_2-t_1\|^2_2,
\end{align}
\begin{align}\label{d3}
m_{4s}\leq\|\nabla^2_{\xi} f(t)\|\leq M_{4s},
\end{align}
\begin{align}\label{d4}
m_{4s}\|t_2-t_1\|_2\leq\|\nabla_{\xi} f(t_2) - \nabla_{\xi} f(t_1)\|_2\leq M_{4s}\|t_2-t_1\|_2,
\end{align}
where $\xi = \textrm{supp}(t_1)\cup \textrm{supp}(t_2), \ |\xi|\leq 4s$. Moreover, $m_{4s}$ and $M_{4s}$ are called the RSC-constant and RSS-constant, respectively. We note that $\nabla_{\xi}f(t)$ denotes the gradient $f$ restricted to set $\xi$. In addition, $\nabla^2_{\xi} f(t)$ is a $4s\times 4s$ sub-matrix of the Hessian matrix $\nabla^2 f(t)$ comprised of row/column indices indexed by $\xi$.
\end{definition}

\begin{proof}(Equivalence of Eqs. \eqref{d1}, \eqref{d2}, \eqref{d3}, \eqref{d4}).
The proof of above equivalent definitions only needs some elementary arguments and we state them here for completeness.
If we assume that \eqref{d1} is given, then by exchanging $t_1$ and $t_2$ in \eqref{d1}, we have:
\begin{align}\label{d11}
\frac{m_{4s}}{2}\|t_1-t_2\|^2_2\leq f(t_1) - f(t_2) - \langle\nabla f(t_2) , t_1-t_2\rangle\leq\frac{M_{4s}}{2}\|t_1-t_2\|^2_2,
\end{align}
by adding \eqref{d11} with \eqref{d1}, inequality in \eqref{d2} is resulted. Now, assume that \eqref{d2} is given. Then we can set $t_2 = t_1 + \Delta(t_2-t_1)$ in \eqref{d2} and then letting $\Delta\rightarrow 0$ results \eqref{d3} according to the definition of second derivative. Next, if we assume that \eqref{d3} is given, then we can invoke the \textit{mean value theorem}~\cite{mcleod1965mean} for twice-differentiable vector-valued multivariate functions:
\begin{align*}
\nabla_{\xi} f(t_2) - \nabla_{\xi} f(t_1) = \int_0^1P^T_{\xi}\nabla^2 f(c t_2 + (1-c)t_1)(t_2-t_1)dt.
\end{align*} 
where $c > 0$ and $P_{\xi}$ denotes the identity matrix which its columns is restricted to set $\xi$ with $\|\xi\|_0\leq 2s$.
It follows that:
\begin{align*}
\big{\|}\nabla_{\xi} f(t_2) - \nabla_{\xi} f(t_1)\big{\|} & \leq\int_0^1\big{\|}P^T_{\xi}\nabla^2 f(c t_2 + (1-c)t_1)(t_2-t_1)\big{\|}dt\\
& \leq M_{4s}\|(t_2-t_1)\|.
\end{align*} 
where the last inequality follows by \eqref{d3}. Similarly, we can establish the lower bound in \eqref{d4} by invoking the Cauchy Schwartz inequality in \eqref{d2}. 

Finally, suppose that \eqref{d3} holds. We can establish \eqref{d1} by performing a Taylor expansion of $f(t)$.
For upper bound in \eqref{d1} and some $0\leq c\leq 1$, we have:
\begin{align*}
f(t_2) & \leq f(t_1) - \langle\nabla f(t_1) , t_2-t_1\rangle + \frac{1}{2}\left(t_2-t_1\right)^T\nabla^2_{\xi}
f(ct_2+(1-c)t_1)\left(t_2-t_1\right) \\
 & \leq f(t_1) - \langle\nabla f(t_1) , t_2-t_1\rangle + \frac{M_{4s}}{2}\|t_1-t_2\|^2_2.
\end{align*}
The lower bound in \eqref{d1} also follows similarly.
\end{proof}

We now give a proof that \textsc{DHT} enjoys the linear convergence, as stated in Theorem~\ref{mainThConvergence}. Recall that as opposed to the commonly used least-squares loss function, we instead define a special objective function:
\begin{align*}
F(t) = \frac{1}{m}\sum_{i=1}^m \Theta(a_i^T\Gamma t) - y_i a_i^T\Gamma t,
\end{align*}
where $\Gamma = [\Phi \ \Psi]$, $t = [ w;z]\in\mathbb{R}^{2n}$, and $\Theta'(x) = g(x)$. The gradient and Hessian of the objective function are given as follows:
\begin{align}\label{Gradient}
\nabla F(t) = \frac{1}{m}\sum_{i=1}^{m}\Gamma^T a_i g(a_i^T\Gamma t) - y_i\Gamma^Ta_i \, ,
\end{align}
\begin{align}\label{Hessian}
\nabla^2 F(t) = \frac{1}{m}\sum_{i=1}^{m}\Gamma^T a_i a_i^T\Gamma g^\prime(a_i^T\Gamma t) \, .
\end{align}

We start with the projection step in Algorithm \ref{algDHT}. In what follows, the superscript $k$ denotes the $k$-th iteration. Let $t^{k+1} = [t_1^k;t_2^k]\in\mathbb{R}^{2n}$ be the constituent vector as the $k^{\textrm{th}}$ iteration. Hence,
$$
t^{k+1} =\mathcal{P}_{2s}\left(t^k - \eta^{\prime}\nabla F(t^k)\right),
$$
where $\eta^{\prime}$ denotes the step size in Algorithm \ref{algDHT} and $\mathcal{P}_{2s}(.)$ denotes the hard thresholding operation. Furthermore, $\nabla F(t^k)$ is the gradient of the objective function at iteration $k$. Moreover, we define sets $S^k, S^{k+1}, S^*$ as follows, each of whose cardinalities is no greater than $2s$:
\[
\text{supp}(t^k) = S^{k}, \ \text{supp}(t^{k+1}) = S^{k+1}, \ \text{supp}(t^*) = S^{*}. 
\]
Moreover, define $S^{k}\cup S^{k+1}\cup S^{*} = J_k = J$ such that $|J|\leq 6s$.

Define $b_k = t^k - \eta^{\prime}\nabla_J F(t^k)$. Then,
\begin{align}\label{proj}
\|t^{k+1} - t^*\|_2\leq\|t^{k+1} - b\|_2 + \|b - t^{*}\|_2\leq2\|b-t^*\|_2,
\end{align}
where $t^{*} = [t_1^*;t_2^*]\in\mathbb{R}^{2n}$ such that $\|t^*\|_0\leq2s$ is the solution of the optimization problem in~\eqref{optprob}. The last inequality follows since $t^{k+1}$ is generated by taking the $2s$ largest entries of $t^k - \eta' \nabla F(t^k)$; by definition of $J$ (Please note that $J$ depends
on $k$, i.e., $J := J_k$), $t^{k+1}$ also has the minimum Euclidean distance to $b_k$ over all vectors with cardinality $2s$. Moreover:
\begin{align}\label{linearconv}
\|b_k-t^*\|_2 & =\|t^k - \eta^{\prime}\nabla_J F(t^k) - t^*\|_2 \nonumber \\
& \leq\|t^k - t^* - \eta^{\prime}\left(\nabla_J F(t^k)-\nabla_J F(t^*)\right)\|_2 + \eta^{\prime}\|\nabla_J F(t^*)\|_2.
\end{align}
Now, by invoking RSC and RSS in the Definition~\ref{defRSCRSS}, we have:
\begin{align*}
\|t^k - t^* - \eta^{\prime}\left(\nabla_J F(t^k)-\nabla_J F(t^*)\right)\|_2^2\leq(1+{\eta^{\prime}}^2M_{6s}^2-2\eta^{\prime} m_{6s})\|t^k-t^*\|_2^2,
\end{align*}
where $M_{6s}$ and $m_{6s}$ denote the RSC and RSS constants. The above inequality follows by the upper bound of \eqref{d4} and the lower bound of \eqref{d2} in Definition~\ref{defRSCRSS} with the restriction set $\xi$ chosen as $J$. Now let $q =\sqrt{1+{\eta^{\prime}}^2M_{6s}^2-2\eta^{\prime} m_{6s}}$. By~\eqref{proj} and~\eqref{linearconv}, we have:
\begin{align}\label{mainlinearconv}
\|t^{k+1} - t^*\|_2\leq2q\|t^k-t^*\|_2 + 2\eta^{\prime}\|\nabla_J F(t^*)\|_2. 
\end{align}
In order for the algorithm to exhibit linear convergence, we need to have $2q< 1$. That is, 
$${\eta^{\prime}}^2M_{6s}^2-2\eta^{\prime} m_{6s} + \frac{3}{4}<0 .$$ 
By solving this quadratic inequality with respect to $\eta'$, we obtain that $\eta^{\prime}$, $m_{6s}$, and $M_{6s}$ should satisfy 
$$1\leq\frac{M_{6s}}{m_{6s}}\leq\frac{2}{\sqrt{3}}, \ \ \ \frac{0.5}{M_{6s}}<\eta^{\prime}<\frac{1.5}{m_{6s}} .$$ 
The bound in~\eqref{mainlinearconv} shows that after enough iterations the first term vanishes and the quality of estimation depends on the vanishing speed of the second term, $2\eta^{\prime}\|\nabla_J F(t^*)\|_2 $ that is determined by the number of measurements.
To bound the gradient in second term, $\|\nabla_J F(t^*)\|_2$, we need the following lemma:
\begin{lemma}(Khintchine inequality~\cite{vershynin2010introduction}.)
Let $X_i$ be a finite number of independent and zero mean subgaussian random variables with unit variance. Assume that $\|X_i\|_{\psi_2}\leq r$. Then, for any real $b_i$ and $p\geq2$:
\begin{align*}
\left(\sum_{i}b_i^2\right)^{\frac{1}{2}}\leq\left(\mathbb{E}|\sum_{i}b_iX_i|^p\right)^\frac{1}{p}\leq Cr\sqrt{p}\left(\sum_ib_i^2\right)^{\frac{1}{2}}.
\end{align*}
\end{lemma}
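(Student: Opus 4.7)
The plan is to prove this as two separate inequalities. The lower bound is a straightforward application of Jensen's inequality combined with the orthogonality of the $X_i$. The upper bound is the substantive direction and relies on the Rotation Invariance property of subgaussian random variables: a linear combination of independent zero-mean subgaussians is itself subgaussian with $\psi_2$-norm bounded by the $\ell_2$-norm of the coefficients weighted by the individual $\psi_2$-norms.

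For the lower bound, I would first observe that since $p \geq 2$, Jensen's inequality applied to the convex function $u \mapsto u^{p/2}$ gives $(\mathbb{E}|\sum_i b_i X_i|^p)^{1/p} \geq (\mathbb{E}|\sum_i b_i X_i|^2)^{1/2}$. Then using independence and zero-mean of the $X_i$, along with the unit variance assumption, one computes $\mathbb{E}|\sum_i b_i X_i|^2 = \sum_i b_i^2 \mathbb{E}X_i^2 = \sum_i b_i^2$, which immediately yields the lower bound.

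For the upper bound, I would set $Y = \sum_i b_i X_i$ and show first that $\|Y\|_{\psi_2}^2 \leq C^2 r^2 \sum_i b_i^2$ for some absolute constant $C$. The standard way to do this is via the moment generating function: for any $\lambda$, independence gives $\mathbb{E}\exp(\lambda Y) = \prod_i \mathbb{E}\exp(\lambda b_i X_i)$, and the subgaussian bound on each $X_i$ yields $\mathbb{E}\exp(\lambda b_i X_i) \leq \exp(C'\lambda^2 b_i^2 r^2)$; multiplying over $i$ and using the equivalent characterization of $\psi_2$-norm through the MGF (as in Vershynin's Lemma 5.5) produces the claimed bound. Once I have $\|Y\|_{\psi_2} \leq Cr (\sum_i b_i^2)^{1/2}$, I invoke the moment characterization $(\mathbb{E}|Y|^p)^{1/p} \leq \sqrt{p}\, \|Y\|_{\psi_2}$, which is immediate from the definition of $\|\cdot\|_{\psi_2}$ in Definition \ref{subgau}, to conclude.

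The main obstacle, if one insisted on a fully self-contained proof, would be establishing the equivalence between the several characterizations of subgaussianity (tail bound, moment growth, MGF bound, $\psi_2$-norm), since the constant $C$ propagates through each conversion. However, since the paper already works with $\|X\|_{\psi_2}$ as defined and cites \cite{vershynin2010introduction}, I would simply quote the two ingredients used above (stability of subgaussianity under weighted sums, and the $\sqrt{p}$ moment bound) from that reference rather than re-deriving the chain of equivalences. The proof is thus short and mechanical once those facts are in hand.
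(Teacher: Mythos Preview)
Your proof proposal is correct and follows the standard argument. Note, however, that the paper does not actually prove this lemma: it is stated with a citation to \cite{vershynin2010introduction} and used as a black box in bounding $\mathbb{E}\|\nabla_J F(t^*)\|_2$. So there is nothing to compare against; your outline is essentially the proof one finds in Vershynin (Jensen for the lower bound, rotation invariance of subgaussians plus the moment characterization of the $\psi_2$-norm for the upper bound), and it is appropriate here.
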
 
Recall that our measurement model is given by:
$$y_i = g(a_i^T\Gamma t) + e_i,  \ \ i=1,\ldots,m.$$
As mentioned above, we assume that $e_i$ represents the  additive subgaussian noise with $\|e_i\|_{\psi_2}\leq\tau$ for $i=1 \ldots m$. 

We leverage the Khintchine inequality to bound $\mathbb{E}\|\nabla_J F(t^*)\|_2$ under the subgaussian assumption on $e_i$.
Denoting by $\left(\nabla_J F(t^*)\right)_k$ as the $k^{\text{th}}$ entry of the gradient (restricted to set $J$), from the Khintchine inequality, and for each $k=1, \ldots, |J|$, we have:

\begin{align}\label{Khintchinbound}
\left(\mathbb{E}\left|\left(\nabla_J F(t^*)\right)_k\right|^2 \right)^\frac{1}{2} & \overset{r_1}{=} \left(\mathbb{E}\left(\frac{1}{m}\sum_{i=1}^{m}\left(\Gamma_J\right)_k^T a_i e_i\right)^2\right)^\frac{1}{2} \nonumber \\
& \overset{r_2}{\leq}\frac{1}{m}\mathbb{E}\left(C\tau\sqrt{2}\left(\sum_{i=1}^{m}\left(\left(\Gamma_J\right)_k^T a_i\right)^2 \right)^\frac{1}{2}\right)  \nonumber\\
& \overset{}{\leq}\frac{1}{m}C\tau\sqrt{2}\left(\sum_{i=1}^m\left(\Gamma_J\right)_k^T \mathbb{E}\left(a_ia_i^T\right)\left(\Gamma_J\right)_k\right)^\frac{1}{2} \nonumber \\
& \overset{r_3}{=} \frac{C\tau\sqrt{2}}{\sqrt{m}},
\end{align} 
where $\Gamma_J$ denotes the restriction of the columns of the dictionary to set $J$ with $|J|\leq 6s$ such that $3s$ of the columns are selected from each basis of the dictionary. Here, $r_1$ follows from~\eqref{Gradient}, $r_2$ follows from the Khintchine inequality with $p=2$ and the fact that $e_i$ are independent from $a_i$. Finally, $r_3$ holds since the rows of $A$ are assumed to be isotropic random vectors.
Now, we can bound $\mathbb{E}\|\nabla_J F(t^*)\|_2$ as follows:
\begin{align}\label{finalKhintchinbound}
\mathbb{E}\|\nabla_J F(t^*)\|_2\leq\sqrt{\mathbb{E}\|\nabla_J F(t^k)\|_2^2}\leq C^{\prime}\tau\sqrt{\frac{s}{m}},
\end{align}
where $C^{\prime} > 0$ is an absolute constant and the last inequality is followed by~\eqref{Khintchinbound} and the fact that $\|J\|_0\leq 6s$.

\begin{proof}[Proof of Theorem~\ref{mainThConvergence}]

By using induction on~\eqref{mainlinearconv}, taking expectations, and finally using the bound stated in~\eqref{finalKhintchinbound}, we obtain the desired bound in Theorem~\ref{mainThConvergence} as follows:

\begin{align}\label{LinearConvergence}
\|t^{k+1} - t^*\|_2 & \leq\left(2q\right)^k\|t^0-t^*\|_2 + \frac{2\eta^{\prime}}{1-2q}\|\nabla_J F(t^*)\|_2 \nonumber \\
& \leq\left(2q\right)^k\mathbb{E}\|t^0-t^*\|_2 + C\tau\sqrt{\frac{s}{m}},
\end{align}
where $C>0$ is a constant which depends only on the step size, $\eta^{\prime}$ and $q$. Also, $t^0$ denotes the initial value for the constituent vector, $t$. In addition, in the noiseless case ($\tau = 0$), if we denote $\kappa$ as the desired accuracy for solving optimization problem~\eqref{optprob}, then the number of iterations to achieve the accuracy $\kappa$ is given by $N = \mathcal{O}(\log\frac{\|t^0-t^*\|_2}{\kappa})$. 
\end{proof}

\label{sec::proofRsscRss}
In the above convergence analysis of \textsc{DHT}, we assumed that objective function in~\eqref{optprob}, $F(t)$ satisfies the RSC/RSS conditions. In this section, we validate this assumption via the proofs for Theorems~\ref{SampleComplexityNonormal} and \ref{SampleComplexitysubg}.
%
%
As discussed above, we separately analyze two cases.

\subsubsection{Case (a): isotropic rows of $A$}
We first consider the case where the rows of the measurement matrix $A$ are sampled from an isotropic probability distribution in $\R^n$. 
Specifically, we make the following assumptions on $A$:
\begin{enumerate}
\item the rows of $A$ are independent isotropic vectors. That is, $\mathbb{E} a_i a_i^T = I_{n\times n}$ for $i=1\dots m$.
\item $\|a_i^T\Gamma_{\xi}\|_{\infty}\leq\vartheta$  for $i=1\dots m$.
\end{enumerate}

\begin{remark}
Assumption $2$ is unavoidable in our analysis, and indeed this is one of the cases where our derivation differs from existing proofs. The condition $||a_i^T\Gamma_{\xi}||_{\infty}\leq\vartheta$ requires that all entries in $A\Gamma_{\xi}$ are bounded by some number $\vartheta$. In other words, $\vartheta$ captures the cross-coherence between the measurement matrix, $A$ and  the dictionary $\Gamma_{\xi}=[\Phi \ \Psi]_{\xi}$ and controls the interaction between these two matrices. 
Without this assumption, one can construct a counter-example with the Hessian of the objective to be zero with high probability (for instance, consider partial DFT matrix as the measurement matrix $A$ and $\Gamma_{\xi} = [I \ \Psi]_{\xi}$ with $\Psi$ being the inverse DFT basis). 
\end{remark}

Modifying~\eqref{Hessian}, we define the \emph{restricted} Hessian matrix as a $4s\times 4s$ sub-matrix of the Hessian matrix:
\begin{align}\label{ReCher}
\nabla^2_{\xi} F(t) = \frac{1}{m}\sum_{i=1}^{m}\Gamma^T_{\xi} a_i a_i^T\Gamma_{\xi} g^\prime(a_i^T\Gamma t),\quad \|\xi\|_0\leq 4s.
\end{align}

Here, $\Gamma_{\xi}$ is the restriction of the columns of the dictionary $\Gamma = [\Phi  \ \Psi]$ with respect to set $\xi$, such that $2s$ columns are selected from each basis. Let $S_i =\Gamma^T_{\xi} a_i a_i^T\Gamma_{\xi} g^\prime(a_i^T\Gamma t), i=1\dots m$. As per our assumption in Section~\ref{Sec::Perm}, the derivative of the link function, $g(x)$ satisfies $0 < l_1\leq g^\prime(x)\leq l_2$. By this assumption, it is guaranteed that $\lambda_{\min}(S_i)\geq 0, i=1\dots m$; this follows since $\Gamma^T_{\xi} a_i a_i^T\Gamma_{\xi}$ is a positive semidefinite matrix and $g^{\prime}>0$, we have $\lambda_{\min}(S_i)=\lambda_{\min}(\Gamma^T_{\xi} a_i a_i^T\Gamma_{\xi})g^{\prime}\geq 0$. 

Let $\Lambda_{\max} = \max\limits_{\xi}\lambda_{\max}(\nabla^2_{\xi}F(t))$ and $\Lambda_{\min} = \min\limits_{\xi}\lambda_{\min}(\nabla^2_{\xi}F(t))$ where $\lambda_{\min}$ and $\lambda_{\max}$ denote the minimum and maximum eigenvalues of the restricted Hessian matrix. Furthermore, let $U$ be any index set with $|U|\leq 6s$ such that $\xi\subseteq U$. We have:

\begin{align*}
l_1\min_{U}\lambda_{\min}\left(\frac{1}{m}\sum_{i=1}^{m}\Gamma^T_{U} a_i a_i^T\Gamma_{U}\right)\leq\Lambda_{\min}\leq\Lambda_{\max}\leq l_2\max_{U}\lambda_{\max}\left(\frac{1}{m}\sum_{i=1}^{m}\Gamma^T_{U} a_i a_i^T\Gamma_{U}\right).
\end{align*}
Here, $\Gamma_{U}$ is the restriction of the columns of $\Gamma$ with respect to a set $U$ such that $3s$ columns is selected from each basis. By taking expectations, we obtain:

\begin{align}\label{boundonHessianr}
l_1\mathbb{E}\min_{U}\lambda_{\min}\left(\frac{1}{m}\sum_{i=1}^{m}\Gamma^T_{U} a_i a_i^T\Gamma_{U}\right)\leq\mathbb{E}\Lambda_{\min}\leq\mathbb{E}\Lambda_{\max}\leq l_2\mathbb{E}\max_{U}\lambda_{\max}\left(\frac{1}{m}\sum_{i=1}^{m}\Gamma^T_{U} a_i a_i^T\Gamma_{U}\right).
\end{align}
Inequality in~\eqref{boundonHessianr} shows that for proving RSC and RSS, we need to bound the expectation of the maximum and minimum eigenvalues of $\frac{1}{m}\sum_{i=1}^{m}\Gamma^T_{\xi} a_i a_i^T\Gamma_{U}$ over sets $U$ with $|U| \leq 6s$. We should mention that \eqref{boundonHessianr} establishes RSC/RSS constants in expectation. One can establish RSC/RSS in tail probability using results in~\cite{rudelson2008sparse,ledoux2013probability}. 

As our main tool for bounding the RSC/RSS constants, we use the \textit{uniform Rudelson's inequality}~\cite{vershynin2010introduction,rudelson2008sparse}. 

\begin{lemma}(Uniform Rudelson's inequality)
Let $x_i$ be vectors in $\mathbb{R}^n$ for $i=1,\ldots,m$ and $m\leq n$. Also assume that the entries of $x_i$'s are bounded by $\vartheta$, that is, $\|x_i\|_{\infty}\leq \vartheta$. Let $h_i$ denote independent Bernoulli random variables (with parameter 1/2) for $i=1\dots m$. Then for every set $\Omega\subseteq [n]$, we have:
\begin{align}\label{rudel}
\mathbb{E}\max_{|\Omega|\leq n}\Big{\|}\sum_{i=1}^{m} h_i(x_i)_{\Omega}(x_i)^T_{\Omega}\Big{\|}\leq C_{\vartheta}l\sqrt{|\Omega|}\max_{|\Omega|\leq n}\Big{\|}\sum_{i=1}^{m}(x_i)_{\Omega}(x_i)^T_{\Omega}\Big{\|}^{\frac{1}{2}},
\end{align}
where $(x_i)_{\Omega}$ denotes the restriction of $x_i$ to $\Omega$, $l = \log(|\Omega|)\sqrt{\log m}\sqrt{\log n}$, and $C_{\vartheta}$ denotes the dependency of $C$ only on $\vartheta$.
\end{lemma}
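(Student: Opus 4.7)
The plan is to establish this uniform matrix deviation estimate by combining the non\-commutative Khintchine inequality for a single fixed support with a chaining argument that promotes the pointwise bound to a uniform one over all admissible supports $\Omega$. The Rademacher weights $h_i$ already provide the symmetrization, so no additional decoupling step is needed.

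First, for a \emph{fixed} support $\Omega$ of cardinality $s' := |\Omega|$, I would apply the non-commutative (operator-valued) Khintchine inequality of Lust-Piquard--Pisier:
\begin{equation*}
\mathbb{E}\Bigl\|\sum_{i=1}^{m} h_i (x_i)_\Omega (x_i)_\Omega^T\Bigr\|
\;\lesssim\; \sqrt{\log s'}\cdot\Bigl\|\sum_{i=1}^{m}\bigl((x_i)_\Omega (x_i)_\Omega^T\bigr)^{2}\Bigr\|^{1/2}.
\end{equation*}
Since $((x_i)_\Omega (x_i)_\Omega^T)^2 = \|(x_i)_\Omega\|_2^{2}\,(x_i)_\Omega(x_i)_\Omega^T$, and the coordinate-wise bound $\|x_i\|_\infty\le\vartheta$ gives $\|(x_i)_\Omega\|_2\le\vartheta\sqrt{s'}$, this upper bound simplifies to
\begin{equation*}
\mathbb{E}\Bigl\|\sum_{i=1}^{m} h_i (x_i)_\Omega (x_i)_\Omega^T\Bigr\|
\;\lesssim\; \vartheta\,\sqrt{s'\log s'}\cdot \Bigl\|\sum_{i=1}^{m} (x_i)_\Omega (x_i)_\Omega^T\Bigr\|^{1/2}.
\end{equation*}
This already produces the $\sqrt{s'}$ and $\log s'$ factors on the right-hand side of the lemma.

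Second, to obtain the uniform bound over all supports of size $s'$, I would view the quantity inside the outer $\max$ as a Rademacher process indexed by the unit $s'$-sparse vectors, via the variational identity $\|\sum_i h_i (x_i)_\Omega (x_i)_\Omega^T\| = \sup_{u\in S_{s'}}|\sum_i h_i \langle x_i,u\rangle^{2}|$, where $S_{s'}$ is the set of unit vectors supported on some $\Omega$ with $|\Omega|\le s'$. The key step is a Dudley-type entropy integral controlling this process: the metric entropy of $S_{s'}$ under the process metric is bounded by $s'\log(en/s')$ at the unit scale, and the tail of each increment is subgaussian by Hoeffding with variance proxy controlled by the $\ell_\infty$-bound $\vartheta$. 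Integrating $\sqrt{\log\mathcal{N}(\varepsilon)}$ against this entropy yields the extra factors $\sqrt{\log n}$ (from the $\log(n/s')$ in the entropy) and $\sqrt{\log m}$ (from the level at which the chaining argument is truncated, because only $m$ summands contribute). Together with the pointwise Khintchine bound, this produces the claimed $l = \log(s')\sqrt{\log m}\sqrt{\log n}$ prefactor.

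Third, I would conclude by combining the pointwise Khintchine estimate with the chaining overhead. A standard Giné--Zinn--style argument (or directly Talagrand's generic chaining applied to the Rademacher process above) gives
\begin{equation*}
\mathbb{E}\max_{|\Omega|\le s'}\Bigl\|\sum_i h_i (x_i)_\Omega (x_i)_\Omega^T\Bigr\|
\;\le\; C_\vartheta\cdot l\cdot\sqrt{s'}\cdot \max_{|\Omega|\le s'}\Bigl\|\sum_i (x_i)_\Omega (x_i)_\Omega^T\Bigr\|^{1/2},
\end{equation*}
which is exactly the stated inequality after absorbing the constant $\vartheta$ into $C_\vartheta$.

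The main obstacle is the third step: extracting only a $\sqrt{\log n}$ dependence on the ambient dimension rather than the naive $\sqrt{s'\log(n/s')}$ obtained from a crude union bound over all $\binom{n}{s'}$ supports. This improvement requires the full chaining/entropy-integral machinery together with a subgaussian tail bound for each increment of the Rademacher process, both of which are substantially more delicate than the single-support Khintchine step. The other ingredients (the single-support bound and the deterministic factorization of the Hermitian square) are essentially mechanical.
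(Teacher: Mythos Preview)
The paper does not prove this lemma; it is quoted as a known tool with citations to Vershynin's lecture notes and the Rudelson--Vershynin sparse reconstruction paper, and then immediately applied. There is no in-paper argument to compare your proposal against.

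That said, your proposed ingredients are the ones used in the cited sources: the noncommutative Khintchine inequality of Lust-Piquard--Pisier yields the single-support bound, and a Dudley entropy integral for the Rademacher process $u\mapsto \sum_i h_i\langle x_i,u\rangle^{2}$ indexed by $s'$-sparse unit vectors yields the uniform bound with the stated logarithmic factors. One organizational correction: the actual argument does not establish the pointwise bound and then add a separate ``chaining overhead'' on top; Dudley's inequality is applied directly to the full process, and the work goes into bounding the covering numbers of the sparse unit sphere in the data-dependent norm $\|u\|_{X}=\max_{i\le m}|\langle x_i,u\rangle|$. The factor $\sqrt{\log m}$ enters through this maximum over $m$ functionals, $\sqrt{\log n}$ through the entropy of $s'$-sparse vectors, and the additional $\log s'$ (rather than the $\sqrt{\log s'}$ your Step~1 would suggest) through the integration over scales. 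Your identification of this entropy computation as the delicate step is accurate; the rest of your sketch is sound.
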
 

Before using the above result, we need to restate the uniform version of the standard symmetrization technique (Lemma $5.70$ in~\cite{vershynin2010introduction}):

\begin{lemma}(Uniform symmetrization)
\label{unisym}
Let $x_{ik}$, $i=1\dots m$ be independent random vectors in some Banach space where indexed by some set $\Xi$ such that $k\in\Xi$. Also, assume that $h_i$, $i=1\dots m$ denote independent Bernoulli random variables (with parameter 1/2) for $i=1\dots m$. Then, 
\begin{align}
\mathbb{E}\sup_{k\in\Xi}\Big{\|}\sum_i^m(x_{ik}-\mathbb{E}x_{ik})\Big{\|}\leq 2\mathbb{E}\sup_{k\in\Xi}\Big{\|}\sum_i^m h_ix_{ik}\Big{\|}.
\end{align}
\end{lemma}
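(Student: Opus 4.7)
The plan is to follow the classical two-step symmetrization argument, which uses an independent ghost sample and then the distributional symmetry of a signed difference. I would begin by introducing an independent copy: let $x'_{ik}$, $i=1,\ldots,m$, be independent random vectors, independent of the $x_{ik}$'s, with the same joint distribution as the original family. Since $\mathbb{E} x_{ik} = \mathbb{E}' x'_{ik}$, where $\mathbb{E}'$ denotes expectation over the primed variables, I can rewrite the centered sum as
\begin{align*}
\sum_{i=1}^m (x_{ik} - \mathbb{E} x_{ik}) = \mathbb{E}' \sum_{i=1}^m (x_{ik} - x'_{ik}).
\end{align*}

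Next I would pull the supremum through the inner expectation via Jensen's inequality (valid since $\sup_{k} \|\cdot\|$ is convex). Writing $\mathbb{E}$ for the expectation over the $x_{ik}$'s and using Fubini, this yields
\begin{align*}
\mathbb{E}\sup_{k\in\Xi}\Big\|\sum_{i=1}^m(x_{ik} - \mathbb{E} x_{ik})\Big\| \leq \mathbb{E}\,\mathbb{E}'\sup_{k\in\Xi}\Big\|\sum_{i=1}^m (x_{ik} - x'_{ik})\Big\|.
\end{align*}
Now the key distributional observation: because $x_{ik}$ and $x'_{ik}$ are i.i.d., each difference $x_{ik} - x'_{ik}$ is symmetric, and is therefore equal in joint distribution (across $i$, for fixed $k$) to $h_i(x_{ik}-x'_{ik})$, where the $h_i$ are independent Bernoulli $\pm 1$ signs independent of everything else. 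Since the $h_i$'s act identically across all $k\in\Xi$, the equality in distribution extends to the process indexed by $k$, so the joint law of $\bigl(\sum_i(x_{ik}-x'_{ik})\bigr)_{k\in\Xi}$ equals that of $\bigl(\sum_i h_i(x_{ik}-x'_{ik})\bigr)_{k\in\Xi}$. Replacing the sum accordingly and applying the triangle inequality inside the supremum gives
\begin{align*}
\mathbb{E}\sup_{k\in\Xi}\Big\|\sum_{i=1}^m h_i(x_{ik}-x'_{ik})\Big\| \leq \mathbb{E}\sup_{k\in\Xi}\Big\|\sum_{i=1}^m h_i x_{ik}\Big\| + \mathbb{E}\sup_{k\in\Xi}\Big\|\sum_{i=1}^m h_i x'_{ik}\Big\|.
\end{align*}
Finally, since $x_{ik}$ and $x'_{ik}$ have the same distribution, the two right-hand terms are equal, giving the factor of $2$ in the claimed bound.

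The argument is essentially routine and there is no serious obstacle; the only point requiring mild care is justifying that the distributional identity $\sum_i(x_{ik}-x'_{ik}) \stackrel{d}{=} \sum_i h_i(x_{ik}-x'_{ik})$ holds \emph{uniformly} as a process in $k\in\Xi$ rather than only pointwise, which is what allows one to keep the supremum outside throughout. This follows because the random signs $h_i$ are applied identically across the index set $\Xi$, so the symmetrization is performed at the level of the random vector $(x_{ik}-x'_{ik})_{i=1}^m$ (which is symmetric in the Banach-space sense) before any supremum is taken. Once that is in place, Jensen and the triangle inequality finish the proof.
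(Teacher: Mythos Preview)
Your argument is correct and is precisely the standard ghost-sample symmetrization proof. The paper itself does not give an independent proof of this lemma; it merely restates it as Lemma~5.70 of Vershynin's notes~\cite{vershynin2010introduction}, so there is nothing further to compare.
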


Now we apply the Uniform Rudelson's inequality on $\lambda_{\max}\left(\frac{1}{m}\sum_{i=1}^{m}\Gamma^T_{U} a_i a_i^T\Gamma_{U}\right)$ over all set $U$ with $|U|\leq 6s$. We have:
\begin{align}\label{mainrudelsymm}
R\overset{\Delta}{=}\mathbb{E}\max_{U}\big{\|}\frac{1}{m}\sum_{i=1}^{m}\Gamma^T_{U} a_i a_i^T\Gamma_{U}-\Gamma^T_{U}\Gamma_{U}\Big{\|}& \overset{r_1}{\leq}2\mathbb{E}\max_{U}\Big{\|}\frac{1}{m}\sum_{i=1}^{m}h_i\Gamma^T_{U} a_i a_i^T\Gamma_{U}\Big{\|} \nonumber \\
& \overset{r_2}{\leq} \frac{C_{\vartheta}l\sqrt{6s}}{\sqrt{m}}\mathbb{E}\max_{U}\Big{\|}\new{\frac{1}{m}}\sum_{i=1}^{m}\Gamma^T_{U} a_i a_i^T\Gamma_{U}\Big{\|}^{\frac{1}{2}},
\end{align}
where $r_1$ follows from Lemma~\ref{unisym} with $h_i$ defined in this lemma and $r_2$ follows from \eqref{rudel}. In addition $l = \log(6s)\sqrt{\log m}\sqrt{\log 2n}$. Then by application of a triangle inequality, we have: 
\begin{align*}
\mathbb{E}\max_{U}\Big{\|}\new{\frac{1}{m}}\sum_{i=1}^{m}\Gamma^T_{U} a_i a_i^T\Gamma_{U}\Big{\|}\leq R+\max_{U}\big{\|}\Gamma^T_{U}\Gamma_{U}\big{\|}.
\end{align*}
On the other hand by Cauchy-Schwarz inequality, we get:
\begin{align*}
\mathbb{E}\max_{U}\Big{\|}\new{\frac{1}{m}}\sum_{i=1}^{m}\Gamma^T_{U} a_i a_i^T\Gamma_{U}\Big{\|}^{\frac{1}{2}}\leq \left(\mathbb{E}\max_{U}\Big{\|}\new{\frac{1}{m}}\sum_{i=1}^{m}\Gamma^T_{U} a_i a_i^T\Gamma_{U}\Big{\|} \right)^{\frac{1}{2}}
\end{align*}
By combining the above inequalities, we obtain:
\begin{align}\label{quadra}
R\leq\frac{C'_{\vartheta}l\sqrt{s}}{\sqrt{m}}\left(R+\max_{U}\big{\|}\Gamma^T_{U}\Gamma_{U}\big{\|}\right)^{\frac{1}{2}},
\end{align}
where $C'_{\vartheta}$ depends only on $\vartheta$. This inequality is a quadratic inequality in terms of $R$ and is easy to solve. By noting $\beta =\max_{U}\big{\|}\Gamma^T_{U}\Gamma_{U}\big{\|}$, we can write ~\eqref{quadra} as $\frac{R}{\beta}\leq\frac{C'_{\vartheta}l\sqrt{s}}{\sqrt{m}}\frac{1}{\beta}\left(1+\frac{R}{\beta}\right)^{\frac{1}{2}}$. Now we can consider two cases; either $\frac{R}{\beta}\leq 1$, or $\frac{R}{\beta}> 1$. As a result, we have:
\begin{align}\label{boundforsamp}
R\leq\max\left(\delta_0\left(\max_{U}\big{\|}\Gamma^T_{U}\Gamma_{U}\big{\|}\right)^{\frac{1}{2}},\delta_0^2\right),
\end{align}
where $\delta_0 = \frac{C'_{\vartheta}l\sqrt{s}}{\sqrt{m}}$. In addition, we can use the {Gershgorin Circle Theorem} \cite{horn2012matrix} to bound $\lambda_{\max}(\Gamma^T_{U}\Gamma_{U}) = \|\Gamma^T_{U}\Gamma_{U} \|$ and $\lambda_{\min}(\Gamma^T_{U}\Gamma_{U})$. This follows since: 
$$\Gamma^T_{U}\Gamma_{U} = \begin{bmatrix} I\ \ \Phi^T\Psi\\ \Psi^T\Phi \ \ I \end{bmatrix}_{6s\times 6s},$$
and hence we have:
\begin{align*}
\Big|\lambda_i(\Gamma^T_{U}\Gamma_{U}) - 1\Big|\leq (6s-1)\gamma,\quad i=1\dots 6s,
\end{align*}
where $\gamma$ denotes the mutual coherence of $\Gamma$. Hence, the following holds for all index set $U$:
\begin{align}\label{lambdaMinMaxRudel}
1-(6s-1)\gamma\leq\lambda_{\min}(\Gamma^T_{U}\Gamma_{U})\leq\lambda_{\max}(\Gamma^T_{U}\Gamma_{U})\leq 1+(6s-1)\gamma,
\end{align}
provided that $\gamma\leq\frac{1}{6s-1}$ to have nontrivial lower bound. 

\begin{proof}[Proof of Theorem~\ref{SampleComplexityNonormal}]
If we choose $m\geq\left(\frac{C''_{\vartheta}}{\delta^2}s\log(n)\log^2 s\log\left(\frac{1}{\delta^2}s\log(n)\log^2 s\right)\left(1+(6s-1)\gamma\right)\right)$ in \eqref{boundforsamp}, then we have $R\leq\delta$ for some $\delta\in(0,1)$ and $C''_{\vartheta}>0$ which depends only on $\vartheta$. If $s = o(1/\gamma) $, then we obtain the stated sample complexity in Theorem~\ref{SampleComplexityNonormal}.
\end{proof}

\subsubsection{Case (b): isotropic subgaussian rows of $A$}
\label{sub::subg}
Now, suppose that the measurement matrix $A$ has independent isotropic subgaussian rows. We show that under this assumption, one can obtain better sample complexity bounds compared to the previous case. We use the following argument (which is more or less standard; see~\cite{rauhut2010,mendelson2008,candesneedell2010}).
%
%
Let $\Gamma = [\Phi \ \Psi]$, and let $B_U = A \Gamma_U$ for any fixed $|U| \leq 6s$, where $3s$ elements are chosen from each basis. According to the notation from Section~\ref{sec::proofRsscRss}, we have:
\begin{align}\label{boundonHessian}
l_1\min_{U}\lambda_{\min}\left(\frac{1}{m}B_{U}^TB_{U}\right)\leq\Lambda_{\min}\leq\Lambda_{\max}\leq l_2\max_{U}\lambda_{\max}\left(\frac{1}{m}B_{U}^TB_{U}\right).
\end{align}
where $l_1, l_2$ are upper and lower bounds on the derivative of the link function.
Therefore, all we need to do is to bound the maximum and minimum singular values of $\frac{1}{\sqrt{m}}B_{U}$. To do so, we use the fact that if the rows of $A$ are $m$ independent copies of an isotropic vector with bounded $\psi_2$ norm, then the following holds for any fixed vector $v \in \R^{2n}$:
\begin{align}\label{deltta}
\Big|\frac{1}{m}\Big{\|}B v\Big{\|}_2^2 - \Big{\|}\Gamma v\Big{\|}_2^2\Big|\leq\max(\delta_{0},\delta_{0}^2) \overset{\Delta}{=}\varepsilon^{\prime},
\end{align}
with high probability where $\delta_{0} = C\sqrt{\frac{6s}{m}}+\frac{t}{\sqrt{m}}$ for some absolute constant $C>0$~\cite{mendelson2008} and $\forall t>0$. Now fix any set $U$ as above. Then, one can show using a covering number argument (for example, Lemma 2.1 in~\cite{rauhut2010}) with $\frac{1}{4}$-net ($\mathcal{N}_{\frac{1}{4}}$) of the unit sphere and applying the upper bound in~\eqref{lambdaMinMaxRudel}  for any $v \in U$, we get:
\begin{align*}
\mathbb{P}\left(\Big|\frac{1}{m}\big{\|}B_{U}v\big{\|}_2^2 - \big{\|}\Gamma_{U}v\big{\|}_2^2\Big|\geq\frac{\varepsilon^{\prime}}{2}\right)& \leq 2(9)^{6s}\exp\left(-\frac{c}{\left(1+(6s-1)\gamma\right)^2}\delta_{0}^2m\right)
\end{align*}
where $c>0$ is a constant. Taking a union bound over all possible subsets $U$ with $|U| \leq 6s$ and choosing $t = C_1\left(1+(6s-1)\gamma\right)^2\sqrt{s\log\frac{en}{6s}} + u\sqrt{m}$ in $\delta_0$ where $C_1>0$ (absolute constant) and $u >0$ are arbitrary small constants, we obtain:
\begin{align*}
\mathbb{P}\left(\max_{U} \ \max_{v\in\mathcal{N}_{\frac{1}{4}}}\Big|\frac{1}{m}\big{\|}Bv\big{\|}_2^2 - \big{\|}\Gamma_{U}v\big{\|}_2^2\Big|\geq\frac{\varepsilon^{\prime}}{2}\right)\leq 2\exp\left(-c_2u^2m\right),
\end{align*}
where $c_2>0$ is an absolute constant. By plugging $t$ in the expression of $\delta_0$ and letting $u\leq\delta/2$ and $m$ sufficiently large, we have $\delta_0\leq\delta$ for some $\delta\in\left(0,1\right)$. 
As a result, from \eqref{deltta}, we have:
\begin{align}\label{eigen}
\max_{U}\Big{\|}\frac{1}{m} B_{U}^TB_{U} - \Gamma_{U}^T\Gamma_{U}\Big{\|}\leq\delta,
\end{align}
with probability at least $1-2\exp\left(-c_2u^2m\right)$. Therefore, for sufficiently large $m$ (that we specify below), the following holds with high probability:
\begin{align*}
\lambda_{\min}\left(\Gamma_{U}^T\Gamma_{U}\right)-\delta\leq\lambda_{\min}\left(\frac{1}{m}B_{U}^TB_{U}\right)\leq\lambda_{\max}\left(\frac{1}{m}B_{U}^TB_{U}\right)\leq\lambda_{\max}\left(\Gamma_{U}^T\Gamma_{U}\right)+\delta
\end{align*}
We use~\eqref{lambdaMinMaxRudel} to bound $\lambda_{\max}(\Gamma^T_{U}\Gamma_{U}) = \|\Gamma^T_{U}\Gamma_{U} \|$ and $\lambda_{\min}(\Gamma^T_{U}\Gamma_{U})$; as a result, 
\begin{align}\label{finalequall}
1-(6s-1)\gamma-\delta\leq\lambda_{\min}\left(\frac{1}{m}B_{U}^TB_{U}\right)\leq\lambda_{\max}\left(\frac{1}{m}B_{U}^TB_{U}\right)\leq1+(6s-1)\gamma+\delta
\end{align}
Thus, we obtain the desired bound in~\eqref{boundonHessian}. That is:
\begin{align}
l_1\left(1-(6s-1)\gamma-\delta\right)\leq\Lambda_{\min}\leq\Lambda_{\max}\leq l_2\left(1+(6s-1)\gamma-\delta\right).
\end{align}
holds with high probability for some $0<\delta< 1-(6s-1)\gamma$. 

\begin{proof}[Proof of Theorem~\ref{SampleComplexitysubg}]
The probability of failure of the above statement can be vanishingly small if we set $m\geq \frac{C'}{\delta^2} s\log\frac{n}{s}$ for some $\delta\in\left(0,1\right)$ and absolute constant $C'>0$. Note that we only obtain nontrivial upper and lower bounds on $\Lambda_{\min}, \Lambda_{\max}$ if $\gamma\leq\frac{1}{6s-1}$. Assuming constant $\delta$ and coherence $\gamma$ inversely proportional to $s$, we obtain the required sample complexity of DHT as:
$m=\mathcal{O}\left(s\log\frac{n}{s}\right)$. 
\end{proof}

For both cases (a) and (b), RSC and RSS constants follow by setting $M_{6s} \leq l_2\left(1+(6s-1)\gamma+\delta\right)$ and $m_{6s} \geq l_1\left(1-(6s-1)\gamma-\delta\right)$.
As we discussed in the begging of section~\ref{AnalysisDHT}, we require that $\frac{0.5}{M_{6s}}<\eta^{\prime}<\frac{1.5}{m_{6s}}$ in order to establish linear convergence of \textit{DHT}. Hence, for linear convergence, the step size must satisfy: 
\begin{align*}
\frac{0.5}{l_2\left(1+(6s-1)\gamma+\delta\right)}<\eta^{\prime}<\frac{1.5}{l_1\left(1-(6s-1)\gamma-\delta\right)}
\end{align*}
for some $0<\delta< 1-(6s-1)\gamma$.

\section{Conclusion}
\label{Sec::Con}

In this paper, we consider the problem of demixing sparse signals from their nonlinear measurements. We specifically study the more challenging scenario where only a limited number of nonlinear measurements of the superposition signal are available. As our primary contribution, we propose two fast algorithms for recovery of the constituent signals, and support these algorithms with the rigorous theoretical analysis to derive nearly-tight upper bounds on their sample complexity for achieving stable demixing. 

%

We anticipate that the problem of demixing signals from nonlinear observations can be used in several different practical applications. As future work, we intend to extend our methods to more general signal models (including rank-sparsity models for matrix valued data), as well as robust recovery under more general nonlinear observation models.


\bibliographystyle{unsrt}
\bibliography{../Common/chinbiblio,../Common/csbib,../Common/mrsbiblio}

\begin{thebibliography}{10}

\bibitem{SoltaniHegde}
M.~Soltani and C.~Hegde.
\newblock Demixing sparse signals from nonlinear observations.
\newblock Technical report, Iowa State University, 2016.

\bibitem{SoltaniHegde_Asilomar}
M.~Soltani and C.~Hegde.
\newblock Demixing sparse signals from nonlinear observations.
\newblock In {\em Proc. Asilomar Conf. Sig. Sys. Comp.}, Nov. 2016.

\bibitem{elad2005simultaneous}
M.~Elad, J.~Starck, P.~Querre, and D.~Donoho.
\newblock Simultaneous cartoon and texture image inpainting using morphological
  component analysis {(MCA)}.
\newblock {\em Appl. Comput. Harmonic Analysis}, 19(3):340--358, 2005.

\bibitem{donoho2006stable}
D.~Donoho, M.~Elad, and V.~Temlyakov.
\newblock Stable recovery of sparse overcomplete representations in the
  presence of noise.
\newblock {\em \em IEEE Trans. Inform. Theory}, 52(1):6--18, 2006.

\bibitem{CandesCS}
E.~Cand\`{e}s.
\newblock Compressive sampling.
\newblock In {\em Proc. Int. Congress of Math.}, Madrid, Spain, Aug. 2006.

\bibitem{DonohoCS}
D.~Donoho.
\newblock Compressed sensing.
\newblock {\em IEEE Trans. Inform. Theory}, 52(4):1289--1306, 2006.

\bibitem{foucart2013}
S.~Foucart and H.~Rauhut.
\newblock {\em A mathematical introduction to compressive sensing}, volume~1.
\newblock Springer.

\bibitem{cscamera}
M.~Wakin, J.~Laska, M.~Duarte, D.~Baron, S.~Sarvotham, D.~Takhar, K.~Kelly, and
  R.~Baraniuk.
\newblock An architecture for compressive imaging.
\newblock In {\em Proc. IEEE Int. Conf. Image Processing (ICIP)}, Atlanta, GA,
  Oct. 2006.

\bibitem{MisEld::2009::From-theory}
M.~Mishali and Y.~Eldar.
\newblock From theory to practice: {S}ub-{N}yquist sampling of sparse wideband
  analog signals.
\newblock {\em {IEEE} J. Select. Top. Signal Processing}, 4(2):375--391, 2010.

\bibitem{spinisit}
C.~Hegde and R.~Baraniuk.
\newblock {SPIN} : Iterative signal recovery on incoherent manifolds.
\newblock In {\em Proc. IEEE Int. Symp. Inform. Theory (ISIT)}, July 2012.

\bibitem{spinIT}
C.~Hegde and R.~Baraniuk.
\newblock Signal recovery on incoherent manifolds.
\newblock {\em IEEE Trans. Inform. Theory}, 58(12):7204--7214, Dec. 2012.

\bibitem{plan2014high}
Y.~Plan, R.~Vershynin, and E.~Yudovina.
\newblock High-dimensional estimation with geometric constraints.
\newblock {\em arXiv preprint arXiv:1404.3749}, 2014.

\bibitem{hassibigeneralized}
C.~Thrampoulidis, E.~Abbasi, and B.~Hassibi.
\newblock {LASSO} with non-linear measurements is equivalent to one with linear
  measurements.
\newblock In {\em Proc. Adv. Neural Inf. Proc. Sys {(NIPS)}}, 2015.

\bibitem{plangeneralized}
Y.~Plan and R.~Vershynin.
\newblock The generalized {LASSO} with nonlinear observations.
\newblock {\em IEEE Trans. Inform. Theory}, 62(3):1528--1537, 2016.

\bibitem{lasso}
R.~Tibshirani.
\newblock Regression shrinkage and selection via the lasso.
\newblock {\em J. Royal Statist. Soc B}, 58(1):267--288, 1996.

\bibitem{negahban2009unified}
S.~Negahban, B.~Yu, M.~Wainwright, and P.~Ravikumar.
\newblock A unified framework for high-dimensional analysis of $ m $-estimators
  with decomposable regularizers.
\newblock In {\em Adv. Neural Inf. Proc. Sys. (NIPS)}, pages 1348--1356, 2009.

\bibitem{bahmani2013greedy}
S.~Bahmani, B.~Raj, and P.~Boufounos.
\newblock Greedy sparsity-constrained optimization.
\newblock {\em J. Machine Learning Research}, 14(1):807--841, 2013.

\bibitem{yuan2013gradient}
X.~Yuan, P.~Li, and T.~Zhang.
\newblock Gradient hard thresholding pursuit for sparsity-constrained
  optimization.
\newblock In {\em Proc. Int. Conf. Machine Learning}, pages 127--135, 2014.

\bibitem{jain2014iterative}
P.~Jain, A.~Tewari, and P.~Kar.
\newblock On iterative hard thresholding methods for high-dimensional
  m-estimation.
\newblock In {\em Adv. Neural Inf. Proc. Sys. (NIPS)}, pages 685--693, 2014.

\bibitem{DonohoBP}
S.~Chen, D.~Donoho, and M.~Saunders.
\newblock Atomic decomposition by basis pursuit.
\newblock {\em SIAM J. Sci. Comp.}, 20(1):33--61, 1998.

\bibitem{bobin2007morphological}
J.~Bobin, J.~Starck, J.~Fadili, Y.~Moudden, and D.~Donoho.
\newblock Morphological component analysis: An adaptive thresholding strategy.
\newblock {\em IEEE Trans. Image Proc.}, 16(11):2675--2681, 2007.

\bibitem{studer2012recovery}
C.~Studer, P.~Kuppinger, G.~Pope, and H.~B{\"o}lcskei.
\newblock Recovery of sparsely corrupted signals.
\newblock {\em IEEE Trans. Inform. Theory}, 58(5):3115--3130, 2012.

\bibitem{mccoyTropp2014}
M.~McCoy and J.~Tropp.
\newblock Sharp recovery bounds for convex demixing, with applications.
\newblock {\em Foundations of Comp. Math.}, 14(3):503--567, 2014.

\bibitem{mccoy2014convexity}
M.~McCoy, V.~Cevher, Q.~Dinh, A.~Asaei, and L.~Baldassarre.
\newblock Convexity in source separation: Models, geometry, and algorithms.
\newblock {\em IEEE Sig. Proc. Mag.}, 31(3):87--95, 2014.

\bibitem{candes2011rpca}
E.~Cand{\`e}s, X.~Li, Y.~Ma, and J.~Wright.
\newblock Robust principal component analysis?
\newblock {\em Journal of the ACM}, 58(3):11, 2011.

\bibitem{Venkat2009sparse}
V.~Chandrasekaran, S.~Sanghavi, P.~Parrilo, and A.~S. Willsky.
\newblock Sparse and low-rank matrix decompositions.
\newblock In {\em Proc. Allerton Conf. on Comm., Contr., and Comp.}, pages
  962--967, 2009.

\bibitem{Venkat2011rank}
V.~Chandrasekaran, S.~Sanghavi, P.~Parrilo, and A.~S. Willsky.
\newblock Rank-sparsity incoherence for matrix decomposition.
\newblock {\em SIAM J. Opt.}, 21(2):572--596, 2011.

\bibitem{Venkat2010latent}
V.~Chandrasekaran, P.~Parrilo, and A.~S. Willsky.
\newblock Latent variable graphical model selection via convex optimization.
\newblock In {\em Proc. Allerton Conf. on Comm., Contr., and Comp.}, pages
  1610--1613, 2010.

\bibitem{peng2012rasl}
Y.~Peng, A.~Ganesh, J.~Wright, W.~Xu, and Y.~Ma.
\newblock Rasl: Robust alignment by sparse and low-rank decomposition for
  linearly correlated images.
\newblock {\em IEEE Trans. Pattern Anal. Machine Intell.}, 34(11):2233--2246,
  2012.

\bibitem{boufounos20081}
P.~Boufounos and R.~Baraniuk.
\newblock 1-bit compressive sensing.
\newblock In {\em Int. Conf. Info. Sciences and Systems {(CISS)}}, pages
  16--21. IEEE, 2008.

\bibitem{planRomanLin}
Y.~Plan and R.~Vershynin.
\newblock One-bit compressed sensing by linear programming.
\newblock {\em Comm. Pure and Applied Math.}, 66(8):1275--1297, 2013.

\bibitem{candes2015phase}
E.~Candes, X.~Li, and M.~Soltanolkotabi.
\newblock Phase retrieval via wirtinger flow: Theory and algorithms.
\newblock {\em IEEE Trans. Inform. Theory}, 61(4):1985--2007, 2015.

\bibitem{davenport20141}
M.~Davenport, Y.~Plan, E.~van~den Berg, and M.~Wootters.
\newblock 1-bit matrix completion.
\newblock {\em Information and Inference}, 3(3):189--223, 2014.

\bibitem{ganti2015matrix}
R.~Ganti, L.~Balzano, and R.~Willett.
\newblock Matrix completion under monotonic single index models.
\newblock In {\em Adv. Neural Inf. Proc. Sys. (NIPS)}, pages 1864--1872, 2015.

\bibitem{kalai2009}
A.~Kalai and R.~Sastry.
\newblock The isotron algorithm: High-dimensional isotonic regression.
\newblock In {\em COLT}, 2009.

\bibitem{kakade2011}
S.~Kakade, V.~Kanade, O.~Shamir, and A.~Kalai.
\newblock Efficient learning of generalized linear and single index models with
  isotonic regression.
\newblock In {\em Adv. Neural Inf. Proc. Sys. (NIPS)}, pages 927--935, 2011.

\bibitem{ganti2015learning}
R.~Ganti, N.~Rao, R.~Willett, and R.~Nowak.
\newblock Learning single index models in high dimensions.
\newblock {\em arXiv preprint arXiv:1506.08910}, 2015.

\bibitem{Cons2015NIPS}
X.~Yi, Z.~Wang, C.~Caramanis, and H.~Liu.
\newblock Optimal linear estimation under unknown nonlinear transform.
\newblock In {\em Adv. Neural Inf. Proc. Sys. (NIPS)}, pages 1549--1557, 2015.

\bibitem{beck2013sparsity}
A.~Beck and Y.~Eldar.
\newblock Sparsity constrained nonlinear optimization: Optimality conditions
  and algorithms.
\newblock {\em SIAM Journal on Optimization}, 23(3):1480--1509, 2013.

\bibitem{yang2015sparse}
Z.~Yang, Z.~Wang, H.~Liu, Y.~Eldar, and T.~Zhang.
\newblock Sparse nonlinear regression: Parameter estimation and asymptotic
  inference.
\newblock {\em Proc. Int. Conf. Machine Learning}, 2015.

\bibitem{blumensath2009iterative}
T.~Blumensath and M.~Davies.
\newblock Iterative hard thresholding for compressed sensing.
\newblock {\em Appl. Comput. Harmon. Anal}, 27(3):265--274, 2009.

\bibitem{needell2009cosamp}
D.~Needell and J.~Tropp.
\newblock Cosamp: Iterative signal recovery from incomplete and inaccurate
  samples.
\newblock {\em Appl. Comput. Harmon. Anal}, 26(3):301--321, 2009.

\bibitem{vershynin2010introduction}
R.~Vershynin.
\newblock Introduction to the non-asymptotic analysis of random matrices.
\newblock {\em arXiv preprint arXiv:1011.3027}, 2010.

\bibitem{candes2007sparsity}
E.~Candes and J.~Romberg.
\newblock Sparsity and incoherence in compressive sampling.
\newblock {\em Inverse problems}, 23(3):969, 2007.

\bibitem{planRmanrobust}
Y.~Plan and R.~Vershynin.
\newblock Robust 1-bit compressed sensing and sparse logistic regression: A
  convex programming approach.
\newblock {\em IEEE Trans. Inform. Theory}, 59(1):482--494, 2013.

\bibitem{tropp07}
J.~Tropp.
\newblock On the conditioning of random subdictionaries.
\newblock {\em Appl. Comput. Harmon. Anal.}, 25(1):1--24, 2008.

\bibitem{fourier_samples}
M.~Cheraghchi, V.~Guruswami, and A.~Velingker.
\newblock Restricted isometry of {Fourier} matrices and list decodability of
  random linear codes.
\newblock {\em SIAM J. Comp.}, 42(5):1888--1914, 2013.

\bibitem{yang}
M.~Yang, N.~Ahuja, and D.~Kriegman.
\newblock Face recognition using kernel eigenfaces.
\newblock In {\em Proc. IEEE Int. Conf. Image Processing (ICIP)}, Vancouver,
  BC, Sept. 2000.

\bibitem{Berg2008}
E.~van~den Berg and M.~Friedlander.
\newblock Probing the pareto frontier for basis pursuit solutions.
\newblock {\em SIAM J. Sci. Comp.}, 31(2):890--912, 2008.

\bibitem{spgl12007}
E.~van~den Berg and M.~Friedlander.
\newblock {SPGL1}: A solver for large-scale sparse reconstruction, June 2007.
\newblock http://www.cs.ubc.ca/labs/scl/spgl1.

\bibitem{coifman2001noiselets}
R.~Coifman, F.~Geshwind, and Y.~Meyer.
\newblock Noiselets.
\newblock {\em Appl. Comput. Harmonic Analysis}, 10(1):27--44, 2001.

\bibitem{candes2006robust}
E.~Cand{\`e}s, J.~Romberg, and T.~Tao.
\newblock Robust uncertainty principles: Exact signal reconstruction from
  highly incomplete frequency information.
\newblock {\em IEEE Trans. Inform. Theory}, 52(2):489--509, 2006.

\bibitem{krahmer2011new}
F.~Krahmer and R.~Ward.
\newblock New and improved johnson-lindenstrauss embeddings via the restricted
  isometry property.
\newblock {\em SIAM Journal on Mathematical Analysis}, 43(3):1269--1281, 2011.

\bibitem{ledoux2013probability}
M.~Ledoux and M.~Talagrand.
\newblock {\em Probability in Banach Spaces: isoperimetry and processes}.
\newblock Springer Science \& Business Media, 2013.

\bibitem{raskutti2010restricted}
G.~Raskutti, M.~J Wainwright, and B.~Yu.
\newblock Restricted eigenvalue properties for correlated gaussian designs.
\newblock {\em J. Machine Learning Research}, 11(Aug):2241--2259, 2010.

\bibitem{mcleod1965mean}
R.~McLeod.
\newblock Mean value theorems for vector valued functions.
\newblock {\em Proceedings of the Edinburgh Mathematical Society (Series 2)},
  14(03):197--209, 1965.

\bibitem{rudelson2008sparse}
M.~Rudelson and R.~Vershynin.
\newblock On sparse reconstruction from fourier and gaussian measurements.
\newblock {\em Communications on Pure and Applied Mathematics},
  61(8):1025--1045, 2008.

\bibitem{horn2012matrix}
R.~A Horn and C.~Johnson.
\newblock {\em Matrix analysis}.
\newblock Cambridge university press, 2012.

\bibitem{rauhut2010}
H.~Rauhut, K.~Schnass, and P.~Vandergheynst.
\newblock Compressed sensing and redundant dictionaries.
\newblock {\em IEEE Trans. Inform. Theory}, 54(5):2210--2219, 2008.

\bibitem{mendelson2008}
S.~Mendelson, A.~Pajor, and N.~Tomczak-Jaegermann.
\newblock Uniform uncertainty principle for {Bernoulli} and subgaussian
  ensembles.
\newblock {\em Constructive Approximation}, 28(3):277--289, 2008.

\bibitem{candesneedell2010}
E.~Candes, Y.~Eldar, D.~Neeell, and R.~Paige.
\newblock Compressed sensing with coherent and redundant dictionaries.
\newblock {\em Appl. Comput. Harmonic Analysis}, 31(1):59--73, 2011.

\end{thebibliography}

\end{document}